\documentclass{article} 
\usepackage{iclr2021_conference,times}

\def\fighome{figures}

\usepackage{url}
\usepackage{dsfont}
\usepackage{xspace}
\usepackage{natbib}
\usepackage[outdir=./]{epstopdf}
\usepackage{graphicx,float,pgfplots,wrapfig,sidecap,lipsum}
\usepackage{tabularx}
\usepackage{booktabs}
\usepackage{paralist}

\usepackage[linesnumbered,algoruled,boxed,lined,noend,procnumbered]{algorithm2e}
\usepackage{amsfonts}
\usepackage{amsthm}
\usepackage{lipsum}
\usepackage{amsmath}
\usepackage{amssymb}
\usepackage{enumitem}
\usepackage{xcolor}
\usepackage[font=normal,labelfont=bf]{caption}
\usepackage{mathtools} 
\usepackage{bbm}
\usepackage{multirow}

\usepackage{tablefootnote}
\usepackage{subcaption}
\usepackage{subfloat}
\usepackage{tikz}
\usetikzlibrary{fit}
\usetikzlibrary{calc,shapes}
\usetikzlibrary{decorations.pathmorphing} 
\usetikzlibrary{fit}					
\usetikzlibrary{backgrounds}	

\usepackage{bm}
\usepackage{mathrsfs} 
\usepackage{stmaryrd}
\usepackage[utf8]{inputenc}
\usepackage[english]{babel}
\usepackage{diagbox}

\usepackage[utf8]{inputenc}
\usepackage{pgfplots}
\pgfplotsset{compat=newest}
\usepgfplotslibrary{groupplots}
\usepgfplotslibrary{dateplot}

\newcommand{\ourmod}{{VA2C-P}\xspace}
\newcommand{\ourmodfull}{{Vulnerability-Aware Adversarial Critic Poison}\xspace}
\newcommand{\ourmodtwo}{{AC-P}\xspace}
\newcommand{\ourmodtwofull}{{Adversarial Critic Poison}\xspace}
\newcommand{\aimname}{{poison aim}\xspace}
\newcommand{\aimnames}{{poison aims}\xspace}
\newcommand{\aimnamettl}{{Poison Aim}\xspace}

\newcommand{\mdp}{\mathcal{M}}
\newcommand{\states}{\mathcal{S}}
\newcommand{\actions}{\mathcal{A}}
\newcommand{\bs}{\boldsymbol{s}}
\newcommand{\ba}{\boldsymbol{a}}
\newcommand{\br}{\boldsymbol{r}}
\newcommand{\obss}{\mathcal{O}^{\boldsymbol{s}}}
\newcommand{\obsa}{\mathcal{O}^{\boldsymbol{a}}}
\newcommand{\obsr}{\mathcal{O}^{\boldsymbol{r}}}

\newcommand{\dynamics}{P}
\newcommand{\rewards}{R}
\newcommand{\etr}{\eta}

\newcommand{\aetr}{\hat{\eta}}
\newcommand{\valpara}{\omega}

\newcommand{\victim}{\mathcal{D}}
\newcommand{\lobj}{J}
\newcommand{\aobj}{L_A}
\newcommand{\effort}{U}
\newcommand{\budget}{C}
\newcommand{\power}{\epsilon}
\newcommand{\obs}{\mathcal{O}}
\newcommand{\exe}{E}
\newcommand{\atknow}{\mathcal{K}}
\newcommand{\lalg}{f}
\newcommand*{\poison}[1]{\check{#1}}

\newcommand{\disc}{\psi}
\newcommand{\discset}{\Psi}

\newcommand{\robust}{\rho}
\newcommand{\stable}{\phi}
\newcommand{\pr}{\poison{\boldsymbol{r}}}
\newcommand{\npr}{\boldsymbol{r}}

\newcommand*{\argmin}[1]{\underset{#1}{\mathrm{argmin}}}

\newcommand{\Hquad}{\hspace{0.5em}} 
\SetKwInput{Input}{Input}
\SetKwInput{Output}{Output}
\SetKwInput{Parameter}{Parameters}
\SetKwRepeat{Do}{do}{while}
\SetKwProg{Fn}{Function}{:}{}
\SetKwFunction{Teach}{Teaching}
\SetKwFunction{ImUp}{Update}
\SetKwFunction{proc}{proc}
\SetKwProg{myproc}{Procedure}{}{}
\SetKwComment{Comment}{\#}{}



\def\beq{\begin{equation}}

\def\eeq{\end{equation}\noindent}

\newcommand{\bp}{\begin{psfrags}}
\newcommand{\ep}{\end{psfrags}}
\newcommand{\bc}{\begin{center}}
\newcommand{\ec}{\end{center}}

\newcommand*{\tcr}[1]{\textcolor{red}{#1}}

%



\newtheorem{theorem}{Theorem}

\newtheorem{definition}[theorem]{Definition}
\newtheorem{proposition}[theorem]{Proposition}

\floatname{algorithm}{Procedure}


\usepackage{amsmath,amsfonts,bm}









\def\eqref#1{equation~\ref{#1}}









\def\1{\bm{1}}










\DeclareMathAlphabet{\mathsfit}{\encodingdefault}{\sfdefault}{m}{sl}
\SetMathAlphabet{\mathsfit}{bold}{\encodingdefault}{\sfdefault}{bx}{n}
















\usepackage{hyperref}
\usepackage{url}

\title{Vulnerability-Aware Poisoning Mechanism \\ for Online RL with Unknown Dynamics}

\author{Yanchao Sun\textsuperscript{\rm 1} 
\qquad
Da Huo\textsuperscript{\rm 2} 
\qquad
Furong Huang\textsuperscript{\rm 3}  \\
\noindent\textsuperscript{\rm 1,3} Department of Computer Science, University of Maryland, College Park, MD 20742, USA \\
\noindent\textsuperscript{\rm 2} Shanghai Jiao Tong University, China \\
\noindent
\textsuperscript{\rm 1}\texttt{ycs@umd.edu}, 
\textsuperscript{\rm 2}\texttt{sjtuhuoda@sjtu.edu.cn}, 
\textsuperscript{\rm 3}\texttt{furongh@umd.edu}
}

%

\iclrfinalcopy 
\begin{document}

\maketitle

\begin{abstract}
Poisoning attacks on Reinforcement Learning (RL) systems could take advantage of RL algorithm’s vulnerabilities and cause failure of the learning. 
However, prior works on poisoning RL usually either unrealistically assume the attacker knows the underlying Markov Decision Process (MDP), or directly apply the poisoning methods in supervised learning to RL. In this work, we build a generic poisoning framework for online RL via a comprehensive investigation of heterogeneous poisoning models in RL. Without any prior knowledge of the MDP, we propose a strategic poisoning algorithm called Vulnerability-Aware Adversarial Critic Poison (VA2C-P), which works for on-policy deep RL agents, closing the gap that no poisoning method exists for policy-based RL agents.
VA2C-P uses a novel metric, stability radius in RL, that measures the vulnerability of RL algorithms. Experiments on multiple deep RL agents and multiple environments show that our poisoning algorithm successfully prevents agents from learning a good policy or teaches the agents to converge to a target policy, with a limited attacking budget.
\end{abstract}




\vspace{-0.5em}
\section{Introduction}
\label{sec:intro}

Although reinforcement learning (RL), especially deep RL, has been successfully applied in various fields, the security of RL techniques against adversarial attacks is not yet well understood. 
In real-world scenarios, including high-stakes ones such as autonomous driving vehicles and healthcare systems, a bad decision may lead to a tragic outcome. 
Should we trust the decision made by an RL agent? 
How easy is it for an adversary to mislead the agent? 
These questions are crucial to ask before deploying RL techniques in many applications. 

In this paper, we focus on \emph{poisoning attacks}, which occur during the training and influence the learned policy. 
Since training RL is known to be very sample-consuming, one might have to constantly interact with the environment to collect data, which opens up a lot of opportunities for an attacker to poison the training samples collected. 
Therefore, understanding poisoning mechanisms and studying the vulnerabilities in RL are crucial to provide guidance for defense methods.
However, existing works on adversarial attacks in RL mainly study the test-time \emph{evasion attacks}~\citep{chen2019adversarial} where the attacker crafts adversarial inputs to fool a well-trained policy, but does not cause any change to the policy itself. 
Motivated by the importance of understanding RL security in the training process and the scarcity of relevant literature, in this paper, we \emph{investigate how to poison RL agents and how to characterize the vulnerability of deep RL algorithms.}

In general, RL is an ``online'' process:
an agent rolls out experience from the environment with its current policy, and uses the experience to improve its policy, then uses the new policy to roll out new experience, etc. 
Poisoning in online RL is significantly different from poisoning in classic supervised learning (SL), even online SL, and is more difficult due to the following challenges. 

\emph{Challenge I -- Future Data Unavailable in Online RL.} 
Poisoning approaches in SL~\citep{munoz2017towards,wang2018data} usually require the access to the whole training dataset, so the attacker can decide the optimal poisoning strategy before the learning starts.
However, in online RL, the training data (trajectories) are generated by the agent while it is learning. 
Although the optimal poison should work in the long run, the attacker can only access and change the data in the current iteration, since the future data is not yet generated.

\begin{wrapfigure}{r}{0.45\textwidth}
 \vspace{-10pt}
  \begin{center}
   \includegraphics[width=0.45\textwidth]{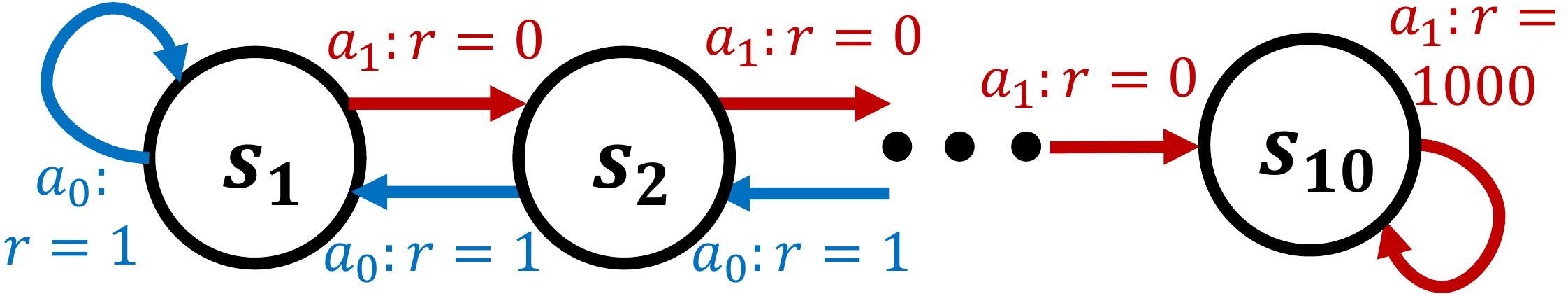}
  \end{center}
  \vspace{-10pt}
  \caption{\small{An example of difficult poisoning.}}
  \label{fig:river}
  \vspace{-10pt}
\end{wrapfigure}

\emph{Challenge II -- Data Samples No Longer i.i.d..} 
It is well-known that in RL, data samples (state-action transitions) are no longer i.i.d., which makes learning challenging, since we should consider the long-term reward rather than the immediate result. 
However, we notice that data samples being not i.i.d. also makes poisoning attacks challenging. 
For example, an attacker wants to reduce the agent's total reward in a task shown as Figure~\ref{fig:river}; at state $s_1$, the attacker finds that $a_1$ is less rewarding than $a_0$; if the attacker only looks at the immediate reward, he will lure the agent into choosing $a_1$. 
However, following $a_1$ finally leads the agent to $s_{10}$ which has a much higher reward. 


\emph{Challenge III -- Unknown Dynamics of Environment.}
Although Challenge I and II can be partially addressed by predicting the future trajectories or steps, it requires prior knowledge on the dynamics of the underlying MDP. 
Many existing poisoning RL works~\citep{rakhsha2020policy,ma2019policy} assume the attacker has perfect knowledge of the MDP, then compute the optimal poisoning. 
However, in many real-world environments, knowing the dynamics of the MDP is difficult. 
Although the attacker could potentially interact with the environment to build an estimate of the environment model, the cost of interacting with the environment could be unrealistically high, market making~\citep{thomas2018market} for instance. 
In this paper, we study a more realistic scenario where the attacker does not know the underlying dynamics of MDP, and can not directly interact with the environment, either. 
Thus, the attacker learns the environment only based on the agent's experience.

In this paper, we systematically investigate poisoning in RL by considering all the aforementioned RL-specific challenges.
Previous works either do not address any of the challenges or only address some of them. 
\citet{behzadan2017vulnerability} achieve policy induction attacks for deep Q networks (DQN). 
However, they treat output actions of DQN similarly to labels in SL, and do not consider Challenge II that the current action will influence future interactions. 
\cite{ma2019policy} propose a poisoning attack for model-based RL, but they suppose the agent learns from a batch of given data, not considering Challenge I. 
\citet{rakhsha2020policy} study poisoning for online RL, but they require perfect knowledge of the MDP dynamics, which is unrealistic as stated in Challenge III.





\textbf{Summary of Contributions.}
\textbf{(1)} We propose a practical poisoning algorithm called \ourmodfull (\ourmod) that works for deep policy gradient learners without any prior knowledge of the environment. To the best of our knowledge, \ourmod is \emph{the first practical algorithm that poisons policy-based deep RL methods}.
\textbf{(2)} We introduce a novel metric, called stability radius, to characterize the stability of RL algorithms, measuring and comparing the vulnerabilities of RL algorithms in different scenarios.
\textbf{(3)} We conduct a series of experiments for various environments and state-of-the-art deep policy-based RL algorithms,
which demonstrates RL agents' vulnerabilities to even weaker attackers with limited knowledge and attack budget.

\vspace*{-0.5em}
\section{Related Work}
\label{sec:related}
\vspace*{-0.5em}

The main focus of this paper is on poisoning RL, an emerging area in the past few years. 
We survey related works of adversarial attacks in SL and evasion attacks in RL in Appendix~\ref{app:related}, as they are out of the scope of this paper. 

\textbf{Targeted Poisoning Attacks for RL.}
Most RL poisoning researches work on targeted poisoning, also called policy teaching, where the attacker leads the agent to learn a pre-defined target policy.
Policy teaching can be achieved by manipulating the rewards~\citep{zhang2008value,zhang2009policy} or dynamics~\citep{rakhsha2020policy} of the MDP. 
However, they require the attackers to not only have prior knowledge of the environments (e.g., the dynamics of the MDP), but also have the ability to \emph{alter the environment} (e.g. change the transition probabilities), which are often unrealistic or difficult in practice. 

\textbf{Poisoning RL with Omniscient Attackers.}
Most guaranteed poisoning RL literature~\citep{rakhsha2020policy,ma2019policy} assume \emph{omniscient attackers}, who not only know the learner's model, but also know the underlying MDP.
However, as motivated in the introduction, the underlying MDP is usually either unknown or too complex in practice. 
Some works poison RL learners by changing the reward signals sent from the environment to the agent. 
For example, 
\citet{ma2019policy} introduce a policy teaching framework for batch-learning model-based agents; 
\citet{huang2019deceptive} propose a reward-poisoning attack model, and provide convergence analysis for Q-learning;
\citet{zhang2020adaptive} present an adaptive reward poisoning method for Q-learning (while it also extends to DQN) and analyze the safety thresholds of RL; these papers all assume the attacker knows not only the models of the agent, but also the parameters of the underlying MDP, which could be possible in a tabular MDP, but hard to realize in large environments and modern deep RL systems.

On the contrary, \emph{we consider non-omniscient attackers} who do not know the underlying MDP or environment in this paper. The non-omniscient attackers can be further divided into two categories: \emph{white-box} attackers, who know the learner's model/parameters, and \emph{black-box} attackers, who do not know the learner's model/parameters. They both tap the interactions between the learner and the environment.

\textbf{Black-box Poisoning for Value-based Learners.}
Although there are many successful black-box evasion approaches~\citep{xinghua2020minimalistic,Inkawhich2020SnoopingAO}, black-box poisoning in RL is rare.
There is a \emph{black-box} attacking method for a value-based learner (DQN) proposed by \citet{behzadan2017vulnerability}, which does not require the attacker to know the learner's model or the underlying MDP.
In this work, the attacker induces the DQN agent to output the target action by perturbing the state with Fast Gradient Sign Method (FGSM)~\citep{ian2015explaining} in every step. 
However, the data-correlation problem of RL (Challenge II) is not considered, and FGSM attack does not work for policy-based methods due to their high stochasticity, as we show in experiments.  

In this paper, we propose a new poisoning algorithm for \emph{policy-based deep RL} agents, which can achieve \emph{both non-targeted and targeted} attacks. 
We do not require any prior knowledge of the environment. And our algorithm works not only when the attacker knows the learner's model (white-box), but also when the leaner's model is hidden (black-box).


\vspace*{-0.5em}
\section{Problem Formulation for Poisoning Online RL}
\vspace*{-0.5em}
\label{sec:problem-form}

\subsection{Notations and Preliminaries}
\label{sec:notations}
In RL, an agent interacts with the environment by taking actions, observing states and receiving rewards.
The environment is modeled by a Markov Decision Process (MDP), which is denoted by a tuple $\mdp=\langle \states, \actions, \dynamics, \rewards, \gamma, \mu \rangle$, where $\states$ is the state space, $\actions$ is the action space, $\dynamics$ is the transition kernel, $\rewards$ is the reward function, $\gamma \in (0, 1)$ is the discount factor, and $\mu$ is the initial state distribution. 
A \textit{trajectory} $\tau \sim \pi$ generated by \textit{policy} $\pi$ is a sequence $s_1, a_1, r_1, s_2, a_2, \cdots$, where $s_1 \sim \mu$, $a_t \sim \pi(a|s_t)$, $s_{t+1} \sim P(s| s_t, a_t)$ and $r_t = R(s_t, a_t)$.
The goal of an RL agent is to find an optimal policy $\pi^*$ that maximizes the \textit{expected total rewards} $\etr$, which is defined as $\etr(\pi) = \mathbb{E}_{\tau \sim \pi}[r(\tau)] = \mathbb{E}_{s_1,a_1,\cdots \sim \mu, \pi, P, R}[\sum_{t=1}^{\infty} \gamma^{t-1} r_t]$.

We use an overhead check sign $\check{\ }$ on a variable to denote that the variable is poisoned. For example, if the attacker perturbs a reward $r_t$, then the poisoned reward is denoted as $\poison{r}_t$. If a policy $\pi$ is updated with poisoned observation, then it is denoted as $\poison{\pi}$. 

\vspace*{-0.3em}
\subsection{The Procedure of Online Learning and Poisoning} 
\label{sec:online}

\begin{wrapfigure}{r}{0.44\textwidth}
  \vspace{-20pt}
  \begin{center}
   \includegraphics[width=0.44\textwidth]{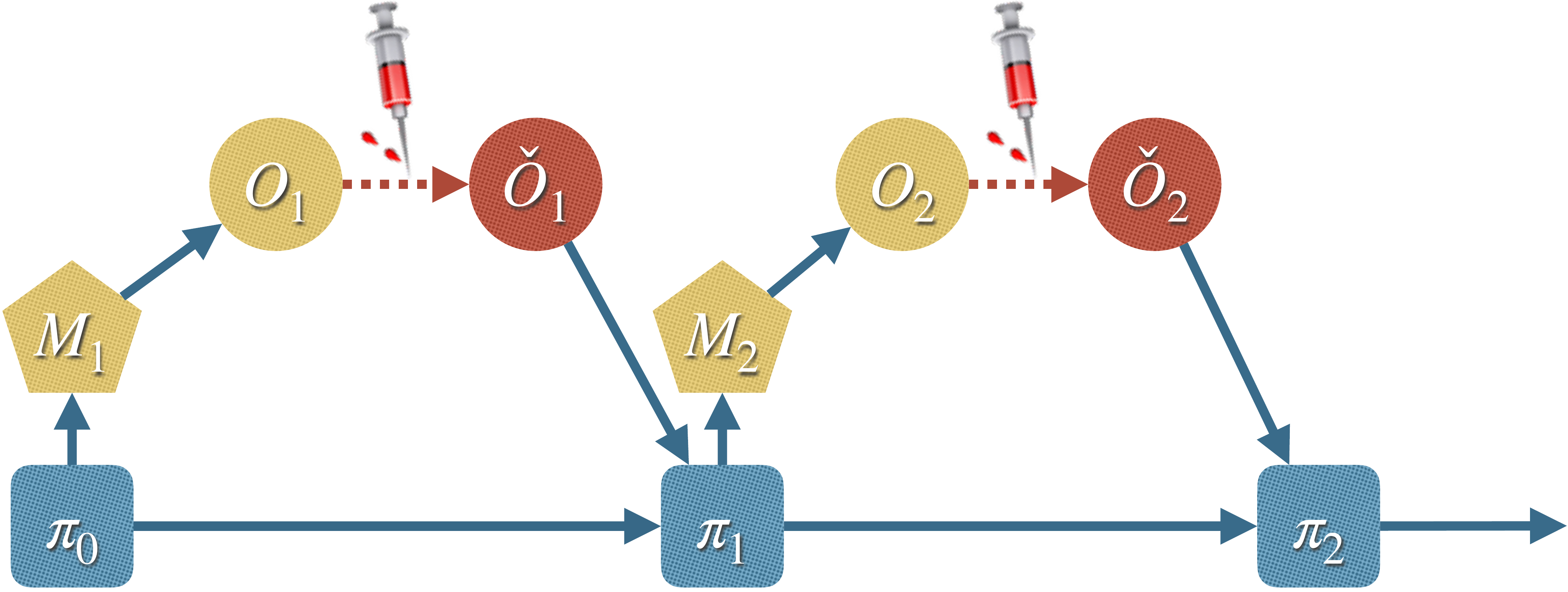}
  \end{center}
  \vspace{-10pt}
  \caption{\small{Online poisoning-learning process.}}
  \label{fig:obs_attack}
  \vspace{-10pt}
\end{wrapfigure}

\textbf{Procedure of Online Learning.} 
We consider a classical online policy-based RL setting, where the learner
iteratively updates its \textit{policy} $\pi$ parametrized by $\theta$, through $K$ iterations with the environment. 
For notation simplicity, we omit $\theta$ and use $\pi_k$ to denote $\pi_{\theta_k}$, the learner's policy at iteration $k$. The online learning process is described as below. 

At iteration $k = 1, \cdots, K$, \\ 
\textbf{(1)} The agent uses the current policy $\pi_k$ to roll out \textbf{observation} $\obs_k = (\obss_k, \obsa_k, \obsr_k)$ from environment $\mdp_k$, where $\obss_k=[s_1,s_2,\cdots], \obsa_k=[a_1,a_2,\cdots], \obsr_k=[r_1,r_2,\cdots]$ are respectively the sequence of states, actions and rewards generated at iteration $k$. \\
\textbf{(2)} The agent updates its policy parameters $\theta$ with its algorithm $\lalg$. Most policy-based algorithms perform \emph{on-policy} updating, i.e., update policy only by the current observation $\obs_k$. The on-policy update can be then formalized as 
$\pi_{k+1} = \lalg (\pi_{k}, \obs_k) \approx \mathrm{argmax}_\pi \lobj(\pi, \pi_{k}, \obs_k)$, where $\lobj$ is an objective function defined by algorithm $\lalg$, e.g., the expected total reward $\etr(\pi)$.

\textbf{Procedure of Online Poisoning.} 
A poisoning attacker influences the learner in the training process by perturbing the training data. In SL, the training data consists of features and labels, and the attacker poisons the training data before the learning starts.
However, the training data in RL is the trajectories a learner rolls out from the environment, i.e., observation $\obs = (\obss, \obsa, \obsr)$. At iteration $k$, the attacker eavesdrops on the interaction between the learner and the environment, obtains the observation $\obs_k$, and may poison it into $\poison{\obs}_k$, then send $\poison{\obs}_k$ to the learner before policy updating. 
\footnote{In this paper, we assume the attacker poisons the observation(trajectories), which is the most universal setting in practice. Appendix~\ref{sec:framework} extends our problem formulation to a more general case where the attacker can change the underlying MDP.}
Procedure~\ref{proc:flow} in Appendix~\ref{sec:framework} illustrates how the online game goes between the learner and the attacker. Figure~\ref{fig:obs_attack} visualizes this online learning-poisoning procedure, where we can see that learning and poisoning are convoluted and inter-dependent. 

\vspace*{-1em}
\subsection{A Unified Formulation for Poisoning Online Policy-based RL.} 
\label{sec:formulation}

\subsubsection{Attacker's \aimnamettl}
As defined in Section~\ref{sec:notations}, the observation is a collection of trajectories, consisting of the observed states $\obss$, the executed actions $\obsa$ or the received rewards $\obsr$.  
When poisoning the observation, the attacker may only focus on the states, or on the actions, or on the rewards. We call the quantity being altered as the \textbf{\aimname} of the attacker, denoted by $\victim \in \{\obss,\obsa,\obsr\}$. For example, $\victim=\obss$ means the attacker chooses to attack the states. 

Distinguishing different \aimnames is important, since in real-world applications different \aimnames correspond to different behaviors of the attacker.
For example, in an RL-based recommender system, the RL agent recommends an item (i.e., an action) for a user (i.e., a state), and the user may or may not choose to click on the recommended item (i.e., a reward). 
An adversary might manipulate the reward (poisoning $\victim =\obsr$), e.g., blocking the user's click from the RL agent or creating a fake click. 
An adversary might also alter the state (poisoning $\victim =\obss$), e.g., raising a teenager user's age which could result in inappropriate recommendations.
An adversary might also change the action (poisoning $\victim =\obsa$), e.g., inserting a fake recommendation into the agent's list of recommendations. 
Under different scenarios, the feasibility of poisoning different aims may vary.

Most existing works on poisoning RL only solve one type of \aimname. \citet{zhang2020adaptive,huang2019deceptive} propose to poison rewards, and \citet{behzadan2017vulnerability} assume the attacker poison the states. However, in our paper, we provide a general solution for any of the \aimnames to satisfy the needs in different scenarios. Our proposed method also supports a "hybrid" \aimname, where the attacker could switch aims at different iterations, as discussed in Section~\ref{sec:exp}.

\subsubsection{A Poisoning Framework for RL} 
We focus on proposing a poisoning mechanism for the above challenging online learning scenario. We first formalize the poisoning attacking at iteration $k$ as a \emph{sequential bilevel optimization} problem in Problem~(\ref{opt:general}), and explain the details of the problem in the remaining of this section.
\begin{align*}
\begin{aligned}[t]
\label{opt:general}
    &\argmin{\poison{\victim}_k, \cdots, \poison{\victim}_K} 
    & & \sum\nolimits_{j=k}^K \lambda_j \aobj(\poison{\pi}_{j+1}) & \text{\small{((a) attacker's weighted loss)}} \\
    &s.t. & & \poison{\pi}_{j+1} = {\mathrm{argmax}_{\pi}} \lobj(\pi, \tilde{\pi}_{j}, \poison{\obs}_j | \poison{\victim}_j), \forall k\leq j \leq K   & \text{\small{((b) imitate the learner)}}  \\
    & & & \sum\nolimits_{j=1}^K \bm{1}\{  \poison{\victim}_j \neq \victim_j \}   \leq \budget  & \text{\small{((c) limited-budget)}}  \\
    &  & & \effort(\victim_j, \poison{\victim}_j) \leq \power, \forall 1\leq j \leq K   & \text{\small{((d) limited-power)}} 
\end{aligned}
\tag{Q}
\end{align*}

\textit{(a) Attacker's Weighted Loss.} 
$\aobj(\poison{\pi})$ measures the attacker's loss w.r.t. a poisoned policy $\pi$. 
As the definition of poisoning implies, the attacker influences or misleads the learner's policy.
 $\lambda_{k:K}$ are the weights of future attacker losses, controlling how much the attacker value the poisoning results in different iterations. 
The goal of the attacker is either 
(1) \textbf{non-targeted poisoning}, which minimizes the expected total rewards of the learner, i.e., $\aobj=\etr(\poison{\pi})$, 
or (2) \textbf{targeted poisoning}, which induces the learner to learn a pre-defined target policy, i.e., $\aobj=\mathrm{distance}(\poison{\pi}, \pi^\dag)$, where $\mathrm{distance}(\poison{\pi}, \pi^\dag)$ can be any distance measure between a learned policy $\poison{\pi}$ and a target policy $\pi^\dag$.
Note that the targeted poisoning objective can usually be directly computed with a prior target policy, while non-targeted poisoning has a ``reward-minimizing'' objective, which is the reverse of the learner's objective. 
Without any prior knowledge of the environment, non-targeted poisoning is usually more difficult, as the attacker needs to first learn ``what is the worst way'' (which is as difficult as a learning problem by a RL agent) and then lead the learner to that way (which is as difficult as a targeted poisoning problem, assuming leading an agent to different policies is roughly equally challenging).  
However, most existing poisoning work focus on targeted poisoning, which requires the attacker to know a pre-defined target policy.
Thus in this paper, we make more efforts to solve the reward-minimizing poisoning problem, which may deprave the policy without any prior knowledge.


\textit{(b) Imitate the Policy-Based Learner.} 
To confidently mislead a learner, the attacker needs to predict how the learner will behave under the poison, which can be achieved by \emph{imitating the learner using the learner's observation}. More specifically, at the $j$-th iteration, the attacker estimates the learner's policy to be $\tilde{\pi}_j$, called \textbf{imitating policy}.
Then, the attacker predicts the next-policy $\poison{\pi}_{j+1}$ under poisoned observation, based on the learner's update rule ${\mathrm{argmax}_{\pi}} \lobj(\pi, \tilde{\pi}_{j}, \poison{\obs}_j | \poison{\victim}_j)$, where $\victim \in \{ \obss, \obsa, \obsr\}$ stands for the \aimname of the poisoning,
$\poison{\obs}|\poison{\victim}$ denotes that $\obs$ is poisoned into $\poison{\obs}$ given that \aimname $\victim$ is poisoned into $\poison{\victim}$. 
However, the imitating policy $\tilde{\pi}$ may or may not be the same as the actual learner's policy, depending on the \emph{attacker's knowledge}.
As introduced in Section~\ref{sec:related}, we deal with both white-box and black-box attackers, and both of them do not know the environment $\mdp$.
A \textbf{white-box} attacker knows the current and past observations $\obs_{1:k}$, the learner's algorithm $\lalg$ and policy $\pi$, so it can directly copy the policy $\tilde{\pi}_j = \pi_j, \forall j$.
A \textbf{black-box} attacker knows the current and past observations $\obs_{1:k}$, but does not know the learner's policy $\pi$. In this case, the attacker has to estimate $\pi$ at every iteration. Section~\ref{sec:algo} states how to guess $\pi$.



\textit{(c,d) Limited-budget and Limited-power.}
In practice, the ability of an attacker is usually restricted by some constraints.
For the online poisoning problem, we consider attacker's constraints in two forms:
(1) (\textbf{attack budget} $\budget$)  the total number of iterations that the attacker could poison does not exceed $\budget$; 
(2) (\textbf{attack power} $\power$) in one iteration, the total change\footnote{There are many choices of $\effort(\cdot, \cdot)$. For example, the total effort w.r.t. $\obss$-poisoning can be the average $\ell_p$-distance between any poisoned and unpoisoned state in $\obss$ and $\poison{\obss}$.} $\effort( \victim_k, \poison{\victim}_k)$  between $\victim_k$ and $\poison{\victim}_k$
can not be larger than $\power$.
Attack power controls the amount of perturbation, as commonly used in the adversarial learning literature.
Attack budget considers the frequency of attack, which is similar to the constraint studied by~\citet{wang2018data}. 
Problem~(\ref{opt:general}) is a generic formulation, covering a variety of poisoning models, and specifies the best poison an attacker can execute. 
However, directly solving Problem~(\ref{opt:general}) is prohibitive, as (1) the future observations $\obs_{k+1:K}$ are unknown when poisoning the $k$-th iteration, as the attacker has no knowledge of the underlying MDP. 
(2) the limited-budget constraint is analogous to $\ell_0$-norm regularization, which is generally NP-hard~\citep{9030937}; and (3) minimizing attacker's loss while maximizing learner's gain is non-convex minimax optimization, which is a complex problem~\citep{pmlr-v37-perolat15}.

In spite of the above difficulties, we introduce a practical method to approximately and effectively solve Problem~(\ref{opt:general}) in Section~\ref{sec:problem}.

\vspace*{-0.5em}
\section{\ourmod: Poison Policy Gradient Learners}
\label{sec:problem}
\vspace*{-0.5em}

In this section, we propose a practical and efficient poisoning algorithm called \ourmodfull(\ourmod) for policy gradient learners. Without loss of generality, we assume the loss weights $\lambda_j = 1$ for all $j = 1, \cdots, K$.

\textbf{Main Idea.}
As discussed in Section~\ref{sec:formulation}, Problem~(\ref{opt:general}) is difficult mainly because of the unknown future observations and the limited budget constraint. In other words, it is hard to exactly determine (1) what kind of attack benefits the future the most, and (2) which iterations are worth attacking the most.
Thus, we propose to break Problem~(\ref{opt:general}) into two decisions: when to attack, and how to attack. The ``when to attack'' decision allocates the limited budget to iterations which are more likely to be influenced by the attacker, and the ``how to attack'' decision utilizes limited power to minimize the attacker's loss.
We introduce two mechanisms of \ourmod, vulnerability-awareness and adversarial critic, to make these two decisions respectively.




\subsection{Decision 1: When to Attack -- Vulnerability-Aware}
\label{subsec:when2attack}
 
To answer the question of when to attack, we identify the iterations under which the learner's policy gets more depraved by the same level of attack power. 
Inspired by the notion of stability in learning theory, which measures how a machine learning algorithm changes due to a small perturbation of the input data, we formally investigate the stability of an RL algorithm, which is the first attempt in the existing literature to the best of our knowledge.


\textbf{Stability of RL Algorithms.}
We first focus on a single update process of an algorithm $\lalg$.
An update $\pi^\prime = \lalg(\pi, \obs)$ is stable if a limited-power poisoning attack does not cause any difference on the output policy $\pi^\prime$. That is, the learning algorithm produces the same result regardless of the presence of the poison.
More formally, we define the concept of \textit{stability radius of one update} in Definition~\ref{def:stable_radius_update}.



\begin{definition}[Stability Radius of One Update]
\label{def:stable_radius_update}
\footnote{In order to characterize the stability and robustness of RL algorithms in a principled way, we provide more measures for the vulnerability of RL in both training time and test time. See Appendix~\ref{app:robust_radius}) for details.}
   For the update of an RL algorithm $\pi^\prime = \lalg(\pi, \obs)$, 
   with any \aimname $\victim$, 
  the $\delta$-stability radius of the update is defined as the minimum poison power needed to cause $\delta$ change in policy (called $\delta$-policy-discrepancy)
  \begin{equation}
    \stable_{\delta,\victim}(\lalg, \pi, \obs) = \inf_{\power} \{ \exists \poison{\victim} \text{ s.t. } \effort( \victim, \poison{\victim} ) \leq \power \Hquad \text{and}  \Hquad d^{\max}[\pi^\prime || \poison{\pi}^\prime ] > \delta, \Hquad \text{where}  \Hquad  \poison{\pi}^\prime = \lalg(\pi, \poison{\obs}|\poison{\victim})  \},
  \end{equation}
  Policy discrepancy $ d^{\max}[\pi_1 ||\pi_2 ] = \max_s d\big[\pi_1(\cdot | s) || \pi_2(\cdot|s)\big] $, where $d[\cdot||\cdot]$ could be any measure of distribution distance.
\end{definition}


\textbf{Remarks.}
(1) The one-update stability radius is w.r.t. the algorithm $\lalg$, the old policy $\pi$, the clean observation $\obs$ and the \aimname $\victim$.
(2) 
Poison with power under $\stable_{\delta,\victim}(\lalg, \pi, \obs)$ will not cause the policy distributions to change more than $\delta$.
(3) As shown by Proposition~\ref{prop:reward_drop} in Appendix~\ref{sec:stable_radius}, poison with power under $\stable_{\delta,\victim}(\lalg, \pi, \obs)$ will not make the policy value drop more than $O\big(\delta^2\gamma(1-\gamma)^{-2}\max_{s,a} |A_{\pi^\prime}(s,a)|\big)$, where $A$ is the advantage function, i.e., $A_\pi(s,a) = Q_\pi(s,a) - V_\pi(s)$.

Stability radius measures the minimal effort needed to make the \emph{poisoned next-policy} $\poison{\pi}^\prime$ notably different from the \emph{clean next-policy} $\pi^\prime$ which the learner will get if no poison is applied. Assuming $\max_{s,a}|A_{\pi^\prime}(s,a)|$ does not drastically change for the learner's policy during training, then an attack that causes higher policy discrepancy between $\pi^\prime$ and $\poison{\pi}$ could cause more drop of the policy value.
Therefore, the idea of vulnerability-aware attack is to estimate the vulnerability of each update and attack the most vulnerable ones. More specifically, if the attacker finds an $\power$-powered attack results in a policy discrepancy larger than some threshold $\delta$, then it can conclude $\stable_\delta(\lalg, \pi, \obs) \leq \epsilon$, and the current update is relatively vulnerable. In practice, we have the fixed budget $C$ instead of $\delta$, so we perform the vulnerability check in an adaptive way: attacking the $\budget$ iterations where $\epsilon$-powered attacks can trigger the highest policy discrepancies. 
Section~\ref{sec:algo} introduces an algorithm to realize this idea.

\subsection{Decision 2: How to Attack -- Adversarial Critic}
 \label{subsec:how2attack}

Since ``when to attack'' decision tackles the limited-budget constraint, now we turn our attention to minimizing the attacker's loss while satisfying the limited-power constraint. 
If the attacker has already decided to poison the $k$-th iteration due to its high vulnerability, then the decision of how to attack should be made before the learner uses the observation to update the policy. We relax the original Problem~(\ref{opt:general}) into Problem~(\ref{opt:pg_rlx}) as below. 
\begin{align*}
\begin{aligned}[t]
\label{opt:pg_rlx}
&\mathrm{argmin}_{\poison{\victim}_k} 
& & \aobj(\poison{\pi}_{k+1})  \\
&s.t. & & \poison{\pi}_{k+1} = {\mathrm{argmax}_{\pi}} \lobj(\pi, \tilde{\pi}_{k}, \poison{\obs}_k | \poison{\victim}_k) \\
&  & & \effort(\victim_k, \poison{\victim}_k )  \leq \power
\end{aligned}
\tag{P}
\end{align*}
Compared with Problem~(\ref{opt:general}), Problem~(\ref{opt:pg_rlx}) does not consider future losses, which require the unavailable future observations. Instead, Problem~(\ref{opt:pg_rlx}) finds a greedy attack $\poison{\victim}_k$ to minimize the loss of the immediate next iteration. The solution to Problem~(\ref{opt:pg_rlx}) is always feasible to Problem~(\ref{opt:general}), although might not be optimal. 
Appendix~\ref{app:relax} provides details about the optimality of Problem~(\ref{opt:general}).


In practice, it is also challenging to estimate $\aobj(\poison{\pi})$.
For targeted attacking, $\aobj=\mathrm{distance}(\poison{\pi}, \pi^\dag)$ is directly computable with a properly defined distance metric, but for non-targeted attacking, the loss $\aobj=\etr(\poison{\pi})$ can not be directly computed, since $\poison{\pi}$ is not the behavior policy. 
Although one can use importance sampling to evaluate $\etr(\poison{\pi})$ with the current trajectories generated by the learner's policy, i.e., $\mathbb{E}_{\tau\sim\pi_{k-1}} [\frac{\pi_k(\tau)} {\pi_{k-1}(\tau)} r(\tau)]$,
it may suffer from a high variance~\citep{Schulman2015High} when there are few trajectories.
To solve this challenge, we introduce another mechanism, adversarial critic.

\textbf{Adversarial Critic.}
Under poisoning, the value network (if any) held by the learner usually fails to fit the correct value of its policy, since it does not observe the real trajectories generated by its policy.
However, the attacker observes the real trajectories before poisoning, and is able to learn the real values to make the attacking stronger.
Inspired by the Actor-Critic method, we propose to let the attacker learn a value function (network) $\tilde{V}_{\valpara}$ with observations of the learner, i.e., the attacker learns a critic of the learner's current policy $\pi$.
Then the attacker can use $\tilde{V}_{\valpara}$ to design poisoned observations, directing the learner to a decreasing-value direction, which is called \emph{Adversarial Critic}.
Then, using importance sampling, the attacker's loss becomes $\mathbb{E}_{s,a \sim \pi_k} [\frac{\poison{\pi}(a|s)} {\pi_k (a|s)} \big(G(s_t,a_t)-\tilde{V}_{\valpara}(s_t) \big) ]$, where $G$ is the discounted future reward $\sum_{i=t}^T \gamma^{i-t} r_t$.

\subsection{Poisoning Algorithm \ourmod.}
\label{sec:algo} 

\begin{algorithm}[!ht]
\DontPrintSemicolon
\caption{\ourmodfull}
\label{alg:main}
  \Input{total iterations of learning $K$; poisoning power $\epsilon$; poisoning budget $C$; }
  Initialize a list of policy discrepancies $\discset=\emptyset$, the number of poisoned iterations $c=0$ \;
  Initialize an adversarial critic network $\tilde{V}_{\valpara}$ and an imitating policy network $\tilde{\pi}$ \;
  \For{$k=1,\cdots,K$}{
    \If{$c > C$}{Break}
    Obtain the observation (trajectories) $\obs_k$ obtained by the learner \;
    Update the adversarial critic $\tilde{V}_{\valpara}$ with observation $\obs_k$ \;
    \If{White-box}{
      Copy the learner's policy parameters to the imitating policy of attacker: $\tilde{\pi} \gets \pi_{k}$ \;
    }
    Compute the clean next-policy $\pi^\prime \gets \lalg(\tilde{\pi}, \obs_k)$ \;
    Solve Problem~(\ref{opt:pg_rlx}) with power $\power$ and critic $\tilde{V}_{\valpara}$, poison $\obs_k$ to $\poison{\obs}_k$ \;
    Compute the poisoned next-policy $\poison{\pi}^\prime \gets \lalg(\tilde{\pi}, \poison{\obs}_k)$ \;
    Estimate the policy discrepancy $\widehat{\disc}_k$ between $\pi^\prime$ and $\poison{\pi}^\prime$ \;
    Add $\widehat{\disc}_k$ to the list of policy discrepancies $\discset$ \;
    \If{$\widehat{\disc}_k$ is no lower than the $(1-\frac{C-c}{K-k})$-quantile of $\discset$}{
      Send the learner the poisoned observation $\poison{\obs}_k$ \;
      \If{Black-box}{
        Update the imitating policy as the poisoned next-policy: $\tilde{\pi} \gets \poison{\pi}^\prime$ \;
      }
    }\Else{
      Send the clean observation $\obs_k$ \;
      \If{Black-box}{
        Update the imitating policy as the clean next-policy: $\tilde{\pi} \gets \pi^\prime$ \;
      }
    }
  }
\end{algorithm}

Algorithm~\ref{alg:main} illustrates how an attacker can poison an online RL learner with our proposed \ourmod, which is corresponding to Line 4 in Procedure~\ref{proc:flow} shown in Appendix~\ref{sec:framework} .
More implementation details are in Algorithm~\ref{alg:wb_poison} in Appendix~\ref{app:algo}.

In an iteration, an attacker first finds a good perturbation for observation $\obs$ with the current attack power $\power$, then estimates how much the output policy will change by computing the policy discrepancy between the clean next-policy and the poisoned next-policy. Then the attacker poisons if the policy discrepancy ranks high in the historical policy discrepancies, considering the remaining iterations and budget (Line 15). In Line 11, we use projected gradient descent to solve Problem~(\ref{opt:pg_rlx}). Computation details are illustrated in Appendix~\ref{app:algo}.

Algorithm~\ref{alg:main} covers both white-box and black-box settings, both of which maintain an imitating policy $\tilde{\pi}$ to keep track of the learner's potential status.
As mentioned in Section~\ref{sec:formulation}, a white-box attacker knows the learner's current policy, thus he can directly copy the learner's parameters to his imitating policy (Line 9), then predict the next-policy the learner would get under different observations.
In contrast, a black-box attacker does not know the learner's policy, but he can update its imitating policy using the same observation as the learner uses (Line 18, 22). 
As claimed and verified by many black-box attacking methods~\citep{behzadan2017vulnerability}, adversarial attacks are usually transferable, i.e., if the attack works on the imitating learner, then the attack is also likely to work for the real learner.
Note that as assumed by \citet{behzadan2017vulnerability}, the black-box attacker knows what RL algorithm the learner is using (e.g., PPO, A2C, etc), so that the black-box attacker computes its estimation for the next-policy in Line 10 and Line 12.



\vspace*{-1em}
\section{Experiments}
\label{sec:exp}
\vspace*{-0.5em}

In this section, we evaluate the performance of \ourmod by poisoning multiple algorithms on various environments. We demonstrate that \ourmod can effectively reduce the total reward of a training agent, or force the agent to choose a specific policy with limited power and budget. Moreover, \ourmod works for heterogeneous \aimnames, and works in both white-box and black-box settings.





\textbf{Experiment Setup.}
We choose 4 policy-gradient learning algorithms, including Vanilla Policy Gradient~\citep{sutton2000policy}, A2C~\citep{mnih2016asynchronous}, ACKTR~\citep{wu2017scalable} and PPO~\citep{schulman2017proximal}. And we choose 5 Gym~\citep{greg2016openai} environments with increasing difficulty levels: CartPole, LunarLander, Hopper, Walker and HalfCheetah. All results are averaged over 10 random seeds. The total effort $\effort$ is calculated by the normalized $\ell_2$-distance. See Appendix~\ref{app:exp_setting} for the expression of $\effort$, as well as more hyper-parameter settings.

\vspace*{-0.5em}
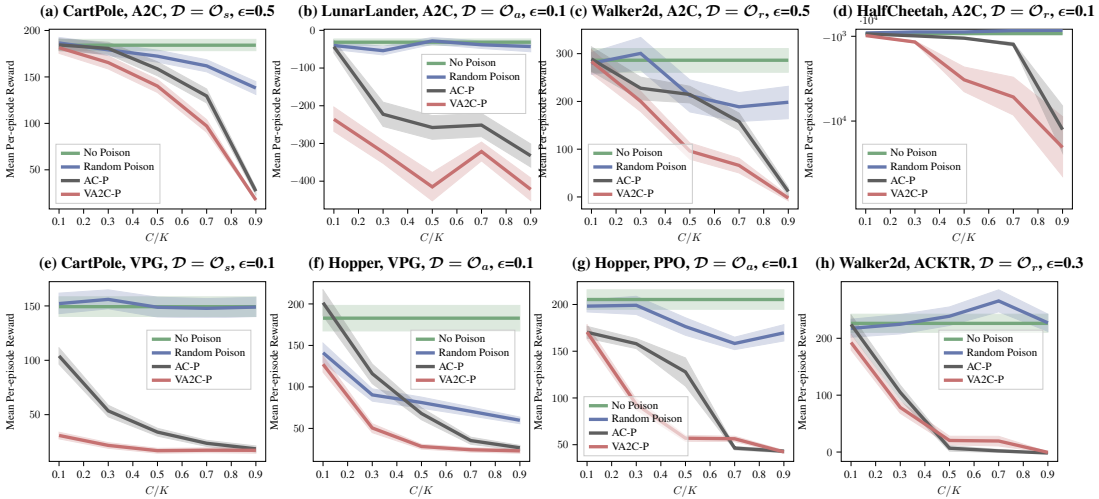
\begin{figure}[!htbp]
\centering
	\begin{subfigure}[t]{0.24\columnwidth}
		\centering
\begin{tikzpicture}[scale=0.42]

\definecolor{color0}{rgb}{0.462745098039216,0.654901960784314,0.490196078431373}
\definecolor{color1}{rgb}{0.4,0.470588235294118,0.67843137254902}
\definecolor{color2}{rgb}{0.776470588235294,0.443137254901961,0.443137254901961}

\begin{axis}[
legend cell align={left},
legend style={fill opacity=0.6, draw opacity=1, text opacity=1, at={(0.03,0.03)}, anchor=south west, draw=white!80.0!black},
tick align=outside,
tick pos=left,
title={\textbf{\Large{(a) CartPole, A2C, $\boldsymbol{\victim=\obs_{\boldsymbol{s}}}$, $\boldsymbol{\epsilon}$=0.5}}},
x grid style={white!69.01960784313725!black},
xlabel={\(\displaystyle C/K\)},
xmin=0.06, xmax=0.94,
xtick style={color=black},
xtick={0,0.1,0.2,0.3,0.4,0.5,0.6,0.7,0.8,0.9,1},
xticklabels={0.0,0.1,0.2,0.3,0.4,0.5,0.6,0.7,0.8,0.9,1.0},
y grid style={white!69.01960784313725!black},
ylabel={Mean Per-episode Reward},
ymin=7.8588, ymax=201.338533333333,
ytick style={color=black}
]
\path [draw=color0, fill=color0, opacity=0.25]
(axis cs:0.1,190.436666666667)
--(axis cs:0.1,178.27)
--(axis cs:0.3,178.27)
--(axis cs:0.5,178.27)
--(axis cs:0.7,178.27)
--(axis cs:0.9,178.27)
--(axis cs:0.9,190.436666666667)
--(axis cs:0.9,190.436666666667)
--(axis cs:0.7,190.436666666667)
--(axis cs:0.5,190.436666666667)
--(axis cs:0.3,190.436666666667)
--(axis cs:0.1,190.436666666667)
--cycle;

\path [draw=color1, fill=color1, opacity=0.25]
(axis cs:0.1,192.544)
--(axis cs:0.1,181.026666666667)
--(axis cs:0.3,172.956666666667)
--(axis cs:0.5,165.952)
--(axis cs:0.7,155.469333333333)
--(axis cs:0.9,131.133333333333)
--(axis cs:0.9,144.974)
--(axis cs:0.9,144.974)
--(axis cs:0.7,168.510666666667)
--(axis cs:0.5,178.856666666667)
--(axis cs:0.3,185.66)
--(axis cs:0.1,192.544)
--cycle;

\path [draw=color2, fill=color2, opacity=0.25]
(axis cs:0.1,187.864)
--(axis cs:0.1,175.372666666667)
--(axis cs:0.3,158.593333333333)
--(axis cs:0.5,133.23)
--(axis cs:0.7,90.4466666666667)
--(axis cs:0.9,16.6533333333333)
--(axis cs:0.9,18.3733333333333)
--(axis cs:0.9,18.3733333333333)
--(axis cs:0.7,103.994666666667)
--(axis cs:0.5,147.244)
--(axis cs:0.3,172.524666666667)
--(axis cs:0.1,187.864)
--cycle;

\path [draw=white!36.86274509803922!black, fill=white!36.86274509803922!black, opacity=0.25]
(axis cs:0.1,191.057333333333)
--(axis cs:0.1,178.306666666667)
--(axis cs:0.3,174.406)
--(axis cs:0.5,151.252666666667)
--(axis cs:0.7,121.989333333333)
--(axis cs:0.9,22.99)
--(axis cs:0.9,30.6233333333333)
--(axis cs:0.9,30.6233333333333)
--(axis cs:0.7,137.037333333333)
--(axis cs:0.5,166.256666666667)
--(axis cs:0.3,187.300666666667)
--(axis cs:0.1,191.057333333333)
--cycle;

\addplot [thick, line width=3, color0]
table {%
0.1 184.146839666667
0.3 184.146839666667
0.5 184.146839666667
0.7 184.146839666667
0.9 184.146839666667
};
\addlegendentry{No Poison}
\addplot [thick, line width=3, color1]
table {%
0.1 186.539557333333
0.3 179.147308333333
0.5 172.352941666667
0.7 161.888002333333
0.9 138.04209
};
\addlegendentry{Random Poison}
\addplot [thick, line width=3, white!36.86274509803922!black]
table {%
0.1 184.466437
0.3 180.684692666667
0.5 158.678329333333
0.7 129.437625666667
0.9 26.971678
};
\addlegendentry{\ourmodtwo}
\addplot [thick, line width=3, color2]
table {%
0.1 181.455925666667
0.3 165.447959333333
0.5 140.133405
0.7 97.1485183333333
0.9 17.514331
};
\addlegendentry{\ourmod}
\end{axis}

\end{tikzpicture}
	\end{subfigure}
	\hfill
	\begin{subfigure}[t]{0.24\columnwidth}
		\centering
\begin{tikzpicture}[scale=0.42]

\definecolor{color0}{rgb}{0.462745098039216,0.654901960784314,0.490196078431373}
\definecolor{color1}{rgb}{0.4,0.470588235294118,0.67843137254902}
\definecolor{color2}{rgb}{0.776470588235294,0.443137254901961,0.443137254901961}

\begin{axis}[
legend cell align={left},
legend style={fill opacity=0.6, draw opacity=1, text opacity=1, at={(0.45,0.53)}, anchor=south west, draw=white!80.0!black},
tick align=outside,
tick pos=left,
title={\textbf{\Large{(b) LunarLander, A2C, $\boldsymbol{\victim=\obs_{\boldsymbol{a}}}$, $\boldsymbol{\epsilon}$=0.1}}},
x grid style={white!69.01960784313725!black},
xlabel={\(\displaystyle C/K\)},
xmin=0.06, xmax=0.94,
xtick style={color=black},
xtick={0,0.1,0.2,0.3,0.4,0.5,0.6,0.7,0.8,0.9,1},
xticklabels={0.0,0.1,0.2,0.3,0.4,0.5,0.6,0.7,0.8,0.9,1.0},
y grid style={white!69.01960784313725!black},
ylabel={Mean Per-episode Reward},
ymin=-473.6873196489, ymax=2.91577110023334,
ytick style={color=black}
]
\path [draw=color0, fill=color0, opacity=0.25]
(axis cs:0.1,-22.867368614)
--(axis cs:0.1,-39.9432704922857)
--(axis cs:0.3,-39.9432704922857)
--(axis cs:0.5,-39.9432704922857)
--(axis cs:0.7,-39.9432704922857)
--(axis cs:0.9,-39.9432704922857)
--(axis cs:0.9,-22.867368614)
--(axis cs:0.9,-22.867368614)
--(axis cs:0.7,-22.867368614)
--(axis cs:0.5,-22.867368614)
--(axis cs:0.3,-22.867368614)
--(axis cs:0.1,-22.867368614)
--cycle;

\path [draw=color1, fill=color1, opacity=0.25]
(axis cs:0.1,-29.282632932)
--(axis cs:0.1,-50.8379321553333)
--(axis cs:0.3,-65.2196419866667)
--(axis cs:0.5,-37.2834560965185)
--(axis cs:0.7,-48.1173172953333)
--(axis cs:0.9,-56.8441251755238)
--(axis cs:0.9,-28.12715438)
--(axis cs:0.9,-28.12715438)
--(axis cs:0.7,-28.4021738453333)
--(axis cs:0.5,-18.748005752)
--(axis cs:0.3,-42.465679394)
--(axis cs:0.1,-29.282632932)
--cycle;

\path [draw=color2, fill=color2, opacity=0.25]
(axis cs:0.1,-203.440971518889)
--(axis cs:0.1,-268.031540295333)
--(axis cs:0.3,-352.641067778667)
--(axis cs:0.5,-452.023542796667)
--(axis cs:0.7,-346.148525084667)
--(axis cs:0.9,-451.313656757333)
--(axis cs:0.9,-391.439025269333)
--(axis cs:0.9,-391.439025269333)
--(axis cs:0.7,-296.275758238)
--(axis cs:0.5,-377.070024424)
--(axis cs:0.3,-292.765441327667)
--(axis cs:0.1,-203.440971518889)
--cycle;

\path [draw=white!36.86274509803922!black, fill=white!36.86274509803922!black, opacity=0.25]
(axis cs:0.1,-27.0106092948333)
--(axis cs:0.1,-59.6044311787619)
--(axis cs:0.3,-254.710614592)
--(axis cs:0.5,-289.068530356571)
--(axis cs:0.7,-281.991769118)
--(axis cs:0.9,-362.486390814667)
--(axis cs:0.9,-301.408872790667)
--(axis cs:0.9,-301.408872790667)
--(axis cs:0.7,-220.483615901333)
--(axis cs:0.5,-226.206478852667)
--(axis cs:0.3,-190.543369845259)
--(axis cs:0.1,-27.0106092948333)
--cycle;

\addplot [thick, line width=3, color0]
table {%
0.1 -31.4631029466079
0.3 -31.4631029466079
0.5 -31.4631029466079
0.7 -31.4631029466079
0.9 -31.4631029466079
};
\addlegendentry{No Poison}
\addplot [thick, line width=3, color1]
table {%
0.1 -40.1532038229521
0.3 -53.9302249436658
0.5 -28.086258126049
0.7 -38.3164452853606
0.9 -42.9752828130998
};
\addlegendentry{Random Poison}
\addplot [thick, line width=3, white!36.86274509803922!black]
table {%
0.1 -43.5256330498491
0.3 -222.477794135189
0.5 -257.728183770251
0.7 -251.292974764053
0.9 -332.58442417352
};
\addlegendentry{\ourmodtwo}
\addplot [thick, line width=3, color2]
table {%
0.1 -235.823526181108
0.3 -322.97022163577
0.5 -415.133589672528
0.7 -321.329907816048
0.9 -421.72233911584
};
\addlegendentry{\ourmod}
\end{axis}

\end{tikzpicture}
	\end{subfigure}
	\hfill
	\begin{subfigure}[t]{0.24\columnwidth}
		\centering
\begin{tikzpicture}[scale=0.42]

\definecolor{color0}{rgb}{0.462745098039216,0.654901960784314,0.490196078431373}
\definecolor{color1}{rgb}{0.4,0.470588235294118,0.67843137254902}
\definecolor{color2}{rgb}{0.776470588235294,0.443137254901961,0.443137254901961}

\begin{axis}[
legend cell align={left},
legend style={fill opacity=0.6, draw opacity=1, text opacity=1, at={(0.03,0.03)}, anchor=south west, draw=white!80.0!black},
tick align=outside,
tick pos=left,
title={\textbf{\Large{(c) Walker2d, A2C, $\boldsymbol{\victim=\obs_{\boldsymbol{r}}}$, $\boldsymbol{\epsilon}$=0.5}}},
x grid style={white!69.01960784313725!black},
xlabel={\(\displaystyle C/K\)},
xmin=0.06, xmax=0.94,
xtick style={color=black},
xtick={0,0.1,0.2,0.3,0.4,0.5,0.6,0.7,0.8,0.9,1},
xticklabels={0.0,0.1,0.2,0.3,0.4,0.5,0.6,0.7,0.8,0.9,1.0},
y grid style={white!69.01960784313725!black},
ylabel={Mean Per-episode Reward},
ymin=-25.1731911296, ymax=351.1338583536,
ytick style={color=black}
]
\path [draw=color0, fill=color0, opacity=0.25]
(axis cs:0.1,310.641496141333)
--(axis cs:0.1,261.056838712667)
--(axis cs:0.3,261.056838712667)
--(axis cs:0.5,261.056838712667)
--(axis cs:0.7,261.056838712667)
--(axis cs:0.9,261.056838712667)
--(axis cs:0.9,310.641496141333)
--(axis cs:0.9,310.641496141333)
--(axis cs:0.7,310.641496141333)
--(axis cs:0.5,310.641496141333)
--(axis cs:0.3,310.641496141333)
--(axis cs:0.1,310.641496141333)
--cycle;

\path [draw=color1, fill=color1, opacity=0.25]
(axis cs:0.1,302.443847322)
--(axis cs:0.1,256.850411764667)
--(axis cs:0.3,265.667774012667)
--(axis cs:0.5,177.960457476667)
--(axis cs:0.7,157.946941369333)
--(axis cs:0.9,163.823076217333)
--(axis cs:0.9,231.721326326)
--(axis cs:0.9,231.721326326)
--(axis cs:0.7,218.568224438)
--(axis cs:0.5,245.195057028)
--(axis cs:0.3,334.028992468)
--(axis cs:0.1,302.443847322)
--cycle;

\path [draw=color2, fill=color2, opacity=0.25]
(axis cs:0.1,308.203298954667)
--(axis cs:0.1,258.090845819333)
--(axis cs:0.3,178.906757448667)
--(axis cs:0.5,79.0813856633333)
--(axis cs:0.7,50.2336785966667)
--(axis cs:0.9,-8.068325244)
--(axis cs:0.9,2.85444083866667)
--(axis cs:0.9,2.85444083866667)
--(axis cs:0.7,80.98557569)
--(axis cs:0.5,113.033067214)
--(axis cs:0.3,221.065673438)
--(axis cs:0.1,308.203298954667)
--cycle;

\path [draw=white!36.86274509803922!black, fill=white!36.86274509803922!black, opacity=0.25]
(axis cs:0.1,314.275903154)
--(axis cs:0.1,263.264029545333)
--(axis cs:0.3,203.341092672)
--(axis cs:0.5,196.873945929333)
--(axis cs:0.7,139.448791974)
--(axis cs:0.9,2.22481189866667)
--(axis cs:0.9,20.2384893093333)
--(axis cs:0.9,20.2384893093333)
--(axis cs:0.7,175.521527010667)
--(axis cs:0.5,231.678664678667)
--(axis cs:0.3,250.876278841333)
--(axis cs:0.1,314.275903154)
--cycle;

\addplot [thick, line width=3, color0]
table {%
0.1 286.222374435501
0.3 286.222374435501
0.5 286.222374435501
0.7 286.222374435501
0.9 286.222374435501
};
\addlegendentry{No Poison}
\addplot [thick, line width=3, color1]
table {%
0.1 279.818075896796
0.3 300.556312979552
0.5 212.228280860318
0.7 188.685964201264
0.9 198.273307803283
};
\addlegendentry{Random Poison}
\addplot [thick, line width=3, white!36.86274509803922!black]
table {%
0.1 288.887528619527
0.3 227.610874279734
0.5 214.492332453171
0.7 157.582419409457
0.9 11.3541083946757
};
\addlegendentry{\ourmodtwo}
\addplot [thick, line width=3, color2]
table {%
0.1 283.492825370739
0.3 200.205114843785
0.5 96.0308691526237
0.7 65.9816856931957
0.9 -2.396083798161
};
\addlegendentry{\ourmod}
\end{axis}

\end{tikzpicture}
	\end{subfigure}
	\hfill
	\begin{subfigure}[t]{0.24\columnwidth}
		\centering
\begin{tikzpicture}[scale=0.42]

\definecolor{color0}{rgb}{0.462745098039216,0.654901960784314,0.490196078431373}
\definecolor{color1}{rgb}{0.4,0.470588235294118,0.67843137254902}
\definecolor{color2}{rgb}{0.776470588235294,0.443137254901961,0.443137254901961}

\begin{axis}[
legend cell align={left},
legend style={fill opacity=0.6, draw opacity=1, text opacity=1, at={(0.03,0.03)}, anchor=south west, draw=white!80.0!black},
tick align=outside,
tick pos=left,
title={\textbf{\Large{(d) HalfCheetah, A2C, $\boldsymbol{\victim=\obs_{\boldsymbol{r}}}$, $\boldsymbol{\epsilon}$=0.1}}},
x grid style={white!69.01960784313725!black},
xlabel={\(\displaystyle C/K\)},
xmin=0.06, xmax=0.94,
xtick style={color=black},
xtick={0,0.1,0.2,0.3,0.4,0.5,0.6,0.7,0.8,0.9,1},
xticklabels={0.0,0.1,0.2,0.3,0.4,0.5,0.6,0.7,0.8,0.9,1.0},
y grid style={white!69.01960784313725!black},
ylabel={Mean Per-episode Reward},
ymin=-19333.5195414647, ymax=-250.179930977533,
ytick style={color=black},
ytick={-10000,-1000,-100},
yticklabels={\(\displaystyle {-10^{4}}\),\(\displaystyle {-10^{3}}\),\(\displaystyle {-10^{2}}\)}
]
\path [draw=color0, fill=color0, opacity=0.25]
(axis cs:0.1,-671.186014856)
--(axis cs:0.1,-759.181260626)
--(axis cs:0.3,-759.181260626)
--(axis cs:0.5,-759.181260626)
--(axis cs:0.7,-759.181260626)
--(axis cs:0.9,-759.181260626)
--(axis cs:0.9,-671.186014856)
--(axis cs:0.9,-671.186014856)
--(axis cs:0.7,-671.186014856)
--(axis cs:0.5,-671.186014856)
--(axis cs:0.3,-671.186014856)
--(axis cs:0.1,-671.186014856)
--cycle;

\path [draw=color1, fill=color1, opacity=0.25]
(axis cs:0.1,-611.847435272)
--(axis cs:0.1,-684.277635504667)
--(axis cs:0.3,-564.280715386)
--(axis cs:0.5,-572.519980602667)
--(axis cs:0.7,-441.216538325333)
--(axis cs:0.9,-430.747223788667)
--(axis cs:0.9,-304.840950541333)
--(axis cs:0.9,-304.840950541333)
--(axis cs:0.7,-340.717650625333)
--(axis cs:0.5,-510.139546002667)
--(axis cs:0.3,-508.162294590667)
--(axis cs:0.1,-611.847435272)
--cycle;

\path [draw=color2, fill=color2, opacity=0.25]
(axis cs:0.1,-880.327034560667)
--(axis cs:0.1,-963.199499498)
--(axis cs:0.3,-1723.92708931667)
--(axis cs:0.5,-6881.44084571067)
--(axis cs:0.7,-9359.084131534)
--(axis cs:0.9,-15866.8268677393)
--(axis cs:0.9,-9534.35176991067)
--(axis cs:0.9,-9534.35176991067)
--(axis cs:0.7,-5345.02406391067)
--(axis cs:0.5,-4259.91874708467)
--(axis cs:0.3,-1486.67952109267)
--(axis cs:0.1,-880.327034560667)
--cycle;

\path [draw=white!36.86274509803922!black, fill=white!36.86274509803922!black, opacity=0.25]
(axis cs:0.1,-710.942271142667)
--(axis cs:0.1,-776.455687335333)
--(axis cs:0.3,-972.405613703333)
--(axis cs:0.5,-1241.65795178933)
--(axis cs:0.7,-2041.60159180067)
--(axis cs:0.9,-13359.2406064947)
--(axis cs:0.9,-8387.77627047)
--(axis cs:0.9,-8387.77627047)
--(axis cs:0.7,-1657.391616198)
--(axis cs:0.5,-1159.505291144)
--(axis cs:0.3,-933.208288939333)
--(axis cs:0.1,-710.942271142667)
--cycle;

\addplot [thick, line width=3, color0]
table {%
0.1 -714.930086659388
0.3 -714.930086659388
0.5 -714.930086659388
0.7 -714.930086659388
0.9 -714.930086659388
};
\addlegendentry{No Poison}
\addplot [thick, line width=3, color1]
table {%
0.1 -648.059090499051
0.3 -535.922930614036
0.5 -541.193146365702
0.7 -390.803638608585
0.9 -367.892225725207
};
\addlegendentry{Random Poison}
\addplot [thick, line width=3, white!36.86274509803922!black]
table {%
0.1 -743.521075146411
0.3 -952.946237192992
0.5 -1200.54636798462
0.7 -1853.34493803737
0.9 -10902.2138139796
};
\addlegendentry{\ourmodtwo}
\addplot [thick, line width=3, color2]
table {%
0.1 -922.391294238546
0.3 -1606.63981853732
0.5 -5620.32722409977
0.7 -7446.59815297764
0.9 -12798.2224220729
};
\addlegendentry{\ourmod}
\end{axis}

\end{tikzpicture}
	\end{subfigure}

	\vspace*{-1em}
	\begin{subfigure}[t]{0.24\columnwidth}
		\centering
\begin{tikzpicture}[scale=0.42]

\definecolor{color0}{rgb}{0.462745098039216,0.654901960784314,0.490196078431373}
\definecolor{color1}{rgb}{0.4,0.470588235294118,0.67843137254902}
\definecolor{color2}{rgb}{0.776470588235294,0.443137254901961,0.443137254901961}

\begin{axis}[
legend cell align={left},
legend style={fill opacity=0.6, draw opacity=1, text opacity=1, at={(0.45,0.4)}, anchor=south west, draw=white!80.0!black},
tick align=outside,
tick pos=left,
title={\textbf{\Large{(e) CartPole, VPG, $\boldsymbol{\victim=\obs_{\boldsymbol{s}}}$, $\boldsymbol{\epsilon}$=0.1}}},
x grid style={white!69.01960784313725!black},
xlabel={\(\displaystyle C/K\)},
xmin=0.06, xmax=0.94,
xtick style={color=black},
xtick={0,0.1,0.2,0.3,0.4,0.5,0.6,0.7,0.8,0.9,1},
xticklabels={0.0,0.1,0.2,0.3,0.4,0.5,0.6,0.7,0.8,0.9,1.0},
y grid style={white!69.01960784313725!black},
ylabel={Mean Per-episode Reward},
ymin=7.02, ymax=172.313333333333,
ytick style={color=black}
]
\path [draw=color0, fill=color0, opacity=0.25]
(axis cs:0.1,158.366666666667)
--(axis cs:0.1,140.233333333333)
--(axis cs:0.3,140.233333333333)
--(axis cs:0.5,140.233333333333)
--(axis cs:0.7,140.233333333333)
--(axis cs:0.9,140.233333333333)
--(axis cs:0.9,158.366666666667)
--(axis cs:0.9,158.366666666667)
--(axis cs:0.7,158.366666666667)
--(axis cs:0.5,158.366666666667)
--(axis cs:0.3,158.366666666667)
--(axis cs:0.1,158.366666666667)
--cycle;

\path [draw=color1, fill=color1, opacity=0.25]
(axis cs:0.1,161.773333333333)
--(axis cs:0.1,142.9)
--(axis cs:0.3,147.366666666667)
--(axis cs:0.5,139.8)
--(axis cs:0.7,138.9)
--(axis cs:0.9,140.2)
--(axis cs:0.9,157.9)
--(axis cs:0.9,157.9)
--(axis cs:0.7,156.733333333333)
--(axis cs:0.5,158.066666666667)
--(axis cs:0.3,164.8)
--(axis cs:0.1,161.773333333333)
--cycle;

\path [draw=color2, fill=color2, opacity=0.25]
(axis cs:0.1,33.8333333333333)
--(axis cs:0.1,27.7333333333333)
--(axis cs:0.3,19.0666666666667)
--(axis cs:0.5,14.5333333333333)
--(axis cs:0.7,15)
--(axis cs:0.9,14.5666666666667)
--(axis cs:0.9,19.6666666666667)
--(axis cs:0.9,19.6666666666667)
--(axis cs:0.7,19.4333333333333)
--(axis cs:0.5,19.0333333333333)
--(axis cs:0.3,24.2666666666667)
--(axis cs:0.1,33.8333333333333)
--cycle;

\path [draw=white!36.86274509803922!black, fill=white!36.86274509803922!black, opacity=0.25]
(axis cs:0.1,111.466666666667)
--(axis cs:0.1,96.3666666666667)
--(axis cs:0.3,48.0333333333333)
--(axis cs:0.5,30.2666666666667)
--(axis cs:0.7,20.7666666666667)
--(axis cs:0.9,15.7)
--(axis cs:0.9,21.2333333333333)
--(axis cs:0.9,21.2333333333333)
--(axis cs:0.7,26.4)
--(axis cs:0.5,37.4666666666667)
--(axis cs:0.3,58.6333333333333)
--(axis cs:0.1,111.466666666667)
--cycle;

\addplot [thick, line width=3, color0]
table {%
0.1 149.317463333333
0.3 149.317463333333
0.5 149.317463333333
0.7 149.317463333333
0.9 149.317463333333
};
\addlegendentry{No Poison}
\addplot [thick, line width=3, color1]
table {%
0.1 152.151113333333
0.3 156.01351
0.5 148.772996666667
0.7 147.789696666667
0.9 148.940283333333
};
\addlegendentry{Random Poison}
\addplot [thick, line width=3, white!36.86274509803922!black]
table {%
0.1 104.005083333333
0.3 53.4759833333333
0.5 33.9749733333333
0.7 23.6858433333333
0.9 18.5646533333333
};
\addlegendentry{\ourmodtwo}
\addplot [thick, line width=3, color2]
table {%
0.1 30.8866666666667
0.3 21.73378
0.5 16.8819066666667
0.7 17.2749
0.9 17.22266
};
\addlegendentry{\ourmod}
\end{axis}

\end{tikzpicture}
	\end{subfigure}
	\hfill
	\begin{subfigure}[t]{0.24\columnwidth}
		\centering
\begin{tikzpicture}[scale=0.42]

\definecolor{color0}{rgb}{0.462745098039216,0.654901960784314,0.490196078431373}
\definecolor{color1}{rgb}{0.4,0.470588235294118,0.67843137254902}
\definecolor{color2}{rgb}{0.776470588235294,0.443137254901961,0.443137254901961}

\begin{axis}[
legend cell align={left},
legend style={fill opacity=0.6, draw opacity=1, text opacity=1, at={(0.45,0.4)}, anchor=south west, draw=white!80.0!black},
tick align=outside,
tick pos=left,
title={\textbf{\Large{(f) Hopper, VPG, $\boldsymbol{\victim=\obs_{\boldsymbol{a}}}$, $\boldsymbol{\epsilon}$=0.1}}},
x grid style={white!69.01960784313725!black},
xlabel={\(\displaystyle C/K\)},
xmin=0.06, xmax=0.94,
xtick style={color=black},
xtick={0,0.1,0.2,0.3,0.4,0.5,0.6,0.7,0.8,0.9,1},
xticklabels={0.0,0.1,0.2,0.3,0.4,0.5,0.6,0.7,0.8,0.9,1.0},
y grid style={white!69.01960784313725!black},
ylabel={Mean Per-episode Reward},
ymin=10.240955, ymax=226.839131666667,
ytick style={color=black}
]
\path [draw=color0, fill=color0, opacity=0.25]
(axis cs:0.1,198.217026666667)
--(axis cs:0.1,167.481226666667)
--(axis cs:0.3,167.481226666667)
--(axis cs:0.5,167.481226666667)
--(axis cs:0.7,167.481226666667)
--(axis cs:0.9,167.481226666667)
--(axis cs:0.9,198.217026666667)
--(axis cs:0.9,198.217026666667)
--(axis cs:0.7,198.217026666667)
--(axis cs:0.5,198.217026666667)
--(axis cs:0.3,198.217026666667)
--(axis cs:0.1,198.217026666667)
--cycle;

\path [draw=color1, fill=color1, opacity=0.25]
(axis cs:0.1,152.757013333333)
--(axis cs:0.1,129.276)
--(axis cs:0.3,82.8281066666667)
--(axis cs:0.5,74.33748)
--(axis cs:0.7,65.06182)
--(axis cs:0.9,55.63366)
--(axis cs:0.9,63.6496333333333)
--(axis cs:0.9,63.6496333333333)
--(axis cs:0.7,75.8718666666667)
--(axis cs:0.5,87.7526666666667)
--(axis cs:0.3,97.68382)
--(axis cs:0.1,152.757013333333)
--cycle;

\path [draw=color2, fill=color2, opacity=0.25]
(axis cs:0.1,136.94998)
--(axis cs:0.1,117.736506666667)
--(axis cs:0.3,44.87904)
--(axis cs:0.5,25.0862133333333)
--(axis cs:0.7,21.3998466666667)
--(axis cs:0.9,20.0863266666667)
--(axis cs:0.9,25.2733533333333)
--(axis cs:0.9,25.2733533333333)
--(axis cs:0.7,26.6833466666667)
--(axis cs:0.5,30.87394)
--(axis cs:0.3,55.6797066666667)
--(axis cs:0.1,136.94998)
--cycle;

\path [draw=white!36.86274509803922!black, fill=white!36.86274509803922!black, opacity=0.25]
(axis cs:0.1,216.99376)
--(axis cs:0.1,185.86368)
--(axis cs:0.3,103.973506666667)
--(axis cs:0.5,60.4531466666667)
--(axis cs:0.7,30.8923933333333)
--(axis cs:0.9,23.4911666666667)
--(axis cs:0.9,29.5135866666667)
--(axis cs:0.9,29.5135866666667)
--(axis cs:0.7,39.4438266666667)
--(axis cs:0.5,74.83714)
--(axis cs:0.3,127.53324)
--(axis cs:0.1,216.99376)
--cycle;

\addplot [thick, line width=3, color0]
table {%
0.1 182.832836316667
0.3 182.832836316667
0.5 182.832836316667
0.7 182.832836316667
0.9 182.832836316667
};
\addlegendentry{No Poison}
\addplot [thick, line width=3, color1]
table {%
0.1 141.118259143333
0.3 90.3917441533333
0.5 81.1869060966667
0.7 70.57810806
0.9 59.77096426
};
\addlegendentry{Random Poison}
\addplot [thick, line width=3, white!36.86274509803922!black]
table {%
0.1 201.499093063333
0.3 115.93933807
0.5 67.9533712666667
0.7 35.3637696866667
0.9 26.6470915133333
};
\addlegendentry{\ourmodtwo}
\addplot [thick, line width=3, color2]
table {%
0.1 127.343009766667
0.3 50.5154257533333
0.5 28.18414968
0.7 24.2777551
0.9 22.9259910766667
};
\addlegendentry{\ourmod}
\end{axis}

\end{tikzpicture}
	\end{subfigure}
	\hfill
	\begin{subfigure}[t]{0.24\columnwidth}
		\centering
\begin{tikzpicture}[scale=0.42]

\definecolor{color0}{rgb}{0.462745098039216,0.654901960784314,0.490196078431373}
\definecolor{color1}{rgb}{0.4,0.470588235294118,0.67843137254902}
\definecolor{color2}{rgb}{0.776470588235294,0.443137254901961,0.443137254901961}

\begin{axis}[
legend cell align={left},
legend style={fill opacity=0.6, draw opacity=1, text opacity=1, at={(0.03,0.03)}, anchor=south west, draw=white!80.0!black},
tick align=outside,
tick pos=left,
title={\textbf{\Large{(g) Hopper, PPO, $\boldsymbol{\victim=\obs_{\boldsymbol{a}}}$, $\boldsymbol{\epsilon}$=0.1}}},
x grid style={white!69.01960784313725!black},
xlabel={\(\displaystyle C/K\)},
xmin=0.06, xmax=0.94,
xtick style={color=black},
xtick={0,0.1,0.2,0.3,0.4,0.5,0.6,0.7,0.8,0.9,1},
xticklabels={0.0,0.1,0.2,0.3,0.4,0.5,0.6,0.7,0.8,0.9,1.0},
y grid style={white!69.01960784313725!black},
ylabel={Mean Per-episode Reward},
ymin=32.143928, ymax=224.407032,
ytick style={color=black}
]
\path [draw=color0, fill=color0, opacity=0.25]
(axis cs:0.1,215.6678)
--(axis cs:0.1,194.8845)
--(axis cs:0.3,194.8845)
--(axis cs:0.5,194.8845)
--(axis cs:0.7,194.8845)
--(axis cs:0.9,194.8845)
--(axis cs:0.9,215.6678)
--(axis cs:0.9,215.6678)
--(axis cs:0.7,215.6678)
--(axis cs:0.5,215.6678)
--(axis cs:0.3,215.6678)
--(axis cs:0.1,215.6678)
--cycle;

\path [draw=color1, fill=color1, opacity=0.25]
(axis cs:0.1,204.692873333333)
--(axis cs:0.1,191.98438)
--(axis cs:0.3,189.30656)
--(axis cs:0.5,167.041986666667)
--(axis cs:0.7,151.686986666667)
--(axis cs:0.9,160.973946666667)
--(axis cs:0.9,178.38946)
--(axis cs:0.9,178.38946)
--(axis cs:0.7,165.028473333333)
--(axis cs:0.5,185.788206666667)
--(axis cs:0.3,208.769046666667)
--(axis cs:0.1,204.692873333333)
--cycle;

\path [draw=color2, fill=color2, opacity=0.25]
(axis cs:0.1,178.39478)
--(axis cs:0.1,162.76532)
--(axis cs:0.3,87.49946)
--(axis cs:0.5,53.67312)
--(axis cs:0.7,53.3770133333333)
--(axis cs:0.9,40.88316)
--(axis cs:0.9,43.90102)
--(axis cs:0.9,43.90102)
--(axis cs:0.7,59.0269133333333)
--(axis cs:0.5,59.9142333333333)
--(axis cs:0.3,98.66916)
--(axis cs:0.1,178.39478)
--cycle;

\path [draw=white!36.86274509803922!black, fill=white!36.86274509803922!black, opacity=0.25]
(axis cs:0.1,176.59988)
--(axis cs:0.1,164.01164)
--(axis cs:0.3,152.82132)
--(axis cs:0.5,112.705153333333)
--(axis cs:0.7,44.3510866666667)
--(axis cs:0.9,41.4146066666667)
--(axis cs:0.9,44.6297533333333)
--(axis cs:0.9,44.6297533333333)
--(axis cs:0.7,48.1081666666667)
--(axis cs:0.5,142.854273333333)
--(axis cs:0.3,163.498893333333)
--(axis cs:0.1,176.59988)
--cycle;

\addplot [thick, line width=3, color0]
table {%
0.1 205.319516766667
0.3 205.319516766667
0.5 205.319516766667
0.7 205.319516766667
0.9 205.319516766667
};
\addlegendentry{No Poison}
\addplot [thick, line width=3, color1]
table {%
0.1 198.21063531
0.3 199.139530466667
0.5 176.342816836667
0.7 158.27078605
0.9 169.630643256667
};
\addlegendentry{Random Poison}
\addplot [thick, line width=3, white!36.86274509803922!black]
table {%
0.1 170.281935246667
0.3 158.010930333333
0.5 127.85619099
0.7 46.36763823
0.9 43.04941989
};
\addlegendentry{\ourmodtwo}
\addplot [thick, line width=3, color2]
table {%
0.1 170.55727674
0.3 93.1953435733333
0.5 56.94078701
0.7 56.33463983
0.9 42.2468004166667
};
\addlegendentry{\ourmod}
\end{axis}

\end{tikzpicture}
	\end{subfigure}
	\hfill
	\begin{subfigure}[t]{0.24\columnwidth}
		\centering
\begin{tikzpicture}[scale=0.42]

\definecolor{color0}{rgb}{0.462745098039216,0.654901960784314,0.490196078431373}
\definecolor{color1}{rgb}{0.4,0.470588235294118,0.67843137254902}
\definecolor{color2}{rgb}{0.776470588235294,0.443137254901961,0.443137254901961}

\begin{axis}[
legend cell align={left},
legend style={fill opacity=0.6, draw opacity=1, text opacity=1, at={(0.45,0.4)}, anchor=south west, draw=white!80.0!black},
tick align=outside,
tick pos=left,
title={\textbf{\Large{(h) Walker2d, ACKTR, $\boldsymbol{\victim=\obs_{\boldsymbol{r}}}$, $\boldsymbol{\epsilon}$=0.3}}},
x grid style={white!69.01960784313725!black},
xlabel={\(\displaystyle C/K\)},
xmin=0.06, xmax=0.94,
xtick style={color=black},
xtick={0,0.1,0.2,0.3,0.4,0.5,0.6,0.7,0.8,0.9,1},
xticklabels={0.0,0.1,0.2,0.3,0.4,0.5,0.6,0.7,0.8,0.9,1.0},
y grid style={white!69.01960784313725!black},
ylabel={Mean Per-episode Reward},
ymin=-16.4430268628333, ymax=299.551476082167,
ytick style={color=black}
]
\path [draw=color0, fill=color0, opacity=0.25]
(axis cs:0.1,242.338389808)
--(axis cs:0.1,210.494905174667)
--(axis cs:0.3,210.494905174667)
--(axis cs:0.5,210.494905174667)
--(axis cs:0.7,210.494905174667)
--(axis cs:0.9,210.494905174667)
--(axis cs:0.9,242.338389808)
--(axis cs:0.9,242.338389808)
--(axis cs:0.7,242.338389808)
--(axis cs:0.5,242.338389808)
--(axis cs:0.3,242.338389808)
--(axis cs:0.1,242.338389808)
--cycle;

\path [draw=color1, fill=color1, opacity=0.25]
(axis cs:0.1,233.544979788)
--(axis cs:0.1,201.854883561333)
--(axis cs:0.3,207.655028934)
--(axis cs:0.5,222.5661275)
--(axis cs:0.7,246.464510401333)
--(axis cs:0.9,212.21194468)
--(axis cs:0.9,242.470274326)
--(axis cs:0.9,242.470274326)
--(axis cs:0.7,285.188089584667)
--(axis cs:0.5,255.112550004)
--(axis cs:0.3,241.770866994)
--(axis cs:0.1,233.544979788)
--cycle;

\path [draw=color2, fill=color2, opacity=0.25]
(axis cs:0.1,206.091590976)
--(axis cs:0.1,179.421717334667)
--(axis cs:0.3,68.1869444346667)
--(axis cs:0.5,12.5721692566667)
--(axis cs:0.7,10.688170722)
--(axis cs:0.9,-1.46373448466667)
--(axis cs:0.9,0.283824070666667)
--(axis cs:0.9,0.283824070666667)
--(axis cs:0.7,27.6502963853333)
--(axis cs:0.5,27.7972703133333)
--(axis cs:0.3,88.5095048533333)
--(axis cs:0.1,206.091590976)
--cycle;

\path [draw=white!36.86274509803922!black, fill=white!36.86274509803922!black, opacity=0.25]
(axis cs:0.1,240.967196674667)
--(axis cs:0.1,207.833967002667)
--(axis cs:0.3,92.2127224366667)
--(axis cs:0.5,1.61006040866667)
--(axis cs:0.7,-0.784207068)
--(axis cs:0.9,-2.07964036533333)
--(axis cs:0.9,-1.377247786)
--(axis cs:0.9,-1.377247786)
--(axis cs:0.7,4.55633701933333)
--(axis cs:0.5,11.207830692)
--(axis cs:0.3,117.508441084667)
--(axis cs:0.1,240.967196674667)
--cycle;

\addplot [thick, line width=3, color0]
table {%
0.1 226.227249739475
0.3 226.227249739475
0.5 226.227249739475
0.7 226.227249739475
0.9 226.227249739475
};
\addlegendentry{No Poison}
\addplot [thick, line width=3, color1]
table {%
0.1 217.416754003538
0.3 224.664372808122
0.5 238.779465292274
0.7 265.642889810194
0.9 227.132444195778
};
\addlegendentry{Random Poison}
\addplot [thick, line width=3, white!36.86274509803922!black]
table {%
0.1 224.401365632253
0.3 104.93949473455
0.5 6.743266976801
0.7 2.031029325401
0.9 -1.71999616109867
};
\addlegendentry{\ourmodtwo}
\addplot [thick, line width=3, color2]
table {%
0.1 192.607728430307
0.3 78.428981145325
0.5 20.339003686862
0.7 19.3955527056497
0.9 -0.545872289143667
};
\addlegendentry{\ourmod}
\end{axis}

\end{tikzpicture}
	\end{subfigure}
	\vspace*{-2em}
	\caption{\small{Comparison of mean per-episode reward gained by VPG, PPO, A2C, ACKTR on various environments, under no poisoning, random poisoning, \ourmodtwo and \ourmod. }}
	\label{fig:nont}
\end{figure}
\vspace*{-0.5em}
\textbf{Reward-minimizing Poisoning.} 
\textit{Baselines.}
To the best of our knowledge, there is \emph{no existing poisoning algorithm against deep policy-gradient algorithms}. Although some evasion methods~\citep{Inkawhich2020SnoopingAO} also work for policy-gradient algorithms, evasion is substantially different from poisoning since it does not influence the policy.
Therefore, to show the effectiveness of \ourmod, we compare it with 3 baselines: 
(1) the normal learning with no poison;
(2) a random attacker which randomly chooses $\budget$ iterations, and perturbs the reward to an arbitrary direction by $\power$; and 
(3) a simplified version of our algorithm, called \ourmodtwofull (\ourmodtwo), which decides ``how to attack'' in the same way as \ourmod does, but chooses ``when to attack'' randomly. 

\textit{Performance.}
We first show the reward-minimizing performance of \ourmod on all three types of \aimnames, assuming the attacker knows the learner's model (white-box attack).
Figure~\ref{fig:nont} shows the rewards of various learners under different kinds of poisoning methods, with different ratios of budget $\budget$ to the total number of iterations $K$. 
Compared with random poisoning, our proposed \emph{\ourmod and the simplified version \ourmodtwo make the reward drop more significantly}, demonstrating the effectiveness of our ``how to attack'' decisions made by the Adversarial Critic. 
\ourmod further outperforms \ourmodtwo in almost all cases, which implies that our ``when to attack'' decisions based on Vulnerability-Awareness work well in practice. An interesting observation is \emph{random poisoning not only does not work well in many cases, but sometimes also facilitates the learner} (Figure~\ref{fig:nont}h).
This phenomenon is mainly due to the uncertainty of the environment, as pointed out by \emph{Challenge I} and \emph{Challenge II} in Section~\ref{sec:intro}. Thus, a good poisoning strategy is important.

\vspace*{-0.5em}
\begin{figure}[!htbp]
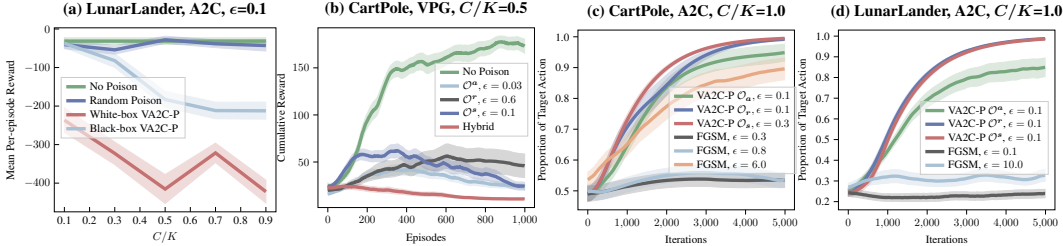

\centering
	\begin{subfigure}[t]{0.25\columnwidth}
		\centering
\begin{tikzpicture}[scale=0.43]

\definecolor{color0}{rgb}{0.462745098039216,0.654901960784314,0.490196078431373}
\definecolor{color1}{rgb}{0.4,0.470588235294118,0.67843137254902}
\definecolor{color2}{rgb}{0.776470588235294,0.443137254901961,0.443137254901961}
\definecolor{color3}{rgb}{0.662745098039216,0.756862745098039,0.835294117647059}

\begin{axis}[
legend cell align={left},
legend style={fill opacity=0.6, draw opacity=1, text opacity=1, at={(0.03,0.4)}, anchor=south west, draw=white!80.0!black},
tick pos=left,
title={\textbf{\Large{(a) LunarLander, A2C, $\boldsymbol{\epsilon}$=0.1}}},
x grid style={white!69.01960784313725!black},
xlabel={\(\displaystyle C/K\)},
xmin=0.06, xmax=0.94,
xtick style={color=black},
xtick={0,0.1,0.2,0.3,0.4,0.5,0.6,0.7,0.8,0.9,1},
xticklabels={0.0,0.1,0.2,0.3,0.4,0.5,0.6,0.7,0.8,0.9,1.0},
y grid style={white!69.01960784313725!black},
ylabel={Mean Per-episode Reward},
ymin=-474.4935320192, ymax=3.0919775512,
ytick style={color=black}
]
\path [draw=color0, fill=color0, opacity=0.25]
(axis cs:0.1,-22.7550502351667)
--(axis cs:0.1,-39.9683503606667)
--(axis cs:0.3,-39.9683503606667)
--(axis cs:0.5,-39.9683503606667)
--(axis cs:0.7,-39.9683503606667)
--(axis cs:0.9,-39.9683503606667)
--(axis cs:0.9,-22.7550502351667)
--(axis cs:0.9,-22.7550502351667)
--(axis cs:0.7,-22.7550502351667)
--(axis cs:0.5,-22.7550502351667)
--(axis cs:0.3,-22.7550502351667)
--(axis cs:0.1,-22.7550502351667)
--cycle;

\path [draw=color1, fill=color1, opacity=0.25]
(axis cs:0.1,-29.489748436)
--(axis cs:0.1,-51.2278723613333)
--(axis cs:0.3,-65.1435509646667)
--(axis cs:0.5,-37.0761016670476)
--(axis cs:0.7,-48.2801666643809)
--(axis cs:0.9,-57.248586507037)
--(axis cs:0.9,-27.9102544713333)
--(axis cs:0.9,-27.9102544713333)
--(axis cs:0.7,-28.31447575)
--(axis cs:0.5,-18.616454702)
--(axis cs:0.3,-42.8757602586667)
--(axis cs:0.1,-29.489748436)
--cycle;

\path [draw=color2, fill=color2, opacity=0.25]
(axis cs:0.1,-204.60156062)
--(axis cs:0.1,-267.941810668)
--(axis cs:0.3,-351.800722951333)
--(axis cs:0.5,-452.785099766)
--(axis cs:0.7,-346.566893214667)
--(axis cs:0.9,-451.601099125333)
--(axis cs:0.9,-392.201356994)
--(axis cs:0.9,-392.201356994)
--(axis cs:0.7,-296.241832381333)
--(axis cs:0.5,-377.780309863333)
--(axis cs:0.3,-293.447007447333)
--(axis cs:0.1,-204.60156062)
--cycle;


\path [draw=color3, fill=color3, opacity=0.25]
(axis cs:0.1,-25.8904709486667)
--(axis cs:0.1,-48.195321156)
--(axis cs:0.3,-99.8277928433334)
--(axis cs:0.5,-203.135705476667)
--(axis cs:0.7,-232.928785282)
--(axis cs:0.9,-233.991699144296)
--(axis cs:0.9,-189.681828547407)
--(axis cs:0.9,-189.681828547407)
--(axis cs:0.7,-189.906619545333)
--(axis cs:0.5,-161.903471117143)
--(axis cs:0.3,-63.938407494)
--(axis cs:0.1,-25.8904709486667)
--cycle;

\addplot [thick, line width=3, color0]
table {%
0.1 -31.4011782371031
0.3 -31.4011782371031
0.5 -31.4011782371031
0.7 -31.4011782371031
0.9 -31.4011782371031
};
\addlegendentry{No Poison}
\addplot [thick, line width=3, color1]
table {%
0.1 -40.3988348072444
0.3 -54.0896760687099
0.5 -27.9573332613309
0.7 -38.3491929871859
0.9 -42.9229043049814
};
\addlegendentry{Random Poison}
\addplot [thick, line width=3, color2]
table {%
0.1 -236.602828823335
0.3 -322.815806830713
0.5 -415.424209324098
0.7 -321.597469247658
0.9 -422.13266593033
};
\addlegendentry{White-box \ourmod}
\addplot [thick, line width=3, color3]
table {%
0.1 -37.1937613414659
0.3 -82.1743682741115
0.5 -182.997440623251
0.7 -211.687273501226
0.9 -212.302541060416
};
\addlegendentry{Black-box \ourmod}
\end{axis}

\end{tikzpicture}
	\end{subfigure}
	\hfill
	\begin{subfigure}[t]{0.24\columnwidth}
		\centering
		\input{\fighome/non_target_hybrid/hybrid_CartPole-v0_vpg.tex}
	\end{subfigure}
	\hfill
	\begin{subfigure}[t]{0.24\columnwidth}
		\centering
		\input{\fighome/target/targ_CartPole-v0_a2c.tex}
	\end{subfigure}
	\hfill
	\begin{subfigure}[t]{0.24\columnwidth}
		\centering
		\input{\fighome/target/targ_LunarLander-v2_a2c.tex}
	\end{subfigure}
	\vspace*{-2em}
	\caption{\small{(a) \ourmod also works under black-box setting; (b) hybrid-aim poisoning could be better than single-aim poisoning; (c)(d) \ourmod successfully forces the agent to choose the target policy. }}
	\label{fig:extend}
\end{figure}
\vspace*{-0.5em}

\textbf{Black-box Poisoning.}
Figure~\ref{fig:nont} demonstrates the performance of \ourmod when the attacker knows the learner's model. But when the learner's model is not available (black-box attack), \ourmod could also work as shown in Figure~\ref{fig:extend}a, where one can see that \emph{black-box poisoning is still effective, although worse than white-box poisoning due to the lack of knowledge}.

\textbf{Hybrid-aim Poisoning.}
The experiments shown in Figure~\ref{fig:nont} assume that the attacker poisons one of the 3 \aimnames, $\obss, \obsa$ or $\obsr$. But if the attacker happens to have access to all of these \aimnames, it can do better by performing a ``hybrid'' poisoning, i.e., at each iteration, evaluate the vulnerability of the algorithm w.r.t. each \aimname with their corresponding attacker power, then choose the one with the highest vulnerability. Figure~\ref{fig:extend}b shows that \emph{adaptive attacking using a hybrid \aimname could significantly outperform attacking using a fixed single \aimname.}



\textbf{Targeted Poisoning.} 
\textit{Baselines.} Although targeted RL poisoning is studied by many works~\citep{huang2019deceptive,rakhsha2020policy}, few of them work for deep RL. 
Despite that no existing work focuses on deep policy-based methods, we transfer the FGSM-based targeted poisoning method proposed for DQN by~\citet{behzadan2017vulnerability} to attack policy-based learners as our baseline. The attacking method is, for each new state, adding a perturbation to it such that the perturbed state is pushed across the decision boundary and towards the target action.
See Appendix~\ref{app:fgsm} for details. 

\textit{Performance.}
Figure~\ref{fig:extend}c and ~\ref{fig:extend}d respectively show the targeted-attack performance comparison on CartPole and LunarLander, where the target policies are both ``always going to the right''. The y-axes show the proportion of target actions among all actions taken by the learner. \emph{\ourmod successfully leads the learner to learn the target policy using all types of \aimnames with relatively small attack power. }
In contrast, FGSM fails to let the learner select the target action in most cases, even with much larger attack power.
The main reason why FGSM works for DQN~\cite{behzadan2017vulnerability} but not for policy-gradient methods is, policy-based agents generally learn stochastic policies, and even though FGSM could perturb the state such that the output probability for the target action is 0.51, the agent will still choose other actions with probability 0.49. Therefore, \emph{FGSM cannot easily perform targeted poisoning against policy-based learners.}

More experiment results are in Appendix~\ref{app:exp_results}. Note that our proposed poisoning method can also be extended to off-policy learners, as discussed in Appendix~\ref{app:off_policy}.

\vspace*{-0.5em}
\section{Conclusion and Discussion}
\vspace*{-0.5em}
\label{sec:conclusion}
In this paper, we 
propose \ourmod, the first generic poisoning algorithm for deep policy-gradient online RL methods, which incorporates heterogeneous poisoning models and does not require any prior knowledge of the environment. 
Although the effectiveness of \ourmod is verified in a wide range of RL algorithms and environments, we acknowledge that poisoning RL in practice is still challenging, due to the relatively high computational burden and the uncertainty of the environments. These challenges are also exciting opportunities for future work.

\section*{Acknowledgments}
This work is supported by National Science Foundation IIS-1850220 CRII Award 030742-00001 and DOD-DARPA-Defense Advanced Research Projects Agency Guaranteeing AI Robustness against Deception (GARD), and Adobe, Capital One and JP Morgan faculty fellowships.

\bibliographystyle{iclr2021_conference}

\appendix
\newpage
{\centering{\Large Appendix: Vulnerability-Aware Poisoning Mechanism for Online RL with Unknown Dynamics}}

\section{Additional Related Works}
\label{app:related}

\textbf{Adversarial Attacks in SL. }
There have been many works studying adversarial attacking and defending in the past decade~\citep{huang2011adversarial,steinhardt2017certified}.
Poisoning, as an important type of adversarial attacking, has been well-studied in the field of machine learning~\citep{biggio2012poisoning,mei2015using}, including deep learning~\citep{munoz2017towards,shafahi2018poison}. \citet{wang2018data} consider online learning, which is similar to ours. However, they focus on supervised learning, where the attacker knows all the data stream, while in our RL setting, the whole training data stream is not available to the attacker (as Challenge I states).

\textbf{Evasion Attacks in RL.}
Recently, adversarial RL is attracting more and more attention, especially evasion attacks at test-time, as summarized by~\citet{chen2019adversarial}.
Most works consider adversarial perturbations on observations, similar to adversarial examples in supervised learning. 
\citet{huang2017adversarial} first show that neural network policies are vulnerable to evasion attacks on states, by generating state perturbation with FGSM.
\citet{lin2017tactics} consider the data-correlation problem in RL and an attacker with limited ability, and propose a strategical attack method, which perturbs input states only under certain conditions.
In addition, \citet{gleave2019adversarial} show that in multi-agent RL, choosing an adversarial policy could also negatively affect the victim agent, which is similar to perturbing a state into a novel one.
There are plenty of following works showing the vulnerability of deep RL policies, even against a limited-power black-box attacker~\cite{xinghua2020minimalistic,Inkawhich2020SnoopingAO}, 
Adversarial attacks can be utilized to train robust policies~\cite{pinto2017robust,pattanaik2018robust}. 
There is also a line of work focusing on adversarial examples in path-finding problems~\cite{xiang2018pca,bai2018adversarial}.
These evasion attacks are created to fool a well-trained policy, but they can not change the policy itself in training time as poisoning does.


\section{A Poisoning Framework for RL}
\label{sec:framework}

In this section, we establish a generic framework of poisoning in online RL, systematically characterizing its challenges and difficulties from multiple perspectives - objective of poisoning, various \aimnames, and attacker's knowledge. 
Our in-depth comparison with the SL allows a thorough understanding of the additional vulnerability of online RL systems compared with the well-understood SL systems. 
Our framework also provides a clear context to correctly position prior works in the literature as well as to compare our work with existing works.
Compared to the poisoning framework described by~\citet{huang2019deceptive}, we provide a solution for unifying these \aimnames in one attack model in this section.

We consider the online learning scenario, where the RL agent (the learner) does not know the dynamics or rewards of the underlying MDP $\mdp$ with state space $\states$, action space $\actions$, transition dynamics $\dynamics$, rewards $\rewards$ and discount factor $\gamma$.

\textbf{Settings and Notations}
In online RL, the learner interacts with the environment and collects observations. 
The learner's algorithm, denoted by $\lalg$, iteratively searches for a policy $\pi$ parametrized by $\theta$, through $K$ interactions with the environment. 
Before learning starts, the learner initializes a policy $\pi_1$. 
At each iteration $k$, the learner uses its previous policy $\pi_{k-1}$ to roll out \textit{observations} $\obs_k$ from the MDP $\mdp$. 
$\obs_k$ is a concatenation of multiple \emph{trajectories}, denoted as $\obs_k = (\boldsymbol{s}_k, \boldsymbol{a}_k, \boldsymbol{r}_k)$, where $\boldsymbol{s}_k=[s_1,s_2,\cdots], \boldsymbol{a}_k=[a_1,a_2,\cdots], \boldsymbol{r}_k=[r_1,r_2,\cdots]]$ are respectively the sequence of states, actions, and rewards in iteration $k$.
Then, with the observations $\obs_k$, the learner updates its policy by attempting to solve $\mathrm{argmax}_{\pi}\lobj(\pi, \pi_{k-1}, \obs_k)$, where $\lobj$ is the objective function. 
The generated policy by the learner's algorithm $\pi_{k} = \lalg(\pi_{k-1}, \obs_k)$\footnote{For algorithms with experience replay, the update can be extended as  $\pi_{k+1} = \lalg(\pi_{k}, \obs_{1:k})$.} does not necessarily achieve the maximization of the objective function.


\begin{procedure}
\DontPrintSemicolon
  \caption{Flow of Online RL Poisoning()}
  \label{proc:flow}
  Learner initializes its initial policy $\pi_{\theta_0}$ \;
  \For{$k=1,\cdots,K$}{
    Learner rollouts observation (trajectories) $\obs_k$ in environment $\mdp$ with current policy $\pi_{k}$ \;
    \tcr{Attacker may poison the observation 
    $\obs_k$ to $\poison{\obs}_k$} \;
    Learner updates its policy: $\pi_{k+1} = \lalg (\pi_{k}, \poison{\obs}_k) \approx \mathrm{argmax}_\pi J(\pi, \pi_{k}, \poison{\obs}_k)$ \;
  }  
\end{procedure}

In this paper, an overhead check sign $\check{\ }$ on a variable always denotes that the variable is poisoned. For example, if the attacker changes a reward $r_t$, then the poisoned reward is denoted as $\poison{r}_t$.

\textbf{Poison Objective.} We use $\aobj$ to denote loss function of the poisoning attack, which the attacker attempts to minimize.
The form of $\aobj$ is determined by its goal, which falls into one of the two categories, \emph{non-targeted} and \emph{targeted} poisoning. 

In \emph{non-targeted poisoning}, the attack poisons a policy $\pi$ to $\poison{\pi}$ to minimize the learner's expected rewards. 
Therefore the {poison objective} $\aobj$ is to minimize the learner's value $\etr(\poison{\pi})$.

In \emph{targeted poisoning}, the attack ``teaches'' the agent to learn a pre-defined target policy $\pi^\dag$. 
Therefore the {poison objective} $\aobj$ is defined as distance\footnote{There are many ways to define the distance between two policies, for instance KL-divergence for stochastic policies~\citep{schulman2015trust}, and average mismatch for deterministic policies~\citep{rakhsha2020policy}.}$(\poison{\pi}, \pi^\dag)$.

Most existing poison RL researches focus on targeted poisoning~\citep{ma2019policy,rakhsha2020policy}, and non-targeted poisoning, although discussed by~\citet{huang2019deceptive}, remains relatively untouched.

\textbf{\aimnames.}
To influence the behaviors of the learner, an attacker could inject poison at multiple locations of the learner's learning process as detailed in Figure~\ref{fig:rl_sl}. 
Part of the reason why poisoning in RL is more challenging than in SL is that it involves more \aimnames of poisoning, some of which adapt with the environment, increasing the uncertainty.


\vspace*{-1em}
\begin{figure}[!htbp]
    \begin{subfigure}[t]{0.45\columnwidth}
      \includegraphics[width=\columnwidth]{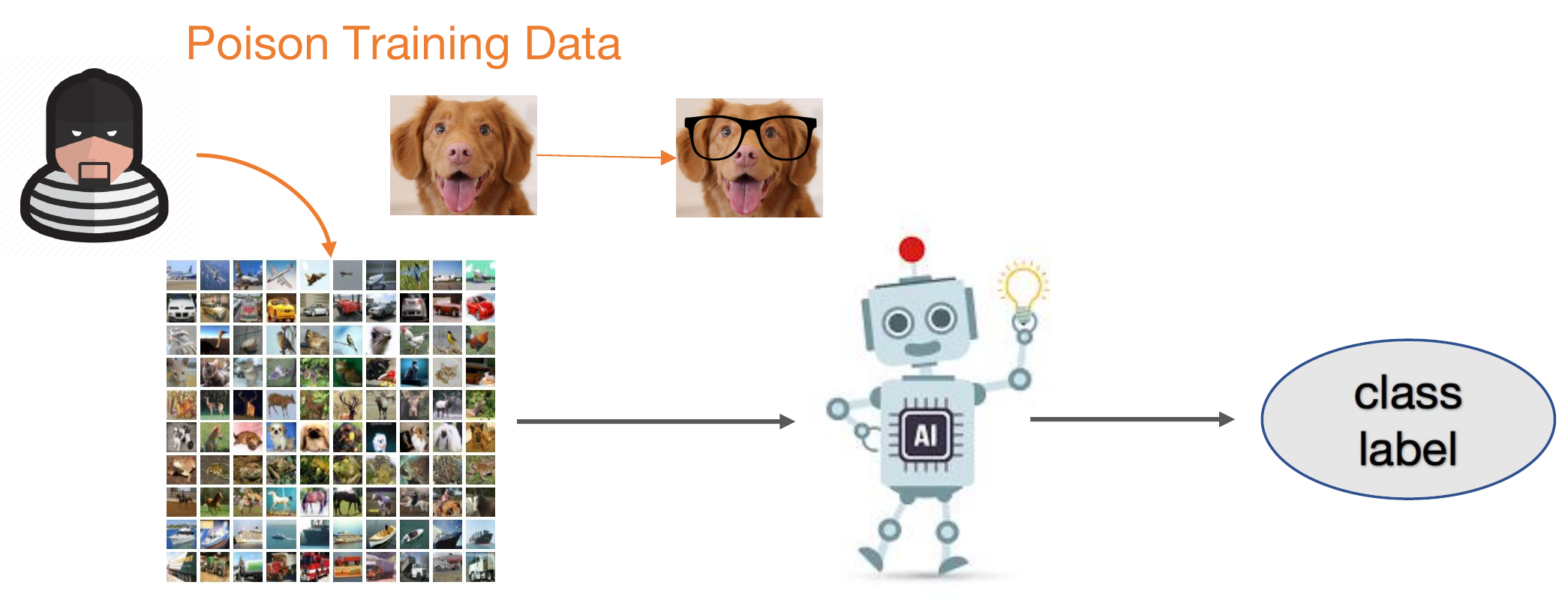}
      \caption{\small{supervised learning}}
      \label{sfig:sl}
    \end{subfigure}
    \hfill
    \begin{subfigure}[t]{0.53\columnwidth}
      \includegraphics[width=\columnwidth]{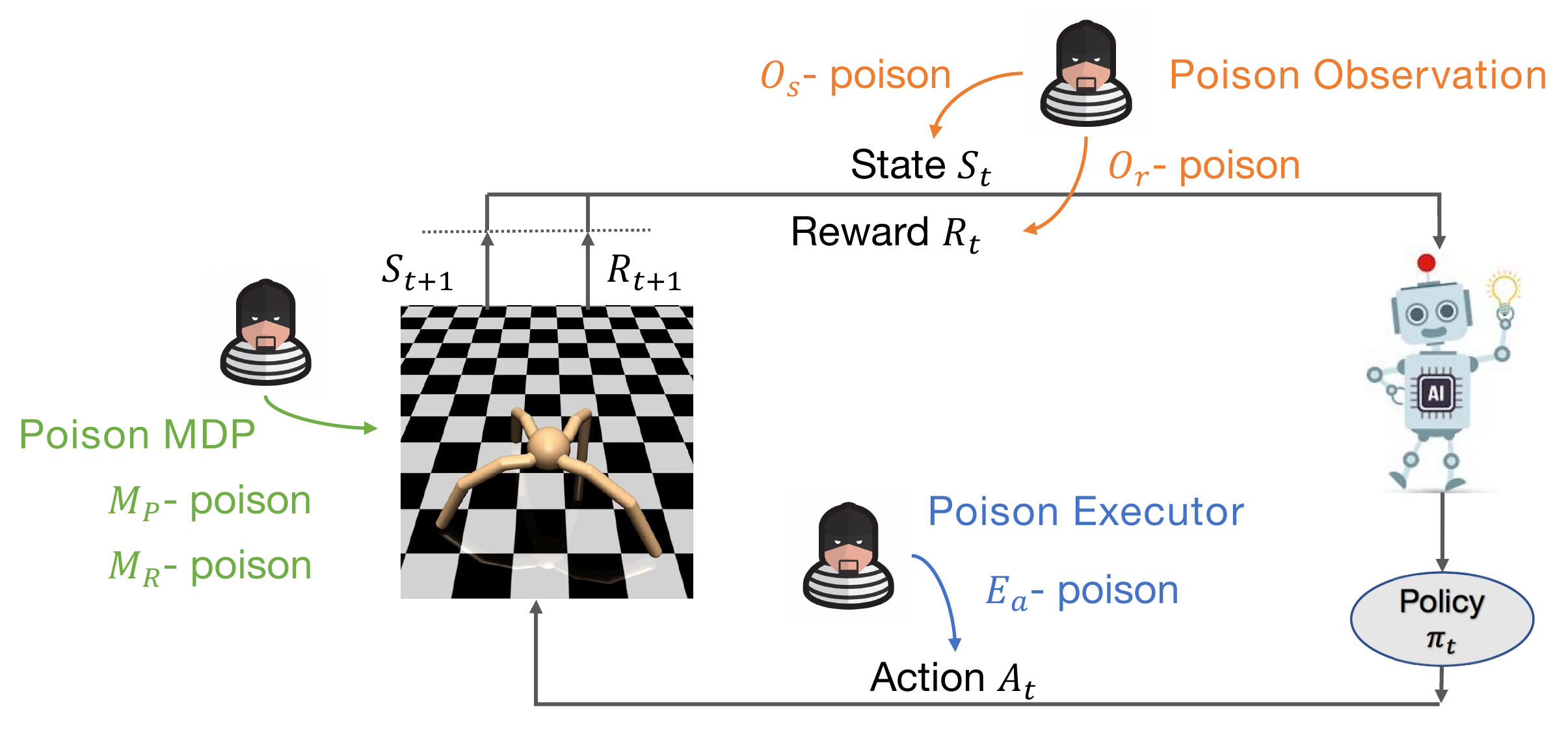}
      \caption{\small{reinforcement learning}}
      \label{sfig:rl}
    \end{subfigure}
    \vspace*{-0.5em}
    \caption{\small{Different \aimnames of poisoning in supervised learning and reinforcement learning.}}
    \label{fig:rl_sl}
\end{figure}
\vspace*{-1em}

\emph{\aimnamettl I -- Poison Observation $(\obs_{\boldsymbol{r}},\obs_{\boldsymbol{s}})$.} 
The attacker could manipulate the observation of the learner, i.e., change $\obs$ into $\poison{\obs}$. This may happen when the attacker is able to intercept the communication between the learner and the environment, similar to the man-in-the-middle attack in cryptography. The attacker could target the rewards, called \textit{$\obs_{\boldsymbol{r}}$-poisoning}, studied by~\citet{huang2019deceptive}; or the states, called \textit{$\obs_{\boldsymbol{s}}$-poisoning}, investigated by~\citet{behzadan2017vulnerability}.


\emph{\aimnamettl II -- Poison MDP $(\mdp_\rewards, \mdp_\dynamics)$.} 
An attack could directly change the MDP (environment) that the learner is interacting with, i.e., change $\mdp$ into $\poison{\mdp}$. For example, a seller could influence the behaviors of customers by changing the prices of products. 
The poison of MDP could be injected at the reward model $\rewards$ or the transition dynamics $\dynamics$, respectively denoted as \textit{$\mdp_\rewards$-poisoning} (studied by \citet{ma2019policy})  and \textit{$\mdp_\dynamics$-poisoning} (studied by \citet{rakhsha2020policy}) . 
The analogy of poison MDP in SL is to manipulate the underlying data distribution of the training data.


\emph{\aimnamettl III -- Poison Executor $\exe_{\boldsymbol{a}}$.} 
The executor of the learner could be poisoned. 
For example, an attacker applies a force to the agent, so that the intended action ``north'' becomes ``northeast''. \citet{pinto2017robust} train a robust RL agent against the executor poisoner. Denote this type of poisoning as \textit{$\exe_{\boldsymbol{a}}$-poisoning}. 
We show in the Appendix~\ref{app:aims} that $\exe_{\boldsymbol{a}}$-poisoning is equivalent to directly changing $\boldsymbol{a}$ stored in the observation $\obs = (\boldsymbol{s}, \boldsymbol{a}, \boldsymbol{r}, \boldsymbol{d})$.


\textbf{Attacker's Knowledge}
At the $k$-th iteration, what an attacker can do depends on its current knowledge set, denoted by $\atknow_k$. $\atknow_k$ could contain the underlying MDP $\mdp$, 
the learner's algorithm $\lalg$, the learner's previous policy models 
$\theta_{1:k-1}$
as well as the previous and current observations 
$\obs_{1:k}$.

An \emph{omniscient attacker} knows everything , i.e, 
$\atknow_k^{(O)} = \{ \mdp, \lalg, \theta_{1:k-1}, \obs_{1:k} \}$.
Most guaranteed policy teaching literature~\citep{rakhsha2020policy,ma2019policy} assume omniscient attacker.
However as motivated in the introduction, it is often unrealistic to exactly know the underlying environment. 
We discuss two more realistic setting where the attacker only has limited knowledge as follows.

A \emph{monitoring attacker} has some information but does not assume knowledge of the underlying MDP $\mdp$,  i.e., $\atknow_k^{(M)} = \{\lalg, \theta_{1:k-1}, \obs_{1:k} \}$. 
This is especially relevant in applications where learner's information is not secure (or even open), or an attacker hacks to steal information from the learner. Monitoring attacker is similar to the white-box attacker in supervised learning.

A \emph{tapping attacker} has very limited knowledge and knows the observations only,  i.e., $\atknow_k^{(T)} = \{ \obs_{1:k} \}$. 
This is widely applicable since the tapping the communication between the learner and the environment is easy. 
Tapping attacker is analogous to the black-box attacker in supervised learning.
\cite{behzadan2017vulnerability} consider a tapping attacker, which observes and manipulates the states but does not know the learner's parameters. 

\section{Target Types of Attacking: More Detailed Explanation}
\label{app:aims}


\begin{figure}[htbp]
    \begin{subfigure}[t]{0.32\columnwidth}
      \includegraphics[width=\columnwidth]{\fighome/examples/obs_attack.pdf}
      \caption{Poison observation $(\obs_{\boldsymbol{r}},\obs_{\boldsymbol{s}})$ }
      \label{sfig:obs-app}
    \end{subfigure}
    \hfill
    \begin{subfigure}[t]{0.32\columnwidth}
      \includegraphics[width=\columnwidth]{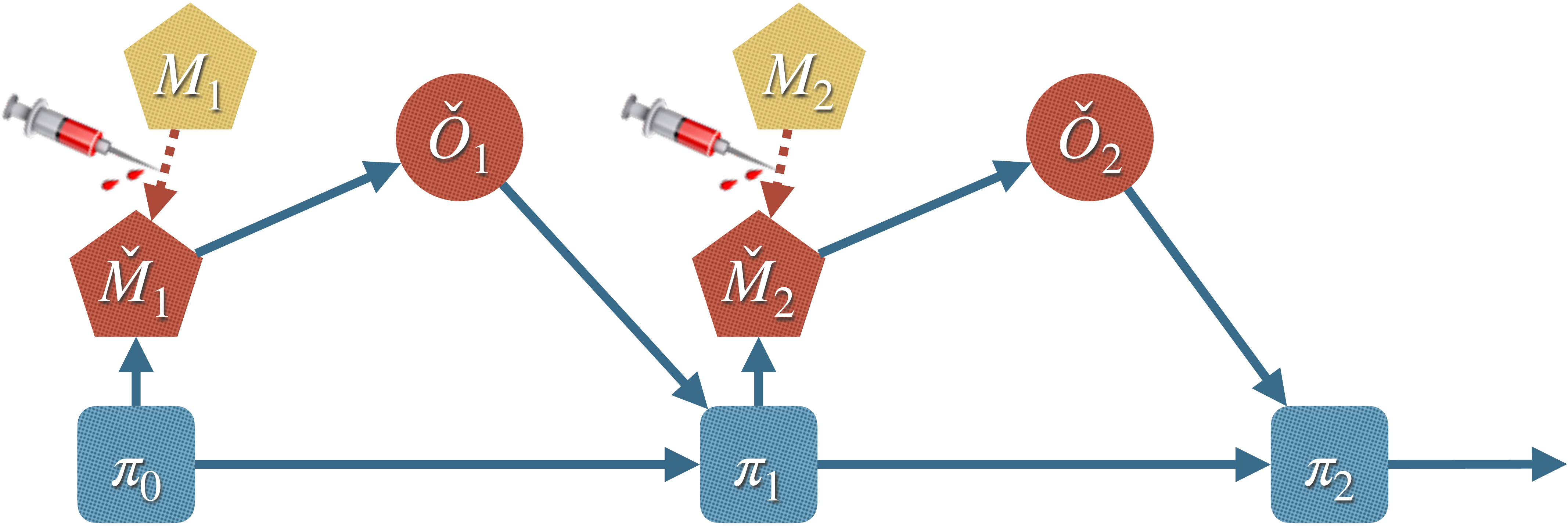}
      \caption{Poison MDP $(\mdp_\rewards, \mdp_\dynamics)$}
      \label{sfig:mdp-app}
    \end{subfigure}
    \hfill
    \begin{subfigure}[t]{0.32\columnwidth}
      \includegraphics[width=\columnwidth]{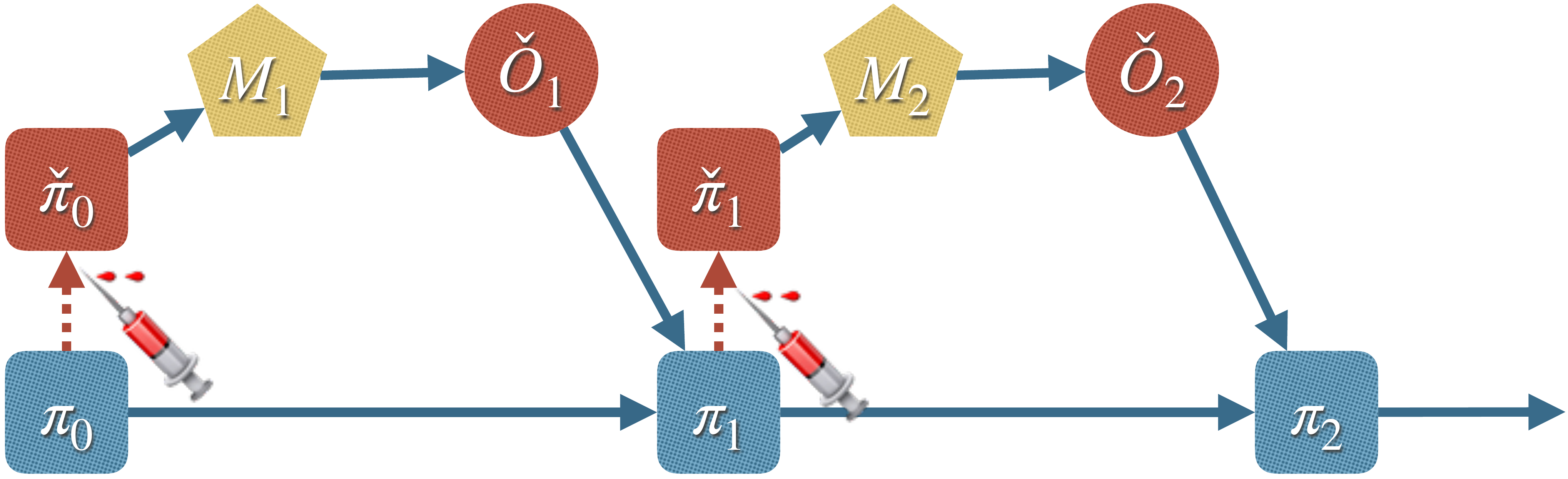}
      \caption{Poison Executor $\exe_{\boldsymbol{a}}$}
      \label{sfig:exe-app}
    \end{subfigure}
    \caption{{\small{The online poisoning vs learning.
    Blue solid lines denote the learning processes, while red dashed lines denote the poisoning processes.
    In iteration $k$, the learner uses its previous policy $\pi_{k-1}$ to roll out observations $\obs_k$ from current MDP $\mdp_k$, then updates its model and policy by $\pi_k = \lalg(\pi_{k-1}, \obs_k) = {\mathrm{argmax}_{\pi}} \lobj(\pi, \pi_{k-1}, \obs)$. The attacker may (a) poison observations after they are generated, (b) poison MDP before the learner generates observations, or (c) poison the policy when it is used to generate observations. Among these three scenarios, $\obs_k$ is always influenced by the attack, thus denoted as $\poison{\obs}_k$.
    }}}
    \label{fig:procedure-app}
\end{figure}

\subsection{Poison Observation}

Consider a deep reinforcement learning algorithm, which uses its old policy to generate a series of observations and updates its policy with the collected observations at every iteration. The observations are stored in a temporary buffer in the form of $(\boldsymbol{s}, \boldsymbol{a}, \boldsymbol{r}, \boldsymbol{d})$, where $\boldsymbol{s}, \boldsymbol{r}, \boldsymbol{d}$ are returned by the environment, and $\boldsymbol{a}$ is produced by the policy itself.

An attacker could stay in the middle between the learner and the environment and falsify $\boldsymbol{s}, \boldsymbol{r}$ returned by the environment before the learner receives them. (It is also possible to alter the terminal state flags $\boldsymbol{d}$, but it is relatively easier for the learner to detect, and its influence to the learner is not as large as $\boldsymbol{s}$ and $\boldsymbol{r}$, so we do not discuss this attack type.) On the other hand, the attacker may also hack the observation buffer to change $\boldsymbol{s}, \boldsymbol{r}$. In both cases, poisoning $\boldsymbol{s}$ and $\boldsymbol{r}$ are called $\obs_{\boldsymbol{s}}$-poisoning and $\obs_{\boldsymbol{r}}$-poisoning respectively. 

Note that in the case of hacking observation buffer, the attacker also has the option to manipulate $\boldsymbol{a}$ in $\obs$. But we do not include this kind of attack in observation poisoning, because not like states and rewards, the actions are taken by the learner, so it is easy for the learner to detect the change of action sequence stored in the buffer. And we will show in Section~\ref{app:exe_poison} that changing $\boldsymbol{a}$ in $\obs$ is similar to the case of executor poisoning. 

\subsection{Poison MDP}

MDP poisoning can be considered as ``changing the reality'' for the learner. For example, an attacker decides to increase the reward for state-action pair $(s,a)$ by $\Delta$ at iteration $t$, it then changes the parameter $R_k(s,a)$ of the environment to $\poison{R}_k(s,a) = R_k(s,a)+\Delta$. As a result, whenever the learner visits $(s,a)$ in the current iteration, it receives $\poison{R}_k(s,a)$ as its reward. 
A more intuitive example is depicted in Figure~\ref{fig:cheese_example}, where we want to train a mouse to find the cheese in a maze. However, if some bad man adds another piece of cheese in the training environment, the mouse is likely to be misled. Then, in the test time, the ``malicious cheese'' is removed, but the mouse still wants to find the malicious cheese instead of the original one.

\begin{figure}[!htbp]
    \begin{subfigure}[t]{0.3\columnwidth}
      \includegraphics[width=\columnwidth]{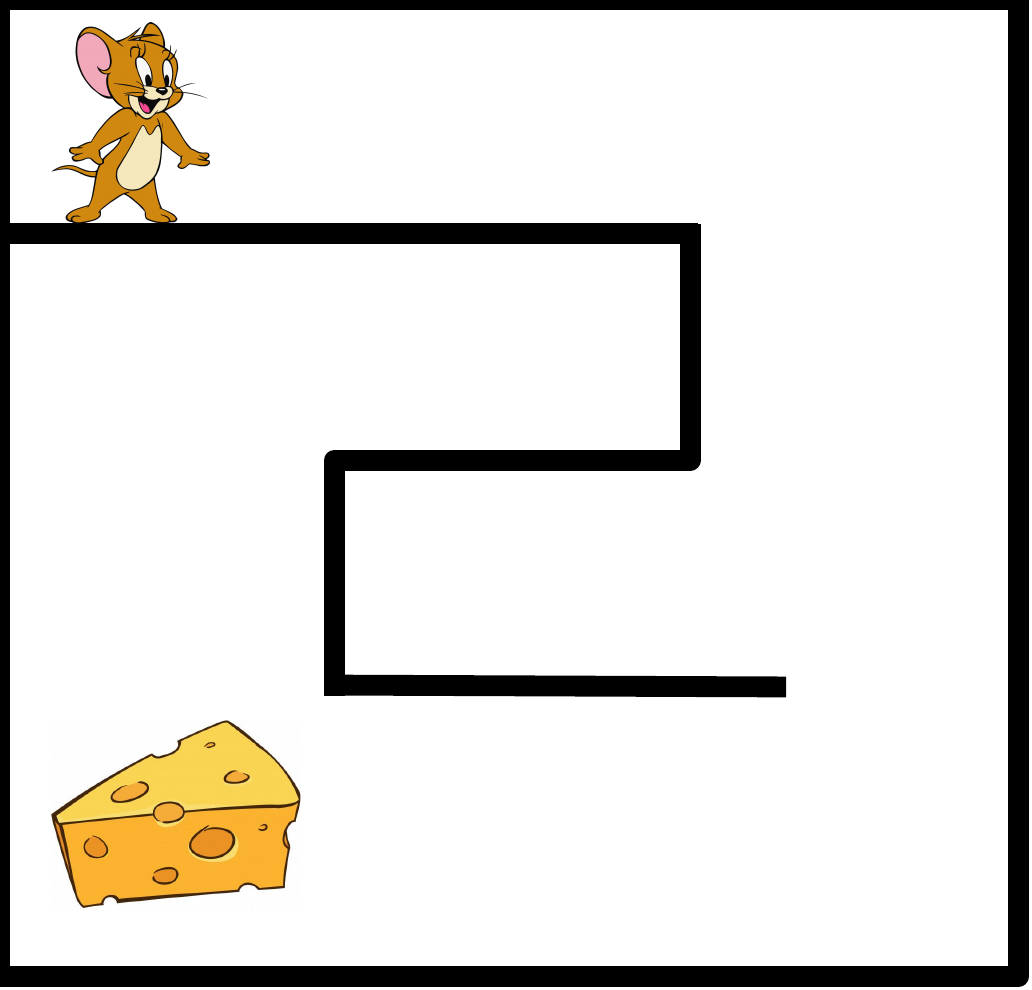}
      \caption{training environment}
      \label{sfig:train}
    \end{subfigure}
    \hfill
    \begin{subfigure}[t]{0.3\columnwidth}
      \includegraphics[width=\columnwidth]{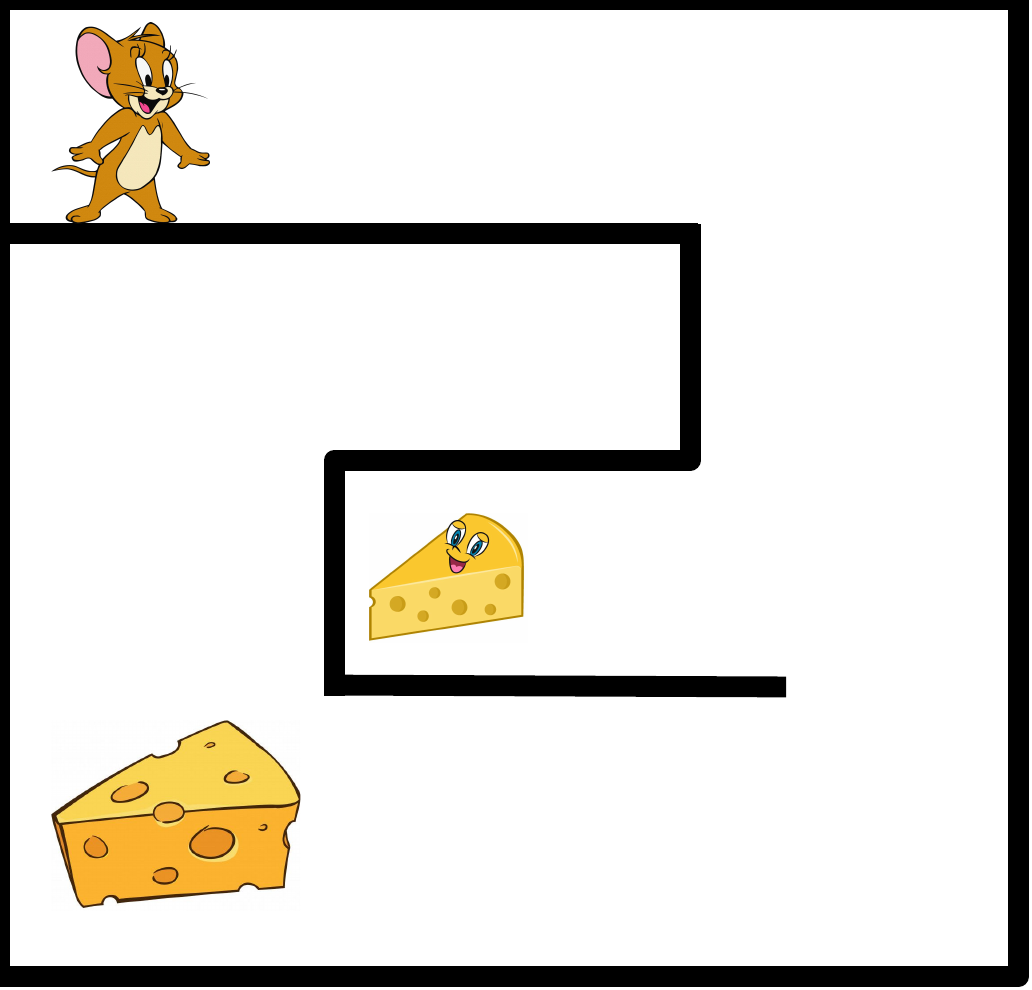}
      \caption{poisoned environment}
      \label{sfig:poison}
    \end{subfigure}
    \hfill
    \begin{subfigure}[t]{0.3\columnwidth}
      \includegraphics[width=\columnwidth]{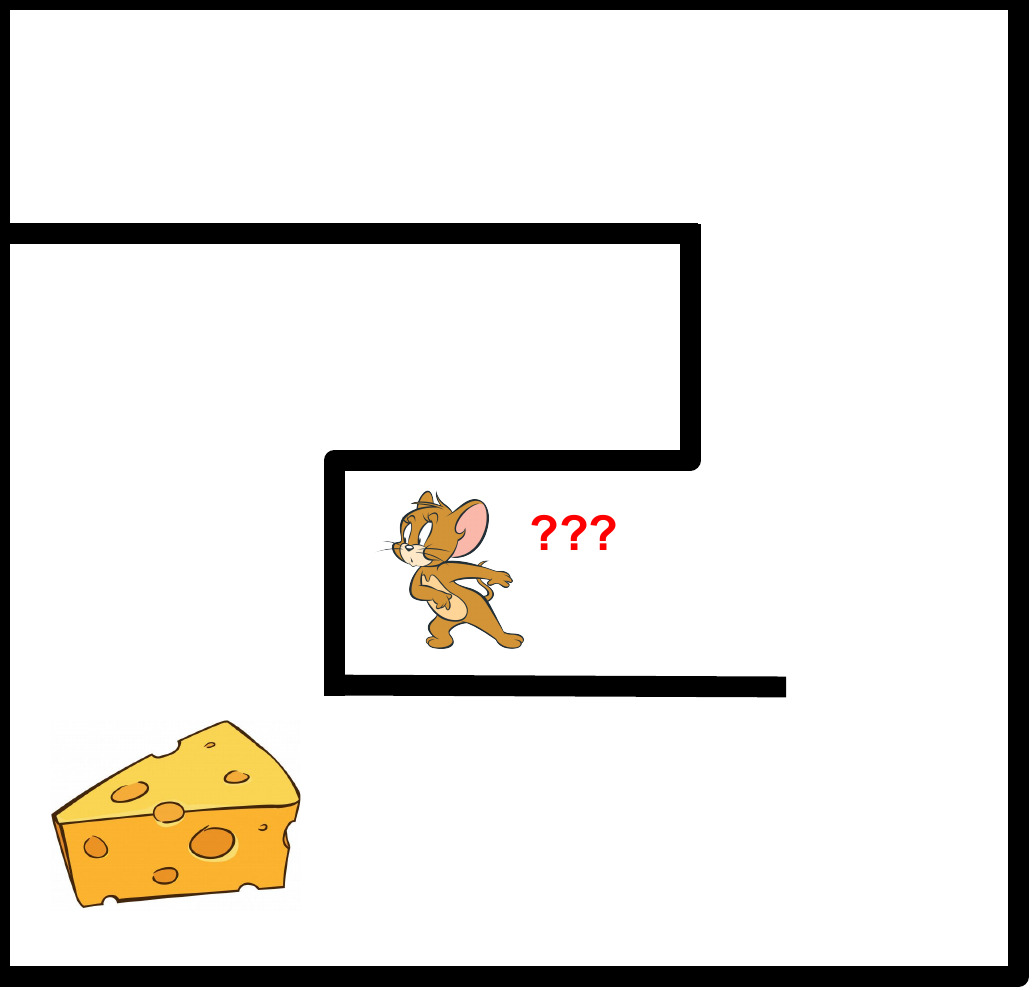}
      \caption{test environment}
      \label{sfig:test}
    \end{subfigure}
    \caption{An intuitive example of MDP poisoning.}
    \label{fig:cheese_example}
\end{figure}

Compared with observation poisoning and executor poisoning, MDP Poisoning is more powerful and more difficult to defend against. An example of MDP poisoning is given by~\citet{rakhsha2020policy}, where the attacker can force the learner to learn a target policy with guarantees. 

\subsection{Poison Executor}
\label{app:exe_poison}
In the online RL process, the learner learns by taking actions and getting feedbacks of the actions. If observation poisoning is viewed as changing the feedbacks, then executor poisoning can be viewed as perturbing the actions taken by the learner. For example, for an auto-driving agent, one can slightly add some force to the steering wheel, so when the agent takes ``steer left 90 degrees'', what actually happens is ``steer left 100 degrees''. In this way, the trained policy is biased.

We now show that the aforementioned attack of changing $\boldsymbol{a}$ in $\obs$ could be converted to executor poisoning. We describe the following two scenarios respectively for these two poisoning attacks and show they have equal effects. For simplicity, we only consider one step of the interaction, which can be simply extended to multiple steps. 

\textbf{(1) Poisoning $\boldsymbol{a}$ in $\obs$.}
For one-step experience, the learner observes state $s$, takes action $a_i$, receives reward $r_i = R(s,a_i)$ and observes a new state $s^\prime_i \sim P(\cdot|s,a_i)$, then the tuple $(s,a_i,r_i,s^\prime_i)$ will be stored in the observation buffer. Now the attacker may change the action $a_i$ to $a_j$, and the tuple becomes $(s,a_j,r_i,s^\prime_i)$. Finally, the learner regard $r_i,s^\prime_i$ and the following observations as caused by $a_j$, instead of $a_i$.

\textbf{(2) Poisoning Executor.}
In the same environment, suppose the learner observes state $s$, and takes action $a_j$, then the attacker conducts executor poisoning and changes $a_j$ to $a_i$, then the reward and the next state will also change to $r_i = R(s,a_i)$ and $s^\prime_i \sim P(\cdot|s,a_i)$. The learner does not know the falsification of action and stores $(s,a_j,r_i,s^\prime_i)$ to the buffer. Finally, the attacker will take $r_i,s^\prime_i$ and the future observations as caused by $a_j$, and updates its policy accordingly.

Based on the descriptions, we can see changing $a_i$ to $a_j$ in the buffer is equivalent to changing $a_j$ to $a_i$ on the executor. If we assume the policy always has non-zero probability on all actions, then for any manipulation on $\boldsymbol{a}$ in $\obs$, it is possible to achieve the same effects by attacking the executor. 
Hence, given the fact that the former attack can be trivially defended, we only discuss executor poisoning in our poisoning framework.


\section{Robustness and Stability of RL Algorithms}


\subsection{Measuring Vulnerability of RL Algorithms}
\label{sec:stable_radius}



Definition~\ref{def:stable_radius_update} in Section~\ref{sec:problem} measures the stability of one update based on policy discrepancy. But it is not obvious whether the rewards change drastically due to the attacks.
Proposition~\ref{prop:reward_drop} provides a guarantee on the performance of the poisoned policy $\poison{\pi}^\prime$, compared with the un-poisoned new policy $\pi^\prime$. 
\begin{proposition}
\label{prop:reward_drop}
  For an update of stochastic policy $\pi^\prime = \lalg(\pi, \obs)$, if its $\delta$-stability radius is $\varepsilon$ with the total variance measure, then any poisoning effort smaller than $\varepsilon$ on $\victim$ will cause the expected total reward drop ($\etr(\pi^\prime) - \etr(\poison{\pi}^\prime)$) by no more than
  \begin{equation}
     \frac{4 \delta^2\gamma \max_{s,a} |A_{\pi^\prime}(s,a)|}{(1-\gamma)^2} 
    + 2\delta \sum_{s\in\states} g_{\pi^\prime}(s) \max_a  A_{\pi^\prime} |(s,a)| 
  \end{equation}
  where $\gamma$ is the discount factor, $g_{\pi}(s):=\sum_{t=0}^{\infty} \gamma^t P(s_t=s|\pi)$ is the discounted visitation frequency, and $A$ is the advantage function, i.e., $A_\pi(s,a) = Q_\pi(s,a) - V_\pi(s)$.
\end{proposition}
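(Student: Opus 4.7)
The plan is to reduce this to the standard TRPO-style performance difference bound (Schulman et al., 2015), using the stability radius to control the policy discrepancy in total variation. The hypothesis that the poisoning effort is at most $\varepsilon$, combined with the definition of $\delta$-stability radius (Definition~\ref{def:stable_radius_update}), immediately gives the bound $d^{\max}_{TV}[\pi^\prime \,\|\, \poison{\pi}^\prime] \le \delta$. So the problem is now to translate a bound on the per-state total-variation gap between $\pi^\prime$ and $\poison{\pi}^\prime$ into a bound on $\etr(\pi^\prime) - \etr(\poison{\pi}^\prime)$.

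The key identity is the standard one: for any two policies $\pi$ and $\tilde{\pi}$,
\begin{equation*}
\etr(\tilde{\pi}) - \etr(\pi) = \sum_{s} g_{\pi}(s) \sum_a \tilde{\pi}(a\mid s)\, A_\pi(s,a) \pm \frac{4 \alpha^2 \gamma \max_{s,a}|A_\pi(s,a)|}{(1-\gamma)^2},
\end{equation*}
where $\alpha = d^{\max}_{TV}[\pi\,\|\,\tilde{\pi}]$. I would instantiate this identity with $\pi \leftarrow \pi^\prime$ and $\tilde{\pi} \leftarrow \poison{\pi}^\prime$, so that $\alpha \le \delta$ by the first step, yielding the second term of the claimed bound.

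The remaining work is to bound the linear term $\sum_s g_{\pi^\prime}(s)\sum_a \poison{\pi}^\prime(a|s) A_{\pi^\prime}(s,a)$. Here I would exploit the standard fact $\sum_a \pi^\prime(a|s) A_{\pi^\prime}(s,a) = 0$ to rewrite this as $\sum_s g_{\pi^\prime}(s)\sum_a\bigl(\poison{\pi}^\prime(a|s) - \pi^\prime(a|s)\bigr)A_{\pi^\prime}(s,a)$. Applying H\"older at each state with $\sum_a |\poison{\pi}^\prime(a|s) - \pi^\prime(a|s)| = 2 D_{TV}(\pi^\prime(\cdot|s), \poison{\pi}^\prime(\cdot|s)) \le 2\delta$ then yields a per-state bound of $2\delta\max_a|A_{\pi^\prime}(s,a)|$, giving the first term of the claimed bound after summing against $g_{\pi^\prime}$.

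The main obstacle is really bookkeeping rather than any genuinely new idea: one must carefully verify that the TRPO-style surrogate identity applies with the discounted visitation measure $g_{\pi^\prime}$ (as defined in the proposition) rather than a normalized state distribution, and that the $\max$-TV notion in Definition~\ref{def:stable_radius_update} is the correct one to feed into the Schulman bound (which is stated in terms of $D^{\max}_{TV}$). I would therefore begin the write-up by stating the Schulman identity in the exact form used and checking definitional alignment, and only then chain together the three inequalities above.
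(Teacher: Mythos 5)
Your proposal matches the paper's own proof essentially step for step: both invoke the definition of the $\delta$-stability radius to get $D^{\max}_{TV}[\pi'\,\|\,\poison{\pi}']\le\delta$, apply Theorem~1 of \citet{schulman2015trust} with the unnormalized discounted visitation frequency $g_{\pi'}$ to obtain the $\frac{4\delta^2\gamma\max_{s,a}|A_{\pi'}(s,a)|}{(1-\gamma)^2}$ term, and bound the surrogate's linear term via $\sum_a \pi'(a|s)A_{\pi'}(s,a)=0$ plus a per-state H\"older/TV estimate giving $2\delta\sum_s g_{\pi'}(s)\max_a|A_{\pi'}(s,a)|$. The only discrepancy is that you label which term of the stated bound comes from which step in the opposite order, a purely cosmetic slip.
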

Proposition~\ref{prop:reward_drop} shows that if the attack power is within the stability radius, the reward of the poisoned policy will not be influenced too much.
which also explains the motivation behind our proposed vulnerability-aware attack.
The proof is in Appendix~\ref{app:proof_drop}.

With Definition~\ref{def:stable_radius_update}, the one-update stability measure, we are able to formally define the stability radius of an RL algorithm w.r.t. an MDP.
\begin{definition}[Stability Radius w.r.t an MDP]
\label{def:stable_radius_mdp}
The $\delta$-stability radius of an algorithm $\lalg$ in an MDP $\mdp$ is defined as the minimal stability radius of all observations drawn from the MDP, and all possible policies in policy space $\Pi$.
If $\lalg$ is on-policy, then $\stable(\lalg, \mdp) = \min_{\pi\in\Pi, \obs\sim\pi} \stable(\lalg, \pi, \obs) $;
if $\lalg$ is off-policy, then $\stable(\lalg, \mdp) = \min_{\pi\in\Pi, \obs \sim \overline{\pi}} \stable(\lalg, \pi, \obs) $, where $\overline{\pi}$ is the behavior policy;
\end{definition}

\subsection{Robustness Radius}
\label{app:robust_radius}

Different with poisoning, test-time evasion (adversarial examples) misleads the agent by manipulating the states only, since the agent no longer learns from interactions and feedbacks.
Note that although ~\citet{gleave2019adversarial} propose an attack called ``adversarial policy'', the perturbation does not happen in the policy, but still happens in the input states (observations) of the agent.

To study how robust a trained policy is, we define the robustness radius with regard to both a single state and the whole environment.

\begin{definition}[Robustness Radius of Policy w.r.t. a State]
The robustness radius of a deterministic policy $\pi$ on a state $s$ is defined as the minimal perturbation of $s$ which changes the output action, i.e.,
\begin{equation}
  \robust(\pi, s) = \inf_{\varepsilon} \{ \exists \check{s} \in \states \cap \mathcal{B}_{\varepsilon}(s) \quad s.t. \quad \pi(s) \neq \pi(\check{s}) \}
\end{equation}
Similarly, for any $0 < \delta <1$, the $\delta$-robustness radius of a stochastic policy $\pi$ on a state $s$ is defined as the minimal perturbation of $s$ which makes the output action distribution disagrees with the original $\pi(s)$ with probability more than $\delta$, i.e.,
\begin{equation}
  \robust(\pi, s)_{\delta} = \inf_{\varepsilon} \{ \exists \check{s} \in \states \cap \mathcal{B}_{\varepsilon}(s)  \text{ s.t. }  d \big(\pi(\cdot | s) || \pi(\cdot|\check{s})\big) > \delta \}
\end{equation}
where $d(\cdot|\cdot)$ could be any distance measure between two distributions.
\end{definition}

\textbf{Remark.}
If we regard the policy as a classifier which ``classifies'' a state to an action, then
the robustness radius of policy defined above is analogous to the robustness radius of classifiers defined by ~\citet{wang2017analyzing}, with an extension to stochastic predictions.


\begin{definition}[Robustness Radius w.r.t an MDP]
The ($\delta$-)robustness radius of a policy $\pi$ in an MDP $\mdp$ is defined as the maximal robustness radius of all states, i.e., $\robust(\pi, \mdp) = \min_{s\in\states} \robust(\pi, s)$
\end{definition}

\textbf{Remarks.}
(1) A deterministic policy is robust against any state perturbation smaller than $\robust(\pi, \mdp)$.

(2) For a stochastic policy, if its $\delta$-robustness radius in an $\mdp$ is $\varepsilon$, then any state perturbation within $\varepsilon$ will cause the expected total reward drop by no more than
\begin{equation}
  (\frac{2 \delta \gamma}{(1-\gamma)^2}  + 2 \delta ) \max_{s,a} |R(s,a)|,
\end{equation}
which is proven by Theorem 5 in \citet{zhang2020robust}.

\subsection{Vulnerability Comparison: Difference Between SL and RL}
\label{app:compare}

To shed some light on understanding adversarial attacks in RL, we compare SL and RL in terms of their vulnerability to poisoning and adversarial examples.

At test time, a policy network receives states as input, and returns probabilities of choosing each actions as output; a value network receives states (or state-action pairs) as input, and returns the corresponding value as the output. Thus, test-time RL systems are very similar to SL systems, as one can view the policy networks as classification networks, and value networks as regression networks.
However, the key difference between evasion in RL and evasion in SL is, data samples are not independent in RL. A single adversarial example in SL test dataset may cause at most one misclassification instance, whereas an adversarial example in RL may case a drastic change of the gained rewards (e.g., by leading the agent to a "devastating" or "absorbing" state).

At training time, SL systems and RL systems are significantly different, as Figure~\ref{fig:rl_sl} shows. Even when the supervised learner also learns from data streams in an online manner, the training data are independent with the learner's classifier. In contrast, the distribution of training data samples changes as the learner updates its policy. 
Poisoning attacks against an SL system could alter the decision boundary, so that the learner makes wrong decisions for certain data samples.
For an RL system, poisoning attacks could (1) alter the decision boundary so that the learner chooses bad actions for certain states, and also (2) change the following observations and interactions due to a different selection of action. 

In summary, an adversarial attacker may cause higher damages on RL systems than on SL systems, with the same power and budget. But it does not suggests attacking RL systems is easier than attacking SL systems.
As every coin has two sides, the high uncertainty of the environment may help an attack reduce the learner's reward, but may also lead the learner to gain higher reward in the future (as shown in Section~\ref{sec:exp}).
Therefore, it is more challenging to successfully attack RL systems than SL systems with a specific goal.

\subsection{Proof of Proposition~\ref{prop:reward_drop}}
\label{app:proof_drop}
\begin{proof}

According to the definition of $\delta$-stability radius, for any poisoning effort within $\varepsilon$, the poisoned policy satisfies $D_{TV}^{\max}[\pi^\prime || \poison{\pi}^\prime] \leq \delta$ (assume total variance $D_{TV}$ is the distance measure between policy distributions). We are interested in the difference of expected rewards $\etr(\pi^\prime) - \etr(\poison{\pi}^\prime)$.

Define $L_{\pi^\prime}(\poison{\pi}^\prime) = \etr(\pi^\prime) + \sum_{s\in\states} g_{\pi^\prime}(s) \sum_{a\in \actions} \poison{\pi}^\prime (a|s) A_{\pi^\prime} (s,a)$, where $g$ is the discounted state visitation frequencies, i.e.,
$$
g_{\pi^\prime}(s):=P\left(s_{0}=s|\pi^\prime\right)+\gamma P\left(s_{1}=s|\pi^\prime\right)+\gamma^{2} P\left(s_{2}=s|\pi^\prime\right)+\cdots.
$$ 

Since $D_{TV}^{\max}[\pi^\prime || \poison{\pi}^\prime] \leq \delta$, follow Theorem 1 in paper~\citep{schulman2015trust}, we can get
\begin{equation}
   | \etr(\poison{\pi}^\prime) - L_{\pi^\prime}(\poison{\pi}^\prime) | \leq \frac{4 \delta^2\gamma \max_{s,a} |A_{\pi^\prime}(s,a)|}{(1-\gamma)^2}.
\end{equation} 

So we have
\begin{equation}
  | \etr(\poison{\pi}^\prime) - \etr(\pi^\prime) - \sum_{s\in\states} g_{\pi^\prime}(s) \sum_{a\in \actions} \poison{\pi}^\prime(a|s)  A_{\pi^\prime} (s,a) | \leq \frac{4 \delta^2\gamma \max_{s,a} |A_{\pi^\prime}(s,a)|}{(1-\gamma)^2},
\end{equation}
which can be transformed to 
\begin{equation}
   \etr(\pi^\prime) - \etr(\poison{\pi}^\prime) \leq \frac{4 \delta^2\gamma \max_{s,a}| A_{\pi^\prime}(s,a)|}{(1-\gamma)^2} - \sum_{s\in\states} g_{\pi^\prime}(s) \sum_{a\in \actions} \poison{\pi}^\prime(a|s)  A_{\pi^\prime} (s,a).
\end{equation}

We upper bound the term $- \sum_{s\in\states} g_{\pi^\prime}(s) \sum_{a\in \actions} \poison{\pi}^\prime(a|s)  A_{\pi^\prime} (s,a)$ as below. 

\begin{equation}
  \begin{aligned}
    & - \sum_{s\in\states} g_{\pi^\prime}(s) \sum_{a\in \actions} \poison{\pi}^\prime(a|s)  A_{\pi^\prime} (s,a) \\
    =& \sum_{s\in\states} g_{\pi^\prime}(s) \big(- \sum_{a\in \actions} \poison{\pi}^\prime(a|s)  A_{\pi^\prime} (s,a) \big) \\
    =& \sum_{s\in\states} g_{\pi^\prime}(s) \big(- \sum_{a\in \actions} \poison{\pi}^\prime(a|s)  A_{\pi^\prime} (s,a) + \sum_{a\in \actions} \pi^\prime(a|s)  A_{\pi^\prime} (s,a) - \sum_{a\in \actions} \pi^\prime(a|s)  A_{\pi^\prime} (s,a) \big) \\
    =& \sum_{s\in\states} g_{\pi^\prime}(s)  \big(\sum_{a\in \actions}A_{\pi^\prime} (s,a) ( \pi^\prime(a|s) - \poison{\pi}^\prime(a|s)) \big) - \sum_{a\in \actions} \pi^\prime(a|s)  A_{\pi^\prime} (s,a) \big) \\
    \leq& \sum_{s\in\states} g_{\pi^\prime}(s) \big( 2 \delta  \max_{a\in\actions} |A_{\pi^\prime} (s,a)| - \sum_{a\in \actions} \pi^\prime(a|s)  A_{\pi^\prime} (s,a) \big) \\
    =& 2\delta \sum_{s\in\states} g_{\pi^\prime}(s) \max_{a\in\actions}  |A_{\pi^\prime} (s,a)|
  \end{aligned}
\end{equation}

Combining the above results, we obtain
\begin{equation}
  \etr(\pi^\prime) - \etr(\poison{\pi}^\prime) \leq \frac{4 \delta^2\gamma \max_{s,a} |A_{\pi^\prime}(s,a)|}{(1-\gamma)^2} + 2\delta \sum_{s\in\states} g_{\pi^\prime}(s) \max_a  |A_{\pi^\prime} (s,a)| 
\end{equation}

\end{proof}

\section{Algorithm}
\label{app:algo}
Assume the learner's policy $\pi$ is parametrized by $\theta$.

\textbf{How to solve the projected gradient descent.} To solve (\ref{opt:pg_rlx}), assume $\frac{\partial{\theta_{k}}}{\partial{\boldsymbol{\check{r}}}}$ exists, one can use projected gradient descent to update $\boldsymbol{r}$ by using the chain rule: 
$$\frac{\partial \etr_{\pi_{\theta_{k-1}}}(\pi_{\theta_{k}})}{\partial \boldsymbol{\check{r}}} = \frac{\partial \etr_{\pi_{\theta_{k-1}}}(\pi_{\theta_{k}})}{\partial\theta_{k}} \frac{\partial{\theta_{k}}}{\partial{\boldsymbol{\check{r}}}}. $$ 

For Vanilla Policy Gradient (VPG) whose update rule is, 
$\theta_{k} = \theta_{k-1} + \alpha \nabla_{\theta_{k-1}} \aetr(\pi_{\theta_{k-1}}, \boldsymbol{r})$, where
\begin{equation}
  \nabla_{\theta_{k-1}}  \aetr(\pi_{\theta_{k-1}}, \boldsymbol{r}) = \frac{1}{N} \sum_{i=1}^{N} \big( (\sum_{t=1}^T \nabla_{\theta_{k-1}} \log \pi_{\theta_{k-1}} (a_t^{(i)}|s_t^{(i)}) ) (\sum_{t=1}^T r_t^{(i)}) \big),
\end{equation}
we can derive 
\begin{equation}
\nabla_{\theta} \etr_{\pi_{\theta_{k-1}}}(\pi_{\theta_{k}})
\approx \frac{1}{N} \sum_{i=1}^{N} \big( \Pi_{t=1}^T \frac{\pi_{\theta_{k}}(a_{t}^{(i)}|s_{t}^{(i)})}{\pi_{\theta_{k-1}}(a_{t}^{(i)}|s_{t}^{(i)})}  (\sum_{t=1}^T \nabla_{\theta_{k}} \log \pi_{\theta_{k}}(a_t^{(i)}|s_t^{(i)}))  (\sum_{t=1}^T \gamma^{t^\prime-1} r_t^{(i)} ) \big).
\end{equation} 
and
\begin{equation}
	[\nabla_r \theta_k]_t = \sum_{j=1}^t \nabla_{\theta_{k-1}} \log \pi_{\theta_{k-1}} (a_j|s_j) \gamma^{t-j}
\end{equation}

Although $\frac{\partial{\theta_{k}}}{\partial{\boldsymbol{\check{r}}}}$ has a closed-form expression for simple learners like VPG, analytically computing how the poisoned reward influences the model is challenging for more complicated learners like PPO, whose update rule is an argmax function. Therefore, we use the Direct Gradient Method proposed by~\citet{yang2017generative} to approximate the gradient by
\begin{equation}
	\frac{\partial \etr_{\pi_{\theta_{k-1}}}(\pi_{\theta_{k}})}{\partial \boldsymbol{\check{r}}} = \frac{\etr_{\pi_{\theta_{k-1}}}(\lalg(\pi_{\theta_{k-1}}, \boldsymbol{r}+\Delta)) - \etr_{\pi_{\theta_{k-1}}}(\lalg(\pi_{\theta_{k-1}}, \boldsymbol{r}))}{\Delta}
\end{equation}

To show the concrete poisoning process, we assume $\victim=\obsr$. Then
Algorithm~\ref{alg:wb_poison} shows the detailed procedure of \ourmod with a non-targeted goal and a white-box attacker.

\begin{algorithm}[!ht]
\DontPrintSemicolon
\caption{Non-targeted White-box \ourmod with $\obs_{\boldsymbol{r}}$-Poisoning}
\label{alg:wb_poison}
  \Input{total iterations of learning $K$; poisoning power $\epsilon$; poisoning budget $C$; attacker's learning rate $\beta$; maximum computing iterations $J$; distribution distance measure $d$}
  Initialize policy discrepancies as an empty list $\discset=\emptyset$ \;
  Initialize the number of already poisoned iterations $c=0$ \;
  Initialize value network $V_{\valpara}$ \;
  \For{$k=1,\cdots,K$}{
    \If{$c > C$}{Break}
    Get the current observation $\obs_k$ and the learner's policy model $\theta_{k-1}$ \;
    Fit the value function: $\valpara \gets \mathrm{argmin}_{\valpara} \sum_{\tau_i \in \obs_k} \sum_{t=1}^T ( V_{\valpara}(s_t^{(i)}) - \sum_{t^\prime=t}^T \gamma^{t^\prime-t} r_t^{(i)} )^2 $ \;
    Imitate learner's update with the clean rewards $\theta_k, \etr \gets$ \ImUp{$\theta_{k-1}, \boldsymbol{r}$} \;
    Initialize $\poison{\boldsymbol{r}}$ as the original $\boldsymbol{r}$ in $\obs_k$ \;
    Set $\etr_0 = \etr_c$ \;
    \For{$j=1, \cdots, J$}{
      \For{$i=1,\cdots,N$ and $t=1,\cdots,T$}{
          Copy $\boldsymbol{r}^\prime \gets \poison{\boldsymbol{r}}$, and add a small value $\Delta$ to $[r^\prime]_t^{(i)}$\;
          Imitate learner's update with poisoned rewards $\theta^\prime, \etr^\prime \gets$ \ImUp{$\theta_{k-1}, \boldsymbol{r}^\prime$} \;
          Compute the direct gradient: $\frac{\partial \etr}{\partial \poison{\boldsymbol{r}}_t^{(i)} } \gets \frac{\etr^\prime - \etr_{j-1}}{\Delta}$ \;
      }
      Update the poisoned reward: $\poison{\boldsymbol{r}} \gets \Pi_{\mathcal{B}_\power(\boldsymbol{r})} \big (\poison{\boldsymbol{r}} - \beta \frac{\partial \etr}{\partial \poison{\boldsymbol{r}} } \big)$ \;
      Imitate learner's update with poisoned rewards $\poison{\theta}_k, \etr_j \gets$ \ImUp{$\theta_{k-1}, \boldsymbol{r}^\prime$} \;
      \If{$(\etr_j - \etr_{j-1})$ converges}{Break}
    }
    Compute $\widehat{\disc}_k = \frac{1}{NT} \sum_{s_t^{(i)}} d(\pi_{\theta_{k}} || \pi_{\poison{\theta}_k})$ and add $\widehat{\disc}_k$ to $\discset$ \;
    \If{$\widehat{\disc}_k$ is is no lower than the $(1-\frac{C-c}{K-k})$-quantile of $\discset$}{
      Attack: replace $\boldsymbol{r}$ with $\check{\boldsymbol{r}}$ in $\obs_k$ and send it back to the learner \;
      $c \gets c+1$ \;
    }
  }
  \myproc{\ImUp{$\theta, \boldsymbol{r}$}}{
      Perform an update with the learner's algorithm $\theta^\prime \gets \lalg(\theta, \boldsymbol{r})$ \;
      Compute the attacker's objective $\etr \gets \frac{1}{NT} \sum_{\tau_i \in \obs_k} \sum_{t=1}^T (\frac{\pi_{\theta^\prime}(a_t^{(i)} | s_t^{(i)})}{\pi_{\theta}(a_t^{(i)} | s_t^{(i)})}) (\sum_{t^\prime=t}^T \gamma^{t^\prime-t} r_t^{(i)} - V_{\valpara}(s_t^{i}))$ \;
      \KwRet $\theta^\prime, \etr$ \;
  }
\end{algorithm}

\textbf{For Targeted Poisoning. } 
In line 16, instead of computing $\frac{\partial \etr}{\partial \poison{\boldsymbol{r}}_t^{(i)} }$, we compute $\frac{\partial \text{dist}(\theta^\prime, \theta^\dag)}{\partial \poison{\boldsymbol{r}}_t^{(i)} }$.

\textbf{For Black-box Poisoning. } 
In line 7, instead of getting the learner's policy model $\theta_{k-1}$, we train a policy with the same algorithm of the learner $\tilde{\theta}_{k-1} \leftarrow \lalg(\tilde{\theta}_{k-2}, \obs_{k-1})$.


\section{Theoretical Interpretation of the Bi-level Optimization Problem}
In this section, we discuss the problem relaxation made in Section~\ref{subsec:how2attack}. For notation simplicity, we focus on the case $\victim=\obsr$.

\subsection{Problem Forms}
\label{app:opt}

Suppose the attacker has budget $\budget=K$. Then following the format of Problem~\eqref{opt:general}, we could define the original $\obs_{\boldsymbol{r}}$-poisoning optimization problem for the poisoning at the $k$-th iteration as Problem~\eqref{opt:pg_optimal}. Note that for notation simplicity, we use the policy parameter $\theta$ to denote the policy $\pi_\theta$, so that $\etr(\theta) := \etr(\pi_\theta)$.
\begin{align*}
\begin{aligned}[t]
\label{opt:pg_optimal}
& & &\argmin{\pr_k, \cdots, \pr_K} & & \sum_{j=k}^{K} \etr(\theta_{j}) \\ 
& & &\text{s.t.} & & \theta_{j} = \lalg(\theta_{j-1}, \pr_{j}) \\ 
& & &  & & \|\pr_{j}-\npr_{j}\| \leq \epsilon , \quad \forall j=k,\cdots,K
\end{aligned}
\tag{$P*$}
\end{align*}
Note that we only optimize on $\poison{\boldsymbol{r}}_k$ to $\poison{\boldsymbol{r}}_K$, since previous $k-1$ decisions have already been made at the $k$-th iteration.

Although an omniscient attacker is able to predict all the observations and solve Problem~\eqref{opt:pg_optimal} directly, for the non-omniscient attacker that we focus on, Problem~\eqref{opt:pg_optimal} is not solvable because of the unknown observations in iteration $k+1$ to $K$. 
Hence, as discussed in Section~\ref{sec:problem}, we relax ~\eqref{opt:pg_optimal}, a multi-variable optimization problem, to $(K-k+1)$ sequential single-variable optimization problems 
\begin{align*}
\begin{aligned}[t]
\label{opt:pg_rlx_k}
& & &\argmin{\pr_k} & & \etr(\theta_{k}) \\ 
& & &\text{s.t.} & & \theta_k = \lalg(\theta_{k-1}, \pr_{k}) \\ 
& & &  & & \|\pr_{k}-\npr_{k}\| \leq \epsilon 
\end{aligned}
\tag{$P_k$}
\end{align*}

\begin{align*}
\begin{aligned}[t]
\label{opt:pg_rlx_k1}
& & &\argmin{\pr_{k+1}} & & \etr(\theta_{k+1}) \\ 
& & &\text{s.t.} & & \theta_{k+1} = \lalg(\theta_{k}, \pr_{k+1}) \\ 
& & &  & & \|\pr_{k+1}-\npr_{k+1}\| \leq \epsilon 
\end{aligned}
\tag{$P_{k+1}$}
\end{align*}
and all the way to
\begin{align*}
\begin{aligned}[t]
\label{opt:pg_rlx_lk}
& & &\argmin{\pr_{K}} & & \etr(\theta_{K}) \\ 
& & &\text{s.t.} & & \theta_{K} = \lalg(\theta_{K-1}, \pr_{K}) \\ 
& & &  & & \|\pr_{K}-\npr_{K}\| \leq \epsilon 
\end{aligned}
\tag{$P_{K}$}
\end{align*}
where $\npr_{j} \sim \theta_{j-1}, \forall j=k, \cdots, K$.

Note that $\npr_k$ is already generated and known for all the above problems. When solving Problem ($P_j$), $\npr_j$ and $\theta_j$ are also known (determined by $\theta_{j-1}$).



\subsection{Tightness of Problem Relaxation}
\label{app:relax}




Problem~\eqref{opt:pg_optimal} is a $(K-k+1)$-variable optimization problem with $(K-k+1)$ equality constraints and $(K-k+1)$ inequality constraints. And Problem~\eqref{opt:pg_rlx_k}, $\cdots$, \eqref{opt:pg_rlx_lk} are $(K-k+1)$ single-variable optimization problems respectively with 1 equality constraints and 1 inequality constraints. The two sets of problems are naturally equivalent if $\theta_k, \cdots, \theta_K$, as well as the constraints are independent. However, due to the online learning process (Figure~\ref{fig:procedure-app}), $\theta_{k+1}$ and $\npr_{k+1}$ are all dependent on $\theta_k$ and $\pr_k$, which makes the relaxation not necessarily optimal.


We call the optimal solution to Problem~\eqref{opt:pg_optimal} as $(\pr_k^*, \cdots, \pr_K^*)$, and the optimal solutions to Problem~\eqref{opt:pg_rlx_k} to \eqref{opt:pg_rlx_lk} as $\pr_k^{\#}, \cdots, \pr_K^{\#}$. 


For simplicity, we assume the environment and the policies are all deterministic. So that the value of observed reward $\npr_j$ is a deterministic function of $\theta_{j-1}$.

We claim that the relaxation does not change the feasibility as stated in Proposition~\ref{prop:feasible}.
\begin{proposition}
\label{prop:feasible}
	$(\pr_k^{\#}, \cdots, \pr_K^{\#})$ is a feasible solution to Problem~\eqref{opt:pg_optimal}. 
\end{proposition}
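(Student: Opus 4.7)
The plan is to prove feasibility by a direct induction on the iteration index $j$, showing that the sequence $(\pr_k^{\#}, \ldots, \pr_K^{\#})$ produced by solving the single-variable relaxations $(P_k), (P_{k+1}), \ldots, (P_K)$ in order satisfies every constraint of the joint problem \eqref{opt:pg_optimal}. The key observation is that the two constraint families in \eqref{opt:pg_optimal} — the update rule $\theta_{j} = \lalg(\theta_{j-1}, \pr_{j})$ and the budget $\|\pr_{j}-\npr_{j}\| \leq \epsilon$ — are exactly the constraints imposed individually by each $(P_j)$, provided the quantity $\npr_j$ in \eqref{opt:pg_optimal} is interpreted consistently with how it arises in $(P_j)$, namely as the clean reward generated under the policy $\theta_{j-1}$ actually reached by the preceding poisoned iterations.

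For the base case $j=k$, there is nothing to check beyond observing that $\theta_{k-1}$ is the same known quantity in both \eqref{opt:pg_optimal} and $(P_k)$, so $\npr_k$ (a deterministic function of $\theta_{k-1}$ under the deterministic-policy assumption) is identical in the two problems; hence feasibility of $\pr_k^{\#}$ for $(P_k)$ gives $\|\pr_k^{\#}-\npr_k\|\le\epsilon$ and $\theta_k = \lalg(\theta_{k-1}, \pr_k^{\#})$, which are the first two constraints of \eqref{opt:pg_optimal}. For the inductive step, suppose $(\pr_k^{\#},\ldots,\pr_{j-1}^{\#})$ satisfy the first $j-k$ equality and inequality constraints of \eqref{opt:pg_optimal}. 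Then the policy $\theta_{j-1}$ induced in \eqref{opt:pg_optimal} coincides with the $\theta_{j-1}$ used to define $(P_j)$, because both are obtained by applying $\lalg$ along the same chain of already-chosen poisoned rewards. Consequently the clean reward $\npr_j$ seen in \eqref{opt:pg_optimal} agrees with the $\npr_j$ appearing in $(P_j)$, and the feasibility of $\pr_j^{\#}$ for $(P_j)$ immediately yields the two constraints at index $j$ of \eqref{opt:pg_optimal}.

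The main (and really only) subtlety is the consistency-of-$\npr_j$ argument: one must argue that the unpoisoned reward stream referenced by the joint problem is itself a function of the realized policy history, and that the sequential relaxation respects this history. This is where the determinism assumption stated just above the proposition is used; without it, $\npr_j$ would be a random object and the identification would require an almost-sure or distributional statement. Under the stated assumption, however, the identification is immediate and the induction closes cleanly, giving feasibility of $(\pr_k^{\#},\ldots,\pr_K^{\#})$ for \eqref{opt:pg_optimal}. I expect the entire argument to be short — essentially a bookkeeping induction — with no hard estimates; the proposition is a structural statement, not a quantitative one, and notably it says nothing about optimality, which would require a separate (and presumably harder) analysis.
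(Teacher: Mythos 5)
Your argument is correct and is essentially the paper's own proof: an induction identifying that, because policies and rewards are deterministic functions of the realized (poisoned) history, the clean reward $\npr_j$ appearing in the joint problem coincides with the one defining each relaxed problem $(P_j)$, so feasibility for each $(P_j)$ transfers directly to Problem~\eqref{opt:pg_optimal}. Your remark on the role of determinism matches the paper's own caveat about the stochastic case requiring probabilistic constraints.
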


\begin{proof}
Note that $\npr_k$ is known and the same for both \eqref{opt:pg_optimal} and \eqref{opt:pg_rlx_k}.

(1) $\pr_k^{\#}$ is feasible to \eqref{opt:pg_optimal} because it satisfies $\|\pr_k^{\#} - \npr_k \| \leq \power$.

(2) For Problem~\eqref{opt:pg_optimal}, with $\theta_k = \lalg(\theta_{k-1}, \pr_k^{\#})$, $\npr_{k+1}$ is the same with the $\npr_{k+1}$ in Problem~\ref{opt:pg_rlx_k1}. Since the optimal solution to \eqref{opt:pg_rlx_k1}, $\pr_{k+1}^{\#}$, satisfies $\| \pr_{k+1}^{\#} - \npr_{k+1} \| \leq \power$, $\pr_{k+1}^{\#}$ is also feasible to \eqref{opt:pg_optimal}.

(3) By induction, $\{ \pr_j^{\#} \}_{j=k}^K$ all satisfy the constraints in \eqref{opt:pg_optimal} at the same time, so $(\pr_k^{\#}, \cdots, \pr_K^{\#})$ is a feasible solution to Problem~\eqref{opt:pg_optimal}. 
\end{proof}

Note that if the environment or the policy is stochastic, then $\npr_j$ is a random variable sampled from some distribution defined by $\theta_{j-1}$. In this case, the constraints for Problem~\eqref{opt:pg_optimal} should be $\mathrm{Pr}(\|\pr_{j}-\npr_{j}\| \leq \epsilon) \geq t, \forall j=k,\cdots,K$, where $t\in[0,1]$ is a threshold probability. Then, with appropriate $t$, Proposition~\ref{prop:feasible} also holds.

So far we have shown that the relaxation is feasible, so that the attacker will not plan for a non-feasible poisoning with \ourmod. Next, we discuss the optimality of the relaxation.

We first make the following assumptions.

\emph{Assumption 1}: the learner's update function $\lalg(\theta, \pr)$ is differentiable w.r.t. $\pr$ and $\theta$.

\emph{Assumption 2}: the environment and the policies are all deterministic, and the reward $\npr$ generated by a policy $\pi_\theta$ is differentiable w.r.t. $\theta$ (i.e. $R(s,\pi_\theta(s)$ is differentiable w.r.t. $\theta$). 

\begin{proposition}
	If Assumption 1 and 2 hold,
	the necessary condition for $(\pr_k^{\#}, \cdots, \pr_K^{\#})$ being an optimal solution to Problem~\eqref{opt:pg_optimal} is, for all $j=k,\cdots,K$,
	\begin{equation}
		\frac{\partial \etr(\theta_j)}{\partial \theta_j} \frac{\partial \theta_j}{\partial \pr_j}|_{\pr_j=\pr_j^{\#}} =
		\sum_{j^\prime=j+1}^K
		\frac{\partial \etr(\theta_{j^\prime})}{\partial \theta_{j^\prime}} \frac{\partial \theta_{j^\prime}}{\partial \theta_{j^\prime-1}} \cdots \frac{\partial \theta_{j+1}}{\partial \theta_{j}} \frac{\partial \theta_{j}}{\partial \pr_j}|_{\pr_j=\pr_j^{\#}}
	\end{equation}
\end{proposition}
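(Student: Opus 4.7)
The plan is to derive the necessary condition by substituting the equality constraints into the objective of Problem~\eqref{opt:pg_optimal} and then comparing the first-order (KKT) necessary conditions of the original and the relaxed formulations.

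First, I would use the recursion $\theta_j=\lalg(\theta_{j-1},\pr_j)$ to express each $\theta_j$ explicitly as a function of $\pr_k,\dots,\pr_j$, reducing Problem~\eqref{opt:pg_optimal} to a constrained optimization in the free variables $(\pr_k,\dots,\pr_K)$ alone, with objective $F(\pr_k,\dots,\pr_K)=\sum_{j'=k}^{K}\etr(\theta_{j'})$. Under Assumption~1 an iterated chain rule gives, for any $j'\ge j$,
\[
\frac{\partial \theta_{j'}}{\partial \pr_j}=\left(\prod_{i=j+1}^{j'}\frac{\partial \theta_i}{\partial \theta_{i-1}}\right)\frac{\partial \theta_j}{\partial \pr_j},
\]
so differentiating $F$ with respect to $\pr_j$ decomposes cleanly into a ``current iteration'' piece (the LHS of the proposition) and a ``future iterations'' piece (the RHS).

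Second, I would write the first-order necessary condition for $(\pr_k^{\#},\dots,\pr_K^{\#})$ to be optimal for Problem~\eqref{opt:pg_optimal}. Assuming an interior optimum, $\partial F/\partial \pr_j|_{\pr_j=\pr_j^{\#}}=0$ for every $j$. I would then pair this with the first-order condition for the relaxed problem~\eqref{opt:pg_rlx_k}: at an interior optimum of $P_j$ the LHS of the claimed equation, which equals the gradient of the single-iteration objective $\etr(\theta_j)$ in $\pr_j$, vanishes on its own. Substituting back, the original-problem condition forces the future-iteration sum (the RHS) to vanish as well, so the asserted identity then holds trivially as $0=0$ at an interior optimum.

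The main obstacle will be Assumption~2's implicit dependence of the observations $\npr_{j'}$ on earlier decisions: since $\npr_{j'}$ is a rollout under $\theta_{j'-1}$ and the latter depends on $\pr_j$, the inequality constraints $\|\pr_{j'}-\npr_{j'}\|\le\power$ for $j'>j$ couple across iterations in Problem~\eqref{opt:pg_optimal} but not in the separated problems $P_{j'}$. When a constraint is binding one must carry through a full Lagrangian with multipliers $\mu_{j'}$ and invoke complementary slackness; the extra cross-iteration derivative terms that arise in the original KKT system but are absent from each relaxed KKT system must then be tracked so that the asserted equality is recovered. Restricting to interior solutions sidesteps this bookkeeping and yields the cleanest proof; the boundary case follows by the same chain-rule differentiation together with the additional multiplier accounting.
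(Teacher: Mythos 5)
There is a genuine gap here: your argument only establishes the identity at an \emph{interior} optimum, where, as you yourself observe, it degenerates to $0=0$. But in this poisoning setting the power constraints $\|\pr_j-\npr_j\|\leq\epsilon$ are generically active (an attacker with a reward-minimizing objective uses its full power; if the optimum were interior the attack gradient would already vanish), so the case you treat is the vacuous one and the case you set aside is the entire content of the proposition. The paper's proof never assumes interiority: it writes the full Lagrangians, with slack variables, for the joint problem and for each relaxed problem, invokes stationarity plus complementary slackness for both, and then \emph{combines} the two KKT systems. Two things happen in that combination that your proposal does not carry out. First, the multiplier terms $2\lambda_j(\pr_j^{\#}-\npr_j)$ from the relaxed problems are used to eliminate the corresponding terms in the joint system, which is what turns the condition into a non-trivial alignment statement rather than ``both sides are zero'' (in the paper's $K=2$ derivation the conclusion in fact appears as proportionality with a scalar $\zeta$, obtained precisely from this elimination). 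Second, the joint stationarity condition contains extra cross-iteration terms that your chain-rule decomposition of the objective does not see at all: because the future clean rollouts $\npr_{j'}$ are generated by $\pi_{\theta_{j'-1}}$, the future constraints depend on $\pr_j$ through $\theta$, contributing terms of the form $2\lambda_{j'}^\prime(\pr_{j'}^{\#}-\npr_{j'})\frac{\partial \npr_{j'}}{\partial \theta_{j'-1}}\cdots\frac{\partial\theta_j}{\partial \pr_j}$ (and analogous terms through the dependence of the sequentially-chosen $\pr_{j'}^{\#}$ on $\theta_{j'-1}$). Showing how these cancel against the relaxed-problem optimality conditions so that the final condition involves only the ``$\theta$-chain'' products is the crux of the paper's argument.

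Your final sentence — that the boundary case ``follows by the same chain-rule differentiation together with the additional multiplier accounting'' — is therefore not a harmless remark to be deferred; it is the proof. Identifying the obstacle (cross-iteration coupling of the constraints through $\npr_{j'}$) is correct and matches the paper's own discussion, but asserting that the bookkeeping works out, without exhibiting the two Lagrangian systems and the cancellation, leaves the proposition unproved in exactly the regime where it says something. To repair the proposal, drop the interiority assumption, set up the KKT conditions for Problem (P*) and for each $(P_j)$ with multipliers and slack variables as the paper does for $K=2$, and perform the elimination explicitly; be prepared for the conclusion to emerge as a collinearity/proportionality condition between the two gradient expressions rather than a literal equality.
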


\begin{proof}
Consider the case $K=2$ for simplicity, and the results naturally extend to a larger $K$. 

At iteration 1, Problem~\eqref{opt:pg_optimal} becomes
\begin{align*}
\begin{aligned}[t]
\label{opt:p0}
& & &\argmin{\pr_1, \pr_2} & & \etr(\theta_1) + \etr(\theta_2) \\ 
& & &\text{s.t.} & & \theta_1 = \lalg(\theta_0, \pr_1) \\ 
& & &  & & \theta_2 = \lalg(\theta_1, \pr_2) \\ 
& & &  & & \|\pr_1-\npr_1 \| \leq \epsilon \\
& & &  & & \|\pr_2-\npr_2 \| \leq \epsilon 
\end{aligned}
\tag{$P0$}
\end{align*}
where $\theta_0$ and $r_1$ are known.

The relaxed problems are 
\begin{align*}
\begin{aligned}[t]
\label{opt:p1}
& & &\argmin{\pr_1} & & \etr(\theta_1)\\ 
& & &\text{s.t.} & & \theta_1 = \lalg(\theta_0, \pr_1) \\ 
& & &  & & \|\pr_1-\npr_1 \| \leq \epsilon 
\end{aligned}
\tag{$P1$}
\end{align*}
where $\theta_0$ and $r_1$ are known,
and
\begin{align*}
\begin{aligned}[t]
\label{opt:p2}
& & &\argmin{\pr_2} & & \etr(\theta_2)\\ 
& & &\text{s.t.} & & \theta_2 = \lalg(\theta_1, \pr_2) \\ 
& & &  & & \|\pr_2-\npr_2 \| \leq \epsilon 
\end{aligned}
\tag{$P2$}
\end{align*}
where $\theta_1$ and $\pr_2$ are determined by $\pr_1$, the solution to~\eqref{opt:p1}.

Suppose $\pr_1^{\#}$ and $\pr_2^{\#}$ are the optimal solutions to ~\eqref{opt:p1} and ~\eqref{opt:p2}. And we discuss the necessary condition for $\pr_1^{\#}$ and $\pr_2^{\#}$ being optimal to \eqref{opt:p0}.

We can rewrite ~\eqref{opt:p1} as 
\begin{align*}
\begin{aligned}[t]
& & &\argmin{\pr_1} & & \etr(\lalg(\theta_0, \pr_1))\\ 
& & & \text{s.t.} & & \|\pr_1-\npr_1 \|^2 - \epsilon^2 + y_1^2 = 0 
\end{aligned}
\end{align*}
And the Lagrange function of the above problem is
\begin{equation}
	L(\pr_1, \lambda_1, y_1) = \etr(\lalg(\theta_0, \pr_1)) + \lambda_1 (\|\pr_1-\npr_1 \|^2 - \epsilon^2 + y_1^2)
\end{equation}

The necessary conditions for $\pr_1$ being optimal are

\begin{align}
  &\frac{\partial \etr (\lalg(\theta_0, \pr_1))}{\partial \pr_1} + \frac{\partial \lambda_1 (\|\pr_1-\npr_1 \|^2 - \epsilon^2 + y_1^2)}{\partial \pr_1} = 0 \label{eq:1} \\
  &\|\pr_1-\npr_1 \|^2 - \epsilon^2 + y_1^2 = 0 \label{eq:2} \\
  &\lambda_1 y_1 = 0 \label{eq:3}
\end{align} 
for some $\lambda_1$ and $y_1$.

Similarly, for Problem~\eqref{opt:p2}, the necessary conditions of optimality are

\begin{align}
  &\frac{\partial \etr (\lalg(\theta_1, \pr_2))}{\partial \pr_2} + \frac{\partial \lambda_2 (\|\pr_2-\npr_2 \|^2 - \epsilon^2 + y_2^2)}{\partial \pr_2} = 0 \label{eq:4} \\
  &\|\pr_2-\npr_2 \|^2 - \epsilon^2 + y_2^2 = 0 \label{eq:5} \\
  &\lambda_2 y_2 = 0 \label{eq:6}
\end{align} 
for some $\lambda_2$ and $y_2$.

Since $\pr_1^{\#}$ and $\pr_2^{\#}$ are the optimal solutions to ~\eqref{opt:p1} and ~\eqref{opt:p2},  $\pr_1^{\#}$ and $\pr_2^{\#}$ satisfy Equation~\eqref{eq:1} $\sim$ ~\eqref{eq:6}.

Expanding ~\eqref{eq:1} and ~\eqref{eq:4}, we get
\begin{align}
  &\frac{\partial \etr}{\partial\theta_1}\frac{\partial\theta_1}{\partial\pr_1} |_{\pr_1 = \pr_1^{\#}} + 2\lambda_1 (\pr_1^{\#} - \npr_1) = 0 \label{eq:1_a} \\
  &\frac{\partial\etr}{\partial\theta_2}\frac{\partial\theta_2}{\partial\pr_2} |_{\pr_2 = \pr_2^{\#}} + 2\lambda_2 (\pr_2^{\#} - \npr_2) = 0 \label{eq:4_a} 
\end{align}

For Problem~\eqref{opt:p0}, the Lagrange is 
\begin{equation}
	L(\pr_1, \pr_2 \lambda_1^\prime, \lambda_2^\prime, y_1^\prime, y_2^\prime) 
	= \etr(\lalg(\theta_0, \pr_1)) + \lambda_1^\prime (\|\pr_1-\npr_1 \|^2 - \epsilon^2 + (y_1^\prime)^2)
	+ \etr(\lalg(\theta_2, \pr_2)) + \lambda_2^\prime (\|\pr_2-\npr_2 \|^2 - \epsilon^2 + (y_2^\prime)^2)
\end{equation}

And the necessary conditions for $\pr_1$, $\pr_2$ being optimal are
\begin{align}
  &\frac{\partial \etr (\lalg(\theta_0, \pr_1))}{\partial \pr_1} + \frac{\partial \etr (\lalg(\theta_1, \pr_2))}{\partial \pr_1} 
  +\frac{\partial \lambda_1^\prime (\|\pr_1-\npr_1 \|^2 - \epsilon^2 + (y_1^\prime)^2)}{\partial \pr_1}
  +\frac{\partial \lambda_2^\prime (\|\pr_2-\npr_2 \|^2 - \epsilon^2 + (y_2^\prime)^2)}{\partial \pr_1} = 0 \label{eq:7} \\
  &\frac{\partial \etr (\lalg(\theta_1, \pr_2))}{\partial \pr_2} + \frac{\partial \lambda_2^\prime (\|\pr_2-\npr_2 \|^2 - \epsilon^2 + (y_2^\prime)^2)}{\partial \pr_2} = 0 \label{eq:8} \\
  &\|\pr_1-\npr_1 \|^2 - \epsilon^2 + (y_1^\prime)^2 = 0 \label{eq:9} \\
  &\|\pr_2-\npr_2 \|^2 - \epsilon^2 + (y_2^\prime)^2 = 0 \label{eq:10} \\
  &\lambda_1^\prime y_1^\prime = 0 \label{eq:11} \\
  &\lambda_2^\prime y_2^\prime = 0 \label{eq:12}
\end{align} 
for some $\lambda_1^\prime, \lambda_2^\prime, y_1^\prime, y_2^\prime$ (not the same with $\lambda_1, \lambda_2, y_1, y_2$).

Expanding ~\eqref{eq:7} and ~\eqref{eq:8}, we get
\begin{align}
  &\frac{\partial \etr}{\partial\theta_1}\frac{\partial\theta_1}{\partial\pr_1} |_{\pr_1 = \pr_1^{\#}} 
  + \frac{\partial \etr}{\partial\theta_2}( \frac{\partial\theta_2}{\partial\theta_1}\frac{\partial \theta_1}{\partial \pr_1} + \frac{\partial \theta_2}{\partial \pr_2} \frac{\partial\pr_2}{\theta_1} )\frac{\partial\theta_1}{\partial\pr_1} |_{\pr_1 = \pr_1^{\#}} 
  + 2\lambda_1^\prime (\pr_1^{\#} - \npr_1) 
  + 2\lambda_2^\prime (\pr_2^{\#} - \npr_2)\frac{\partial \pr_2}{\partial \theta_1}\frac{\partial \theta_1}{\partial \pr_1}|_{\pr_1 = \pr_1^{\#}} = 0 \label{eq:7_a} \\
  &\frac{\partial\etr}{\partial\theta_2}\frac{\partial\theta_2}{\partial\pr_2} |_{\pr_2 = \pr_2^{\#}} + 2\lambda_2^\prime (\pr_2^{\#} - \npr_2) = 0 \label{eq:8_a} 
\end{align} 

Combining ~\eqref{eq:1_a}, ~\eqref{eq:4_a}, ~\eqref{eq:7_a} and ~\eqref{eq:8_a}, we obtain 
\begin{equation}
\label{eq:cond}
	\frac{\partial \etr}{\partial \theta_2} \frac{\partial \theta_2}{\partial \theta_1} \frac{\partial \theta_1}{\partial \pr_1} |_{\pr_1 = \pr_1^{\#}}
	= \zeta \frac{\partial \etr}{\partial\theta_1}\frac{\partial\theta_1}{\partial\pr_1} |_{\pr_1 = \pr_1^{\#}}
\end{equation}
for some $\zeta$,
which is the necessary condition for $\pr_1^{\#}$ and $\pr_2^{\#}$ being optimal to Problem~\eqref{opt:p0}. 

Intuitively, this condition implies that the gradient of $\etr(\theta_1)$ w.r.t. $\pr_1^{\#}$ (the RHS) should be aligned with the gradient of $\etr(\theta_2)$ w.r.t. $\pr_1^{\#}$ (the LHS), without considering the influence of $\pr_1$ to $\pr_2$. 
That is, although $\pr_1$ influence $\theta_1$, $\theta_1$ influences both $\theta_2$ and $\pr_2$ (because $\pr_2$ is generated by $\pi_{\theta_1}$), LHS does not include $\frac{\partial \etr}{\partial \theta_2} \frac{\partial \theta_2}{\partial \pr_2} \frac{\partial \pr_2}{\partial \theta_1} \frac{\partial \theta_1}{\partial \pr_1}$, which makes \eqref{eq:cond} computable in many cases.

However, for the setting of our \ourmod (monitoring attacker for online RL), 
the attacker at iteration $k=1$ does not know the observed reward $\npr_2$ of iteration $k=2$, which prevent the agent from conducting the optimal attack. 
But ~\eqref{eq:cond} could help the attacker verify whether a past poison is likely to be optimal for the iterations so far. For example, if the learner is using VPG, then at iteration $k=2$, the attacker can test whether its previous poison $\pr_1$ did a good job in minimizing $\etr(\theta_1)+\etr(\theta_2)$ by evaluating whether $(I + \nabla_{\theta_1}^2 \etr(\theta_1)) \nabla_{\theta_2} \etr(\theta_2)$ is equal to $\nabla_{\theta_1} \etr(\theta_1)$.

\end{proof}


\section{Experiment Settings and Additional Results}
\label{app:exp}

\subsection{Detailed Experiment Settings}
\label{app:exp_setting}

\textbf{Network Architecture.}
For all the learners, we use a two-layer policy network with $Tanh$ as the activation function, where each layer has 64 nodes. PPO, A2C and ACKTR also have an additional same-sized critic network. We implement VPG and PPO with PyTorch, and the implementation of A2C and ACKTR are modified from the project by~\citet{pytorchrl}. 

\textbf{Hyper-parameters.}
In all experiments, the discount factor $\gamma$ is set to be 0.99. We run VPG and PPO for 1000 episodes on every environment, and update the policy after every episode. For A2C and ACKTR, we use 16 processes to collect observations simultaneously, and update policy every 5 steps (i.e., each observation $\obs$ has 80 $(\bs,\ba,\br)$ tuples); learning last for 80000 steps in total.

\textbf{Distance Measure for Perturbation} 
The definition of total effort function $\effort(\cdot)$ plays an important role of understanding the attack power. Since states, actions and rewards are in different forms and scales, we define $\effort$ differently for various \aimnames. Also, note that $\obs$ is a concatenation of state-action-reward tuples, and its length could vary in different iterations, so we normalize over the length of observation.

For $\victim=\obss$, we define the total effort as 
$$
\effort(\obss, \poison{\obss}) = \frac{1}{\sqrt{|\obss|}} \sum_{s\in\obss} \| s - \poison{s} \|_2.
$$

For $\victim=\obsa$, if the action space is continuous, we define the total effort as 
$$
\effort(\obsa, \poison{\obsa}) = \frac{1}{\sqrt{|\obsa|}} \sum_{a\in\obsa} \| a - \poison{a} \|_2,
$$
and if the action space is discrete, we define the total effort as
$$
\effort(\obsa, \poison{\obsa}) = \frac{1}{\sqrt{|\obsa|}} \sum_{a\in\obsa} \mathbbm{1} (a \neq \poison{a}).
$$
For example, in CartPole, the action is either 0 or 1, then the attacker with $\power=0.1$ can flip up to $10\%$ of the actions in one iteration.

For $\victim=\obsr$, we define the total effort as 
$$
\effort(\obsr, \poison{\obsr}) = \frac{1}{\sqrt{|\obsr|}} \| \mathbf{r} - \poison{\mathbf{r}} \|. 
$$

In the supplementary materials we provide the code and instructions, as well as demo videos of poisoning A2C in the Hopper environment, where one can see under the same budget constraints, random poisoning has nearly no influence the agent's behaviors, while our proposed \ourmod successfully prevents the agent from hopping forward.

\subsection{FGSM-based Poisoning}
\label{app:fgsm}

The procedure of the FGSM-based targeted poisoning is as follows. We transfer the method proposed by~\citet{behzadan2017vulnerability} from attacking DQN to attacking policy-based algorithms. Although \citet{behzadan2017vulnerability} assume a black-box setting, we hereby use white-box attack (attacker knows learner's policy parameters) in order to let FGSM be a stronger baseline.

For step $t$,\\
\textsf{Step 1:} the learner observes $s_t$ takes action $a_t$, the environment returns reward $r_t$, state $s_{t+1}$;\\ 
\textsf{Step 2:} the attacker queries the target policy and gets $a_{adv} = \pi^\dag(s_{t+1})$;\\ 
\textsf{Step 3:} the attacker poisons $s_{t+1}$ by
$$
\poison{s}_{t+1} = s_{t+1} + \epsilon \times \mathrm{sign} (\nabla_{s_{t+1}} (\pi( a_{adv}| s_{t+1} ))); \\
$$\\ 
\textsf{Step 4:} the attacker sends $\poison{s}_{t+1}$ as $s_{t+1}$ to the learner.

\subsection{Additional Experiment Results}
\label{app:exp_results}
Figure~\ref{fig:additional} shows additional results on various environments and RL algorithms, including both non-targeted poisoning and targeted poisoning.


\begin{figure}[!h]
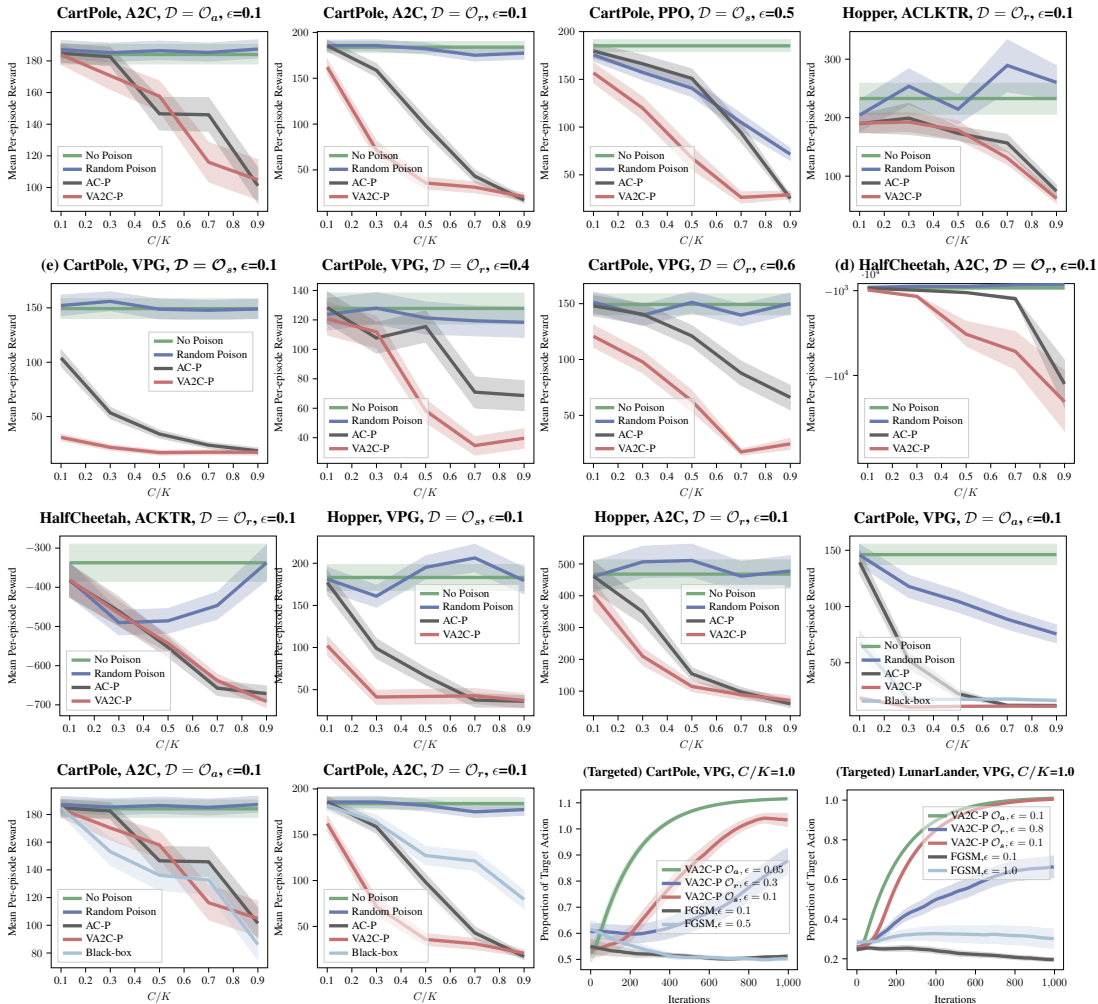

\centering
	\begin{subfigure}[t]{0.24\columnwidth}
		\centering
\begin{tikzpicture}[scale=0.42]

\definecolor{color0}{rgb}{0.462745098039216,0.654901960784314,0.490196078431373}
\definecolor{color1}{rgb}{0.4,0.470588235294118,0.67843137254902}
\definecolor{color2}{rgb}{0.776470588235294,0.443137254901961,0.443137254901961}

\begin{axis}[
legend cell align={left},
legend style={fill opacity=0.6, draw opacity=1, text opacity=1, at={(0.03,0.03)}, anchor=south west, draw=white!80.0!black},
tick align=outside,
tick pos=left,
title={\textbf{\Large{CartPole, A2C, $\victim=\obs_{\boldsymbol{a}}$, $\epsilon$=0.1}}},
x grid style={white!69.01960784313725!black},
xlabel={\(\displaystyle C/K\)},
xmin=0.06, xmax=0.94,
xtick style={color=black},
xtick={0,0.1,0.2,0.3,0.4,0.5,0.6,0.7,0.8,0.9,1},
xticklabels={0.0,0.1,0.2,0.3,0.4,0.5,0.6,0.7,0.8,0.9,1.0},
y grid style={white!69.01960784313725!black},
ylabel={Mean Per-episode Reward},
ymin=84.9286, ymax=198.538066666667,
ytick style={color=black}
]
\path [draw=color0, fill=color0, opacity=0.25]
(axis cs:0.1,190.418)
--(axis cs:0.1,178.058666666667)
--(axis cs:0.3,178.058666666667)
--(axis cs:0.5,178.058666666667)
--(axis cs:0.7,178.058666666667)
--(axis cs:0.9,178.058666666667)
--(axis cs:0.9,190.418)
--(axis cs:0.9,190.418)
--(axis cs:0.7,190.418)
--(axis cs:0.5,190.418)
--(axis cs:0.3,190.418)
--(axis cs:0.1,190.418)
--cycle;

\path [draw=color1, fill=color1, opacity=0.25]
(axis cs:0.1,192.986666666667)
--(axis cs:0.1,181.736)
--(axis cs:0.3,179.586)
--(axis cs:0.5,180.893333333333)
--(axis cs:0.7,179.862)
--(axis cs:0.9,181.943333333333)
--(axis cs:0.9,193.374)
--(axis cs:0.9,193.374)
--(axis cs:0.7,191.243333333333)
--(axis cs:0.5,192.56)
--(axis cs:0.3,191.217333333333)
--(axis cs:0.1,192.986666666667)
--cycle;

\path [draw=color2, fill=color2, opacity=0.25]
(axis cs:0.1,190.236666666667)
--(axis cs:0.1,177.823333333333)
--(axis cs:0.3,162.072666666667)
--(axis cs:0.5,148.32)
--(axis cs:0.7,103.522666666667)
--(axis cs:0.9,92.236)
--(axis cs:0.9,117.494)
--(axis cs:0.9,117.494)
--(axis cs:0.7,128.540666666667)
--(axis cs:0.5,167.31)
--(axis cs:0.3,179.37)
--(axis cs:0.1,190.236666666667)
--cycle;

\path [draw=white!36.86274509803922!black, fill=white!36.86274509803922!black, opacity=0.25]
(axis cs:0.1,190.996666666667)
--(axis cs:0.1,179.186)
--(axis cs:0.3,176.961333333333)
--(axis cs:0.5,136.36)
--(axis cs:0.7,135.533333333333)
--(axis cs:0.9,90.0926666666666)
--(axis cs:0.9,112.352666666667)
--(axis cs:0.9,112.352666666667)
--(axis cs:0.7,156.774)
--(axis cs:0.5,156.980666666667)
--(axis cs:0.3,188.636666666667)
--(axis cs:0.1,190.996666666667)
--cycle;

\addplot [thick, line width=3, color0]
table {%
0.1 184.064868333333
0.3 184.064868333333
0.5 184.064868333333
0.7 184.064868333333
0.9 184.064868333333
};
\addlegendentry{No Poison}
\addplot [thick, line width=3, color1]
table {%
0.1 187.160236666667
0.3 185.204845333333
0.5 186.474691333333
0.7 185.31345
0.9 187.491612333333
};
\addlegendentry{Random Poison}
\addplot [thick, line width=3, white!36.86274509803922!black]
table {%
0.1 184.908331
0.3 182.595242
0.5 146.586472
0.7 145.948539666667
0.9 101.115676333333
};
\addlegendentry{\ourmodtwo}
\addplot [thick, line width=3, color2]
table {%
0.1 183.931901
0.3 170.634034
0.5 157.674769666667
0.7 115.950914
0.9 104.801197333333
};
\addlegendentry{\ourmod}
\end{axis}

\end{tikzpicture}
	\end{subfigure}
	\hfill
	\begin{subfigure}[t]{0.24\columnwidth}
		\centering
\begin{tikzpicture}[scale=0.42]

\definecolor{color0}{rgb}{0.462745098039216,0.654901960784314,0.490196078431373}
\definecolor{color1}{rgb}{0.4,0.470588235294118,0.67843137254902}
\definecolor{color2}{rgb}{0.776470588235294,0.443137254901961,0.443137254901961}

\begin{axis}[
legend cell align={left},
legend style={fill opacity=0.6, draw opacity=1, text opacity=1, at={(0.03,0.03)}, anchor=south west, draw=white!80.0!black},
tick align=outside,
tick pos=left,
title={\textbf{\Large{CartPole, A2C, $\victim=\obs_{\boldsymbol{r}}$, $\epsilon$=0.1}}},
x grid style={white!69.01960784313725!black},
xlabel={\(\displaystyle C/K\)},
xmin=0.06, xmax=0.94,
xtick style={color=black},
xtick={0,0.1,0.2,0.3,0.4,0.5,0.6,0.7,0.8,0.9,1},
xticklabels={0.0,0.1,0.2,0.3,0.4,0.5,0.6,0.7,0.8,0.9,1.0},
y grid style={white!69.01960784313725!black},
ylabel={Mean Per-episode Reward},
ymin=4.79236666666667, ymax=200.906966666667,
ytick style={color=black}
]
\path [draw=color0, fill=color0, opacity=0.25]
(axis cs:0.1,190.15)
--(axis cs:0.1,177.846666666667)
--(axis cs:0.3,177.846666666667)
--(axis cs:0.5,177.846666666667)
--(axis cs:0.7,177.846666666667)
--(axis cs:0.9,177.846666666667)
--(axis cs:0.9,190.15)
--(axis cs:0.9,190.15)
--(axis cs:0.7,190.15)
--(axis cs:0.5,190.15)
--(axis cs:0.3,190.15)
--(axis cs:0.1,190.15)
--cycle;

\path [draw=color1, fill=color1, opacity=0.25]
(axis cs:0.1,191.513333333333)
--(axis cs:0.1,180.332)
--(axis cs:0.3,179.822666666667)
--(axis cs:0.5,176.27)
--(axis cs:0.7,168.809333333333)
--(axis cs:0.9,170.978666666667)
--(axis cs:0.9,183.78)
--(axis cs:0.9,183.78)
--(axis cs:0.7,181.600666666667)
--(axis cs:0.5,188.460666666667)
--(axis cs:0.3,191.992666666667)
--(axis cs:0.1,191.513333333333)
--cycle;

\path [draw=color2, fill=color2, opacity=0.25]
(axis cs:0.1,170.404)
--(axis cs:0.1,154.221333333333)
--(axis cs:0.3,61.5973333333333)
--(axis cs:0.5,29.306)
--(axis cs:0.7,24.956)
--(axis cs:0.9,16.81)
--(axis cs:0.9,24.5433333333333)
--(axis cs:0.9,24.5433333333333)
--(axis cs:0.7,36.9933333333333)
--(axis cs:0.5,41.8486666666667)
--(axis cs:0.3,81.3313333333333)
--(axis cs:0.1,170.404)
--cycle;

\path [draw=white!36.86274509803922!black, fill=white!36.86274509803922!black, opacity=0.25]
(axis cs:0.1,191.798666666667)
--(axis cs:0.1,179.769333333333)
--(axis cs:0.3,151.572)
--(axis cs:0.5,89.5453333333333)
--(axis cs:0.7,36.2566666666667)
--(axis cs:0.9,13.7066666666667)
--(axis cs:0.9,20.1933333333333)
--(axis cs:0.9,20.1933333333333)
--(axis cs:0.7,49.7266666666667)
--(axis cs:0.5,107.14)
--(axis cs:0.3,165.964)
--(axis cs:0.1,191.798666666667)
--cycle;

\addplot [thick, line width=3, color0]
table {%
0.1 183.912923666667
0.3 183.912923666667
0.5 183.912923666667
0.7 183.912923666667
0.9 183.912923666667
};
\addlegendentry{No Poison}
\addplot [thick, line width=3, color1]
table {%
0.1 185.667741333333
0.3 185.696993666667
0.5 182.148404666667
0.7 175.092956333333
0.9 177.372194
};
\addlegendentry{Random Poison}
\addplot [thick, line width=3, white!36.86274509803922!black]
table {%
0.1 185.548392666667
0.3 158.625174333333
0.5 98.36983
0.7 43.0780803333333
0.9 17.058824
};
\addlegendentry{\ourmodtwo}
\addplot [thick, line width=3, color2]
table {%
0.1 162.177738333333
0.3 71.531942
0.5 35.7325563333333
0.7 31.0909836666667
0.9 20.7585006666667
};
\addlegendentry{\ourmod}
\end{axis}

\end{tikzpicture}
	\end{subfigure}
	\hfill
	\begin{subfigure}[t]{0.24\columnwidth}
		\centering
\begin{tikzpicture}[scale=0.42]

\definecolor{color0}{rgb}{0.462745098039216,0.654901960784314,0.490196078431373}
\definecolor{color1}{rgb}{0.4,0.470588235294118,0.67843137254902}
\definecolor{color2}{rgb}{0.776470588235294,0.443137254901961,0.443137254901961}

\begin{axis}[
legend cell align={left},
legend style={fill opacity=0.6, draw opacity=1, text opacity=1, at={(0.03,0.03)}, anchor=south west, draw=white!80.0!black},
tick align=outside,
tick pos=left,
title={\textbf{\Large{CartPole, PPO, $\victim=\obs_{\boldsymbol{s}}$, $\epsilon$=0.5}}},
x grid style={white!69.01960784313725!black},
xlabel={\(\displaystyle C/K\)},
xmin=0.06, xmax=0.94,
xtick style={color=black},
xtick={0,0.1,0.2,0.3,0.4,0.5,0.6,0.7,0.8,0.9,1},
xticklabels={0.0,0.1,0.2,0.3,0.4,0.5,0.6,0.7,0.8,0.9,1.0},
y grid style={white!69.01960784313725!black},
ylabel={Mean Per-episode Reward},
ymin=11.7696666666667, ymax=200.023666666667,
ytick style={color=black}
]
\path [draw=color0, fill=color0, opacity=0.25]
(axis cs:0.1,191.466666666667)
--(axis cs:0.1,179.133333333333)
--(axis cs:0.3,179.133333333333)
--(axis cs:0.5,179.133333333333)
--(axis cs:0.7,179.133333333333)
--(axis cs:0.9,179.133333333333)
--(axis cs:0.9,191.466666666667)
--(axis cs:0.9,191.466666666667)
--(axis cs:0.7,191.466666666667)
--(axis cs:0.5,191.466666666667)
--(axis cs:0.3,191.466666666667)
--(axis cs:0.1,191.466666666667)
--cycle;

\path [draw=color1, fill=color1, opacity=0.25]
(axis cs:0.1,182.433333333333)
--(axis cs:0.1,169.1)
--(axis cs:0.3,149.4)
--(axis cs:0.5,131.833333333333)
--(axis cs:0.7,95.9)
--(axis cs:0.9,65.1333333333333)
--(axis cs:0.9,78)
--(axis cs:0.9,78)
--(axis cs:0.7,114.133333333333)
--(axis cs:0.5,149.8)
--(axis cs:0.3,165.1)
--(axis cs:0.1,182.433333333333)
--cycle;

\path [draw=color2, fill=color2, opacity=0.25]
(axis cs:0.1,167.34)
--(axis cs:0.1,146.333333333333)
--(axis cs:0.3,107.9)
--(axis cs:0.5,55.6666666666667)
--(axis cs:0.7,20.3266666666667)
--(axis cs:0.9,25)
--(axis cs:0.9,33.0666666666667)
--(axis cs:0.9,33.0666666666667)
--(axis cs:0.7,32.1333333333333)
--(axis cs:0.5,79.4)
--(axis cs:0.3,131.533333333333)
--(axis cs:0.1,167.34)
--cycle;

\path [draw=white!36.86274509803922!black, fill=white!36.86274509803922!black, opacity=0.25]
(axis cs:0.1,187.4)
--(axis cs:0.1,172.333333333333)
--(axis cs:0.3,157.3)
--(axis cs:0.5,140.893333333333)
--(axis cs:0.7,84.2666666666667)
--(axis cs:0.9,21.3333333333333)
--(axis cs:0.9,29.1333333333333)
--(axis cs:0.9,29.1333333333333)
--(axis cs:0.7,103.966666666667)
--(axis cs:0.5,160.9)
--(axis cs:0.3,175.6)
--(axis cs:0.1,187.4)
--cycle;

\addplot [thick, line width=3, color0]
table {%
0.1 185.10986
0.3 185.10986
0.5 185.10986
0.7 185.10986
0.9 185.10986
};
\addlegendentry{No Poison}
\addplot [thick, line width=3, color1]
table {%
0.1 175.641603333333
0.3 157.200613333333
0.5 140.68818
0.7 104.958653333333
0.9 71.6120966666667
};
\addlegendentry{Random Poison}
\addplot [thick, line width=3, white!36.86274509803922!black]
table {%
0.1 179.663543333333
0.3 166.2895
0.5 150.972316666667
0.7 94.0920733333333
0.9 25.26278
};
\addlegendentry{\ourmodtwo}
\addplot [thick, line width=3, color2]
table {%
0.1 156.803673333333
0.3 119.804723333333
0.5 67.6175633333333
0.7 26.3879333333333
0.9 29.0851866666667
};
\addlegendentry{\ourmod}
\end{axis}

\end{tikzpicture}
	\end{subfigure}
	\hfill
	\begin{subfigure}[t]{0.24\columnwidth}
		\centering
\begin{tikzpicture}[scale=0.42]

\definecolor{color0}{rgb}{0.462745098039216,0.654901960784314,0.490196078431373}
\definecolor{color1}{rgb}{0.4,0.470588235294118,0.67843137254902}
\definecolor{color2}{rgb}{0.776470588235294,0.443137254901961,0.443137254901961}

\begin{axis}[
legend cell align={left},
legend style={fill opacity=0.6, draw opacity=1, text opacity=1, at={(0.03,0.03)}, anchor=south west, draw=white!80.0!black},
tick align=outside,
tick pos=left,
title={\textbf{\Large{Hopper, ACLKTR, $\victim=\obs_{\boldsymbol{r}}$, $\epsilon$=0.1}}},
x grid style={white!69.01960784313725!black},
xlabel={\(\displaystyle C/K\)},
xmin=0.06, xmax=0.94,
xtick style={color=black},
xtick={0,0.1,0.2,0.3,0.4,0.5,0.6,0.7,0.8,0.9,1},
xticklabels={0.0,0.1,0.2,0.3,0.4,0.5,0.6,0.7,0.8,0.9,1.0},
y grid style={white!69.01960784313725!black},
ylabel={Mean Per-episode Reward},
ymin=39.8738389394667, ymax=347.1630530072,
ytick style={color=black}
]
\path [draw=color0, fill=color0, opacity=0.25]
(axis cs:0.1,258.876122756)
--(axis cs:0.1,205.950385542)
--(axis cs:0.3,205.950385542)
--(axis cs:0.5,205.950385542)
--(axis cs:0.7,205.950385542)
--(axis cs:0.9,205.950385542)
--(axis cs:0.9,258.876122756)
--(axis cs:0.9,258.876122756)
--(axis cs:0.7,258.876122756)
--(axis cs:0.5,258.876122756)
--(axis cs:0.3,258.876122756)
--(axis cs:0.1,258.876122756)
--cycle;

\path [draw=color1, fill=color1, opacity=0.25]
(axis cs:0.1,230.040001260667)
--(axis cs:0.1,176.622321945333)
--(axis cs:0.3,222.738379386)
--(axis cs:0.5,189.227753032667)
--(axis cs:0.7,244.448168242667)
--(axis cs:0.9,230.795455573333)
--(axis cs:0.9,288.843989779333)
--(axis cs:0.9,288.843989779333)
--(axis cs:0.7,333.195361458667)
--(axis cs:0.5,238.857392268667)
--(axis cs:0.3,283.674380981333)
--(axis cs:0.1,230.040001260667)
--cycle;

\path [draw=color2, fill=color2, opacity=0.25]
(axis cs:0.1,207.05963808)
--(axis cs:0.1,174.104047879333)
--(axis cs:0.3,176.493105788667)
--(axis cs:0.5,162.603385322667)
--(axis cs:0.7,120.280793713333)
--(axis cs:0.9,53.841530488)
--(axis cs:0.9,69.607140086)
--(axis cs:0.9,69.607140086)
--(axis cs:0.7,142.125530367333)
--(axis cs:0.5,194.516403133333)
--(axis cs:0.3,208.306041209333)
--(axis cs:0.1,207.05963808)
--cycle;

\path [draw=white!36.86274509803922!black, fill=white!36.86274509803922!black, opacity=0.25]
(axis cs:0.1,205.406070724667)
--(axis cs:0.1,174.56558841)
--(axis cs:0.3,171.501978683333)
--(axis cs:0.5,161.546870386)
--(axis cs:0.7,141.424924022)
--(axis cs:0.9,63.7111039)
--(axis cs:0.9,84.3965871166667)
--(axis cs:0.9,84.3965871166667)
--(axis cs:0.7,171.496222274667)
--(axis cs:0.5,184.612201215333)
--(axis cs:0.3,224.670172584)
--(axis cs:0.1,205.406070724667)
--cycle;

\addplot [thick, line width=3, color0]
table {%
0.1 232.726948772611
0.3 232.726948772611
0.5 232.726948772611
0.7 232.726948772611
0.9 232.726948772611
};
\addlegendentry{No Poison}
\addplot [thick, line width=3, color1]
table {%
0.1 204.437165005453
0.3 253.648578248796
0.5 214.485703676144
0.7 289.279194027742
0.9 260.229362560491
};
\addlegendentry{Random Poison}
\addplot [thick, line width=3, white!36.86274509803922!black]
table {%
0.1 189.894350258636
0.3 199.11592762299
0.5 173.123081999586
0.7 156.697235038465
0.9 74.3237463928527
};
\addlegendentry{\ourmodtwo}
\addplot [thick, line width=3, color2]
table {%
0.1 190.727472796821
0.3 192.418849258812
0.5 178.614880274834
0.7 131.239339235765
0.9 61.8884977587763
};
\addlegendentry{\ourmod}
\end{axis}

\end{tikzpicture}
	\end{subfigure}
	
	\vspace*{-1em}
	\begin{subfigure}[t]{0.24\columnwidth}
		\centering
\begin{tikzpicture}[scale=0.42]

\definecolor{color0}{rgb}{0.462745098039216,0.654901960784314,0.490196078431373}
\definecolor{color1}{rgb}{0.4,0.470588235294118,0.67843137254902}
\definecolor{color2}{rgb}{0.776470588235294,0.443137254901961,0.443137254901961}

\begin{axis}[
legend cell align={left},
legend style={fill opacity=0.6, draw opacity=1, text opacity=1, at={(0.45,0.4)}, anchor=south west, draw=white!80.0!black},
tick align=outside,
tick pos=left,
title={\textbf{\Large{(e) CartPole, VPG, $\boldsymbol{\victim=\obs_{\boldsymbol{s}}}$, $\boldsymbol{\epsilon}$=0.1}}},
x grid style={white!69.01960784313725!black},
xlabel={\(\displaystyle C/K\)},
xmin=0.06, xmax=0.94,
xtick style={color=black},
xtick={0,0.1,0.2,0.3,0.4,0.5,0.6,0.7,0.8,0.9,1},
xticklabels={0.0,0.1,0.2,0.3,0.4,0.5,0.6,0.7,0.8,0.9,1.0},
y grid style={white!69.01960784313725!black},
ylabel={Mean Per-episode Reward},
ymin=7.02, ymax=172.313333333333,
ytick style={color=black}
]
\path [draw=color0, fill=color0, opacity=0.25]
(axis cs:0.1,158.366666666667)
--(axis cs:0.1,140.233333333333)
--(axis cs:0.3,140.233333333333)
--(axis cs:0.5,140.233333333333)
--(axis cs:0.7,140.233333333333)
--(axis cs:0.9,140.233333333333)
--(axis cs:0.9,158.366666666667)
--(axis cs:0.9,158.366666666667)
--(axis cs:0.7,158.366666666667)
--(axis cs:0.5,158.366666666667)
--(axis cs:0.3,158.366666666667)
--(axis cs:0.1,158.366666666667)
--cycle;

\path [draw=color1, fill=color1, opacity=0.25]
(axis cs:0.1,161.773333333333)
--(axis cs:0.1,142.9)
--(axis cs:0.3,147.366666666667)
--(axis cs:0.5,139.8)
--(axis cs:0.7,138.9)
--(axis cs:0.9,140.2)
--(axis cs:0.9,157.9)
--(axis cs:0.9,157.9)
--(axis cs:0.7,156.733333333333)
--(axis cs:0.5,158.066666666667)
--(axis cs:0.3,164.8)
--(axis cs:0.1,161.773333333333)
--cycle;

\path [draw=color2, fill=color2, opacity=0.25]
(axis cs:0.1,33.8333333333333)
--(axis cs:0.1,27.7333333333333)
--(axis cs:0.3,19.0666666666667)
--(axis cs:0.5,14.5333333333333)
--(axis cs:0.7,15)
--(axis cs:0.9,14.5666666666667)
--(axis cs:0.9,19.6666666666667)
--(axis cs:0.9,19.6666666666667)
--(axis cs:0.7,19.4333333333333)
--(axis cs:0.5,19.0333333333333)
--(axis cs:0.3,24.2666666666667)
--(axis cs:0.1,33.8333333333333)
--cycle;

\path [draw=white!36.86274509803922!black, fill=white!36.86274509803922!black, opacity=0.25]
(axis cs:0.1,111.466666666667)
--(axis cs:0.1,96.3666666666667)
--(axis cs:0.3,48.0333333333333)
--(axis cs:0.5,30.2666666666667)
--(axis cs:0.7,20.7666666666667)
--(axis cs:0.9,15.7)
--(axis cs:0.9,21.2333333333333)
--(axis cs:0.9,21.2333333333333)
--(axis cs:0.7,26.4)
--(axis cs:0.5,37.4666666666667)
--(axis cs:0.3,58.6333333333333)
--(axis cs:0.1,111.466666666667)
--cycle;

\addplot [thick, line width=3, color0]
table {%
0.1 149.317463333333
0.3 149.317463333333
0.5 149.317463333333
0.7 149.317463333333
0.9 149.317463333333
};
\addlegendentry{No Poison}
\addplot [thick, line width=3, color1]
table {%
0.1 152.151113333333
0.3 156.01351
0.5 148.772996666667
0.7 147.789696666667
0.9 148.940283333333
};
\addlegendentry{Random Poison}
\addplot [thick, line width=3, white!36.86274509803922!black]
table {%
0.1 104.005083333333
0.3 53.4759833333333
0.5 33.9749733333333
0.7 23.6858433333333
0.9 18.5646533333333
};
\addlegendentry{\ourmodtwo}
\addplot [thick, line width=3, color2]
table {%
0.1 30.8866666666667
0.3 21.73378
0.5 16.8819066666667
0.7 17.2749
0.9 17.22266
};
\addlegendentry{\ourmod}
\end{axis}

\end{tikzpicture}
	\end{subfigure}
	\hfill
	\begin{subfigure}[t]{0.24\columnwidth}
		\centering
\begin{tikzpicture}[scale=0.42]

\definecolor{color0}{rgb}{0.462745098039216,0.654901960784314,0.490196078431373}
\definecolor{color1}{rgb}{0.4,0.470588235294118,0.67843137254902}
\definecolor{color2}{rgb}{0.776470588235294,0.443137254901961,0.443137254901961}

\begin{axis}[
legend cell align={left},
legend style={fill opacity=0.6, draw opacity=1, text opacity=1, at={(0.03,0.03)}, anchor=south west, draw=white!80.0!black},
tick align=outside,
tick pos=left,
title={\textbf{\Large{CartPole, VPG, $\victim=\obs_{\boldsymbol{r}}$, $\epsilon$=0.4}}},
x grid style={white!69.01960784313725!black},
xlabel={\(\displaystyle C/K\)},
xmin=0.06, xmax=0.94,
xtick style={color=black},
xtick={0,0.1,0.2,0.3,0.4,0.5,0.6,0.7,0.8,0.9,1},
xticklabels={0.0,0.1,0.2,0.3,0.4,0.5,0.6,0.7,0.8,0.9,1.0},
y grid style={white!69.01960784313725!black},
ylabel={Mean Per-episode Reward},
ymin=22.5846666666667, ymax=144.655333333333,
ytick style={color=black}
]
\path [draw=color0, fill=color0, opacity=0.25]
(axis cs:0.1,138.266666666667)
--(axis cs:0.1,117.266666666667)
--(axis cs:0.3,117.266666666667)
--(axis cs:0.5,117.266666666667)
--(axis cs:0.7,117.266666666667)
--(axis cs:0.9,117.266666666667)
--(axis cs:0.9,138.266666666667)
--(axis cs:0.9,138.266666666667)
--(axis cs:0.7,138.266666666667)
--(axis cs:0.5,138.266666666667)
--(axis cs:0.3,138.266666666667)
--(axis cs:0.1,138.266666666667)
--cycle;

\path [draw=color1, fill=color1, opacity=0.25]
(axis cs:0.1,134.473333333333)
--(axis cs:0.1,113.06)
--(axis cs:0.3,117.3)
--(axis cs:0.5,110.333333333333)
--(axis cs:0.7,109.226666666667)
--(axis cs:0.9,108.126666666667)
--(axis cs:0.9,128.733333333333)
--(axis cs:0.9,128.733333333333)
--(axis cs:0.7,129.533333333333)
--(axis cs:0.5,132.166666666667)
--(axis cs:0.3,138.566666666667)
--(axis cs:0.1,134.473333333333)
--cycle;

\path [draw=color2, fill=color2, opacity=0.25]
(axis cs:0.1,131.866666666667)
--(axis cs:0.1,109.76)
--(axis cs:0.3,100.933333333333)
--(axis cs:0.5,50.4)
--(axis cs:0.7,28.1333333333333)
--(axis cs:0.9,32.8)
--(axis cs:0.9,46.0666666666667)
--(axis cs:0.9,46.0666666666667)
--(axis cs:0.7,40.7066666666667)
--(axis cs:0.5,65.9)
--(axis cs:0.3,122.566666666667)
--(axis cs:0.1,131.866666666667)
--cycle;

\path [draw=white!36.86274509803922!black, fill=white!36.86274509803922!black, opacity=0.25]
(axis cs:0.1,139.106666666667)
--(axis cs:0.1,117.733333333333)
--(axis cs:0.3,97.3)
--(axis cs:0.5,104.966666666667)
--(axis cs:0.7,60.2)
--(axis cs:0.9,58.4)
--(axis cs:0.9,78.6666666666667)
--(axis cs:0.9,78.6666666666667)
--(axis cs:0.7,81.3333333333333)
--(axis cs:0.5,126.2)
--(axis cs:0.3,118.173333333333)
--(axis cs:0.1,139.106666666667)
--cycle;

\addplot [thick, line width=3, color0]
table {%
0.1 127.787466666667
0.3 127.787466666667
0.5 127.787466666667
0.7 127.787466666667
0.9 127.787466666667
};
\addlegendentry{No Poison}
\addplot [thick, line width=3, color1]
table {%
0.1 123.777836666667
0.3 127.98516
0.5 121.335466666667
0.7 119.395433333333
0.9 118.3937
};
\addlegendentry{Random Poison}
\addplot [thick, line width=3, white!36.86274509803922!black]
table {%
0.1 128.401056666667
0.3 107.701943333333
0.5 115.50716
0.7 70.92659
0.9 68.61122
};
\addlegendentry{\ourmodtwo}
\addplot [thick, line width=3, color2]
table {%
0.1 120.699606666667
0.3 111.861873333333
0.5 58.3528333333333
0.7 34.6353733333333
0.9 39.6462033333333
};
\addlegendentry{\ourmod}
\end{axis}

\end{tikzpicture}
	\end{subfigure}
	\hfill
	\begin{subfigure}[t]{0.24\columnwidth}
		\centering
\begin{tikzpicture}[scale=0.42]

\definecolor{color0}{rgb}{0.462745098039216,0.654901960784314,0.490196078431373}
\definecolor{color1}{rgb}{0.4,0.470588235294118,0.67843137254902}
\definecolor{color2}{rgb}{0.776470588235294,0.443137254901961,0.443137254901961}

\begin{axis}[
legend cell align={left},
legend style={fill opacity=0.6, draw opacity=1, text opacity=1, at={(0.03,0.03)}, anchor=south west, draw=white!80.0!black},
tick align=outside,
tick pos=left,
title={\textbf{\Large{CartPole, VPG, $\victim=\obs_{\boldsymbol{r}}$, $\epsilon$=0.6}}},
x grid style={white!69.01960784313725!black},
xlabel={\(\displaystyle C/K\)},
xmin=0.06, xmax=0.94,
xtick style={color=black},
xtick={0,0.1,0.2,0.3,0.4,0.5,0.6,0.7,0.8,0.9,1},
xticklabels={0.0,0.1,0.2,0.3,0.4,0.5,0.6,0.7,0.8,0.9,1.0},
y grid style={white!69.01960784313725!black},
ylabel={Mean Per-episode Reward},
ymin=7.01666666666667, ymax=167.983333333333,
ytick style={color=black}
]
\path [draw=color0, fill=color0, opacity=0.25]
(axis cs:0.1,158.766666666667)
--(axis cs:0.1,140.2)
--(axis cs:0.3,140.2)
--(axis cs:0.5,140.2)
--(axis cs:0.7,140.2)
--(axis cs:0.9,140.2)
--(axis cs:0.9,158.766666666667)
--(axis cs:0.9,158.766666666667)
--(axis cs:0.7,158.766666666667)
--(axis cs:0.5,158.766666666667)
--(axis cs:0.3,158.766666666667)
--(axis cs:0.1,158.766666666667)
--cycle;

\path [draw=color1, fill=color1, opacity=0.25]
(axis cs:0.1,160.433333333333)
--(axis cs:0.1,142.133333333333)
--(axis cs:0.3,130.266666666667)
--(axis cs:0.5,141.926666666667)
--(axis cs:0.7,130.166666666667)
--(axis cs:0.9,140.7)
--(axis cs:0.9,159.633333333333)
--(axis cs:0.9,159.633333333333)
--(axis cs:0.7,149.6)
--(axis cs:0.5,160.666666666667)
--(axis cs:0.3,149.206666666667)
--(axis cs:0.1,160.433333333333)
--cycle;

\path [draw=color2, fill=color2, opacity=0.25]
(axis cs:0.1,130.766666666667)
--(axis cs:0.1,110.9)
--(axis cs:0.3,88)
--(axis cs:0.5,53.2)
--(axis cs:0.7,14.3333333333333)
--(axis cs:0.9,19.4333333333333)
--(axis cs:0.9,29.2666666666667)
--(axis cs:0.9,29.2666666666667)
--(axis cs:0.7,19.9333333333333)
--(axis cs:0.5,72.3733333333333)
--(axis cs:0.3,107.633333333333)
--(axis cs:0.1,130.766666666667)
--cycle;

\path [draw=white!36.86274509803922!black, fill=white!36.86274509803922!black, opacity=0.25]
(axis cs:0.1,157.2)
--(axis cs:0.1,139.166666666667)
--(axis cs:0.3,131.233333333333)
--(axis cs:0.5,111.826666666667)
--(axis cs:0.7,76.9)
--(axis cs:0.9,55)
--(axis cs:0.9,76.8666666666667)
--(axis cs:0.9,76.8666666666667)
--(axis cs:0.7,98.5333333333333)
--(axis cs:0.5,130.4)
--(axis cs:0.3,150.1)
--(axis cs:0.1,157.2)
--cycle;

\addplot [thick, line width=3, color0]
table {%
0.1 149.327206666667
0.3 149.327206666667
0.5 149.327206666667
0.7 149.327206666667
0.9 149.327206666667
};
\addlegendentry{No Poison}
\addplot [thick, line width=3, color1]
table {%
0.1 151.265216666667
0.3 139.712483333333
0.5 151.147613333333
0.7 139.767503333333
0.9 150.09906
};
\addlegendentry{Random Poison}
\addplot [thick, line width=3, white!36.86274509803922!black]
table {%
0.1 148.000593333333
0.3 140.607293333333
0.5 121.083613333333
0.7 87.8156333333333
0.9 66.0939033333333
};
\addlegendentry{\ourmodtwo}
\addplot [thick, line width=3, color2]
table {%
0.1 120.806226666667
0.3 97.8893233333333
0.5 62.9009033333333
0.7 17.24838
0.9 24.45951
};
\addlegendentry{\ourmod}
\end{axis}

\end{tikzpicture}
	\end{subfigure}
	\hfill
	\begin{subfigure}[t]{0.24\columnwidth}
		\centering
\begin{tikzpicture}[scale=0.42]

\definecolor{color0}{rgb}{0.462745098039216,0.654901960784314,0.490196078431373}
\definecolor{color1}{rgb}{0.4,0.470588235294118,0.67843137254902}
\definecolor{color2}{rgb}{0.776470588235294,0.443137254901961,0.443137254901961}

\begin{axis}[
legend cell align={left},
legend style={fill opacity=0.6, draw opacity=1, text opacity=1, at={(0.03,0.03)}, anchor=south west, draw=white!80.0!black},
tick align=outside,
tick pos=left,
title={\textbf{\Large{(d) HalfCheetah, A2C, $\boldsymbol{\victim=\obs_{\boldsymbol{r}}}$, $\boldsymbol{\epsilon}$=0.1}}},
x grid style={white!69.01960784313725!black},
xlabel={\(\displaystyle C/K\)},
xmin=0.06, xmax=0.94,
xtick style={color=black},
xtick={0,0.1,0.2,0.3,0.4,0.5,0.6,0.7,0.8,0.9,1},
xticklabels={0.0,0.1,0.2,0.3,0.4,0.5,0.6,0.7,0.8,0.9,1.0},
y grid style={white!69.01960784313725!black},
ylabel={Mean Per-episode Reward},
ymin=-19333.5195414647, ymax=-250.179930977533,
ytick style={color=black},
ytick={-10000,-1000,-100},
yticklabels={\(\displaystyle {-10^{4}}\),\(\displaystyle {-10^{3}}\),\(\displaystyle {-10^{2}}\)}
]
\path [draw=color0, fill=color0, opacity=0.25]
(axis cs:0.1,-671.186014856)
--(axis cs:0.1,-759.181260626)
--(axis cs:0.3,-759.181260626)
--(axis cs:0.5,-759.181260626)
--(axis cs:0.7,-759.181260626)
--(axis cs:0.9,-759.181260626)
--(axis cs:0.9,-671.186014856)
--(axis cs:0.9,-671.186014856)
--(axis cs:0.7,-671.186014856)
--(axis cs:0.5,-671.186014856)
--(axis cs:0.3,-671.186014856)
--(axis cs:0.1,-671.186014856)
--cycle;

\path [draw=color1, fill=color1, opacity=0.25]
(axis cs:0.1,-611.847435272)
--(axis cs:0.1,-684.277635504667)
--(axis cs:0.3,-564.280715386)
--(axis cs:0.5,-572.519980602667)
--(axis cs:0.7,-441.216538325333)
--(axis cs:0.9,-430.747223788667)
--(axis cs:0.9,-304.840950541333)
--(axis cs:0.9,-304.840950541333)
--(axis cs:0.7,-340.717650625333)
--(axis cs:0.5,-510.139546002667)
--(axis cs:0.3,-508.162294590667)
--(axis cs:0.1,-611.847435272)
--cycle;

\path [draw=color2, fill=color2, opacity=0.25]
(axis cs:0.1,-880.327034560667)
--(axis cs:0.1,-963.199499498)
--(axis cs:0.3,-1723.92708931667)
--(axis cs:0.5,-6881.44084571067)
--(axis cs:0.7,-9359.084131534)
--(axis cs:0.9,-15866.8268677393)
--(axis cs:0.9,-9534.35176991067)
--(axis cs:0.9,-9534.35176991067)
--(axis cs:0.7,-5345.02406391067)
--(axis cs:0.5,-4259.91874708467)
--(axis cs:0.3,-1486.67952109267)
--(axis cs:0.1,-880.327034560667)
--cycle;

\path [draw=white!36.86274509803922!black, fill=white!36.86274509803922!black, opacity=0.25]
(axis cs:0.1,-710.942271142667)
--(axis cs:0.1,-776.455687335333)
--(axis cs:0.3,-972.405613703333)
--(axis cs:0.5,-1241.65795178933)
--(axis cs:0.7,-2041.60159180067)
--(axis cs:0.9,-13359.2406064947)
--(axis cs:0.9,-8387.77627047)
--(axis cs:0.9,-8387.77627047)
--(axis cs:0.7,-1657.391616198)
--(axis cs:0.5,-1159.505291144)
--(axis cs:0.3,-933.208288939333)
--(axis cs:0.1,-710.942271142667)
--cycle;

\addplot [thick, line width=3, color0]
table {%
0.1 -714.930086659388
0.3 -714.930086659388
0.5 -714.930086659388
0.7 -714.930086659388
0.9 -714.930086659388
};
\addlegendentry{No Poison}
\addplot [thick, line width=3, color1]
table {%
0.1 -648.059090499051
0.3 -535.922930614036
0.5 -541.193146365702
0.7 -390.803638608585
0.9 -367.892225725207
};
\addlegendentry{Random Poison}
\addplot [thick, line width=3, white!36.86274509803922!black]
table {%
0.1 -743.521075146411
0.3 -952.946237192992
0.5 -1200.54636798462
0.7 -1853.34493803737
0.9 -10902.2138139796
};
\addlegendentry{\ourmodtwo}
\addplot [thick, line width=3, color2]
table {%
0.1 -922.391294238546
0.3 -1606.63981853732
0.5 -5620.32722409977
0.7 -7446.59815297764
0.9 -12798.2224220729
};
\addlegendentry{\ourmod}
\end{axis}

\end{tikzpicture}
	\end{subfigure}
	
	\vspace*{-1em}
	\begin{subfigure}[t]{0.24\columnwidth}
		\centering
\begin{tikzpicture}[scale=0.42]

\definecolor{color0}{rgb}{0.462745098039216,0.654901960784314,0.490196078431373}
\definecolor{color1}{rgb}{0.4,0.470588235294118,0.67843137254902}
\definecolor{color2}{rgb}{0.776470588235294,0.443137254901961,0.443137254901961}

\begin{axis}[
legend cell align={left},
legend style={fill opacity=0.6, draw opacity=1, text opacity=1, at={(0.03,0.03)}, anchor=south west, draw=white!80.0!black},
tick align=outside,
tick pos=left,
title={\textbf{\Large{HalfCheetah, ACKTR, $\victim=\obs_{\boldsymbol{r}}$, $\epsilon$=0.1}}},
x grid style={white!69.01960784313725!black},
xlabel={\(\displaystyle C/K\)},
xmin=0.06, xmax=0.94,
xtick style={color=black},
xtick={0,0.1,0.2,0.3,0.4,0.5,0.6,0.7,0.8,0.9,1},
xticklabels={0.0,0.1,0.2,0.3,0.4,0.5,0.6,0.7,0.8,0.9,1.0},
y grid style={white!69.01960784313725!black},
ylabel={Mean Per-episode Reward},
ymin=-727.8432323225, ymax=-269.1710307195,
ytick style={color=black}
]
\path [draw=color0, fill=color0, opacity=0.25]
(axis cs:0.1,-290.019767156)
--(axis cs:0.1,-384.477281914)
--(axis cs:0.3,-384.477281914)
--(axis cs:0.5,-384.477281914)
--(axis cs:0.7,-384.477281914)
--(axis cs:0.9,-384.477281914)
--(axis cs:0.9,-290.019767156)
--(axis cs:0.9,-290.019767156)
--(axis cs:0.7,-290.019767156)
--(axis cs:0.5,-290.019767156)
--(axis cs:0.3,-290.019767156)
--(axis cs:0.1,-290.019767156)
--cycle;

\path [draw=color1, fill=color1, opacity=0.25]
(axis cs:0.1,-337.122314562)
--(axis cs:0.1,-424.518194361333)
--(axis cs:0.3,-521.329470368)
--(axis cs:0.5,-517.429284199333)
--(axis cs:0.7,-481.492750136667)
--(axis cs:0.9,-382.493291986)
--(axis cs:0.9,-292.849938226)
--(axis cs:0.9,-292.849938226)
--(axis cs:0.7,-411.969902182)
--(axis cs:0.5,-454.26735449)
--(axis cs:0.3,-458.742938884667)
--(axis cs:0.1,-337.122314562)
--cycle;

\path [draw=color2, fill=color2, opacity=0.25]
(axis cs:0.1,-340.439483535333)
--(axis cs:0.1,-423.049413906667)
--(axis cs:0.3,-498.218214164)
--(axis cs:0.5,-564.763415776)
--(axis cs:0.7,-653.182280265333)
--(axis cs:0.9,-706.994495886)
--(axis cs:0.9,-675.790801300667)
--(axis cs:0.9,-675.790801300667)
--(axis cs:0.7,-621.769845634667)
--(axis cs:0.5,-520.965347713333)
--(axis cs:0.3,-435.135564826667)
--(axis cs:0.1,-340.439483535333)
--cycle;

\path [draw=white!36.86274509803922!black, fill=white!36.86274509803922!black, opacity=0.25]
(axis cs:0.1,-340.713119032)
--(axis cs:0.1,-426.002759020667)
--(axis cs:0.3,-497.987361141333)
--(axis cs:0.5,-578.255534786667)
--(axis cs:0.7,-675.950077710667)
--(axis cs:0.9,-690.824692646)
--(axis cs:0.9,-651.711195061333)
--(axis cs:0.9,-651.711195061333)
--(axis cs:0.7,-638.903075024)
--(axis cs:0.5,-528.3374353)
--(axis cs:0.3,-424.576949264)
--(axis cs:0.1,-340.713119032)
--cycle;

\addplot [thick, line width=3, color0]
table {%
0.1 -337.429040961715
0.3 -337.429040961715
0.5 -337.429040961715
0.7 -337.429040961715
0.9 -337.429040961715
};
\addlegendentry{No Poison}
\addplot [thick, line width=3, color1]
table {%
0.1 -380.942627132211
0.3 -490.270133825941
0.5 -485.593722876882
0.7 -446.454264915082
0.9 -337.587409801436
};
\addlegendentry{Random Poison}
\addplot [thick, line width=3, white!36.86274509803922!black]
table {%
0.1 -383.58642941984
0.3 -460.99426036825
0.5 -553.25965474743
0.7 -657.373231209589
0.9 -671.304214841355
};
\addlegendentry{\ourmodtwo}
\addplot [thick, line width=3, color2]
table {%
0.1 -381.43674774816
0.3 -466.721407816331
0.5 -543.147389739031
0.7 -637.658757562392
0.9 -691.380029400166
};
\addlegendentry{\ourmod}
\end{axis}

\end{tikzpicture}
	\end{subfigure}
	\hfill
	\begin{subfigure}[t]{0.24\columnwidth}
		\centering
\begin{tikzpicture}[scale=0.42]

\definecolor{color0}{rgb}{0.462745098039216,0.654901960784314,0.490196078431373}
\definecolor{color1}{rgb}{0.4,0.470588235294118,0.67843137254902}
\definecolor{color2}{rgb}{0.776470588235294,0.443137254901961,0.443137254901961}

\begin{axis}[
legend cell align={left},
legend style={fill opacity=0.6, draw opacity=1, text opacity=1, at={(0.45,0.4)}, anchor=south west, draw=white!80.0!black},
tick align=outside,
tick pos=left,
title={\textbf{\Large{Hopper, VPG, $\victim=\obs_{\boldsymbol{s}}$, $\epsilon$=0.1}}},
x grid style={white!69.01960784313725!black},
xlabel={\(\displaystyle C/K\)},
xmin=0.06, xmax=0.94,
xtick style={color=black},
xtick={0,0.1,0.2,0.3,0.4,0.5,0.6,0.7,0.8,0.9,1},
xticklabels={0.0,0.1,0.2,0.3,0.4,0.5,0.6,0.7,0.8,0.9,1.0},
y grid style={white!69.01960784313725!black},
ylabel={Mean Per-episode Reward},
ymin=18.971975, ymax=232.357711666667,
ytick style={color=black}
]
\path [draw=color0, fill=color0, opacity=0.25]
(axis cs:0.1,198.57514)
--(axis cs:0.1,167.820193333333)
--(axis cs:0.3,167.820193333333)
--(axis cs:0.5,167.820193333333)
--(axis cs:0.7,167.820193333333)
--(axis cs:0.9,167.820193333333)
--(axis cs:0.9,198.57514)
--(axis cs:0.9,198.57514)
--(axis cs:0.7,198.57514)
--(axis cs:0.5,198.57514)
--(axis cs:0.3,198.57514)
--(axis cs:0.1,198.57514)
--cycle;

\path [draw=color1, fill=color1, opacity=0.25]
(axis cs:0.1,195.393186666667)
--(axis cs:0.1,167.00228)
--(axis cs:0.3,147.823866666667)
--(axis cs:0.5,181.53252)
--(axis cs:0.7,189.939173333333)
--(axis cs:0.9,163.817633333333)
--(axis cs:0.9,195.000233333333)
--(axis cs:0.9,195.000233333333)
--(axis cs:0.7,222.65836)
--(axis cs:0.5,208.76462)
--(axis cs:0.3,173.856693333333)
--(axis cs:0.1,195.393186666667)
--cycle;

\path [draw=color2, fill=color2, opacity=0.25]
(axis cs:0.1,113.55284)
--(axis cs:0.1,90.70038)
--(axis cs:0.3,32.6048933333333)
--(axis cs:0.5,33.0496466666667)
--(axis cs:0.7,33.5127333333333)
--(axis cs:0.9,28.6713266666667)
--(axis cs:0.9,44.8876)
--(axis cs:0.9,44.8876)
--(axis cs:0.7,51.31296)
--(axis cs:0.5,50.27104)
--(axis cs:0.3,49.2776133333333)
--(axis cs:0.1,113.55284)
--cycle;

\path [draw=white!36.86274509803922!black, fill=white!36.86274509803922!black, opacity=0.25]
(axis cs:0.1,191.50824)
--(axis cs:0.1,162.7906)
--(axis cs:0.3,87.0428866666667)
--(axis cs:0.5,55.8249133333333)
--(axis cs:0.7,29.32936)
--(axis cs:0.9,28.93948)
--(axis cs:0.9,42.78944)
--(axis cs:0.9,42.78944)
--(axis cs:0.7,45.2367866666667)
--(axis cs:0.5,75.8501066666667)
--(axis cs:0.3,110.389986666667)
--(axis cs:0.1,191.50824)
--cycle;

\addplot [thick, line width=3, color0]
table {%
0.1 183.16198453
0.3 183.16198453
0.5 183.16198453
0.7 183.16198453
0.9 183.16198453
};
\addlegendentry{No Poison}
\addplot [thick, line width=3, color1]
table {%
0.1 181.17708252
0.3 160.94132857
0.5 195.163616696667
0.7 206.339272403333
0.9 179.444351396667
};
\addlegendentry{Random Poison}
\addplot [thick, line width=3, white!36.86274509803922!black]
table {%
0.1 177.196167576667
0.3 98.9662131233333
0.5 66.1248519066667
0.7 37.62684955
0.9 36.1762535466667
};
\addlegendentry{\ourmodtwo}
\addplot [thick, line width=3, color2]
table {%
0.1 102.265657983333
0.3 41.24404713
0.5 41.9551271766667
0.7 42.6678795233333
0.9 37.1276308066667
};
\addlegendentry{\ourmod}
\end{axis}

\end{tikzpicture}
	\end{subfigure}
	\hfill
	\begin{subfigure}[t]{0.24\columnwidth}
		\centering
\begin{tikzpicture}[scale=0.42]

\definecolor{color0}{rgb}{0.462745098039216,0.654901960784314,0.490196078431373}
\definecolor{color1}{rgb}{0.4,0.470588235294118,0.67843137254902}
\definecolor{color2}{rgb}{0.776470588235294,0.443137254901961,0.443137254901961}

\begin{axis}[
legend cell align={left},
legend style={fill opacity=0.6, draw opacity=1, text opacity=1, at={(0.45,0.4)}, anchor=south west, draw=white!80.0!black},
tick align=outside,
tick pos=left,
title={\textbf{\Large{Hopper, A2C, $\victim=\obs_{\boldsymbol{r}}$, $\epsilon$=0.1}}},
x grid style={white!69.01960784313725!black},
xlabel={\(\displaystyle C/K\)},
xmin=0.06, xmax=0.94,
xtick style={color=black},
xtick={0,0.1,0.2,0.3,0.4,0.5,0.6,0.7,0.8,0.9,1},
xticklabels={0.0,0.1,0.2,0.3,0.4,0.5,0.6,0.7,0.8,0.9,1.0},
y grid style={white!69.01960784313725!black},
ylabel={Mean Per-episode Reward},
ymin=22.4930327217, ymax=587.7085507603,
ytick style={color=black}
]
\path [draw=color0, fill=color0, opacity=0.25]
(axis cs:0.1,512.731602639333)
--(axis cs:0.1,423.151193923333)
--(axis cs:0.3,423.151193923333)
--(axis cs:0.5,423.151193923333)
--(axis cs:0.7,423.151193923333)
--(axis cs:0.9,423.151193923333)
--(axis cs:0.9,512.731602639333)
--(axis cs:0.9,512.731602639333)
--(axis cs:0.7,512.731602639333)
--(axis cs:0.5,512.731602639333)
--(axis cs:0.3,512.731602639333)
--(axis cs:0.1,512.731602639333)
--cycle;

\path [draw=color1, fill=color1, opacity=0.25]
(axis cs:0.1,507.369728446667)
--(axis cs:0.1,412.707148346667)
--(axis cs:0.3,457.165485326)
--(axis cs:0.5,459.020555084667)
--(axis cs:0.7,413.964379112667)
--(axis cs:0.9,428.442078388667)
--(axis cs:0.9,525.908986184667)
--(axis cs:0.9,525.908986184667)
--(axis cs:0.7,507.574372235333)
--(axis cs:0.5,562.016936304)
--(axis cs:0.3,555.620797721333)
--(axis cs:0.1,507.369728446667)
--cycle;

\path [draw=color2, fill=color2, opacity=0.25]
(axis cs:0.1,447.51816198)
--(axis cs:0.1,355.273768128)
--(axis cs:0.3,181.847363484)
--(axis cs:0.5,98.5871459326667)
--(axis cs:0.7,73.376607116)
--(axis cs:0.9,56.8411852746667)
--(axis cs:0.9,84.4496497833334)
--(axis cs:0.9,84.4496497833334)
--(axis cs:0.7,103.422156955333)
--(axis cs:0.5,130.280883594667)
--(axis cs:0.3,235.667575465333)
--(axis cs:0.1,447.51816198)
--cycle;

\path [draw=white!36.86274509803922!black, fill=white!36.86274509803922!black, opacity=0.25]
(axis cs:0.1,508.051783887333)
--(axis cs:0.1,413.249182885333)
--(axis cs:0.3,310.879117204)
--(axis cs:0.5,134.738071282)
--(axis cs:0.7,82.928648846)
--(axis cs:0.9,48.184647178)
--(axis cs:0.9,71.3152376573333)
--(axis cs:0.9,71.3152376573333)
--(axis cs:0.7,111.688390011333)
--(axis cs:0.5,173.360885766667)
--(axis cs:0.3,387.975300632)
--(axis cs:0.1,508.051783887333)
--cycle;

\addplot [thick, line width=3, color0]
table {%
0.1 467.874095384594
0.3 467.874095384594
0.5 467.874095384594
0.7 467.874095384594
0.9 467.874095384594
};
\addlegendentry{No Poison}
\addplot [thick, line width=3, color1]
table {%
0.1 460.272363546145
0.3 506.455781989235
0.5 510.804332751697
0.7 461.384089513783
0.9 477.82466152655
};
\addlegendentry{Random Poison}
\addplot [thick, line width=3, white!36.86274509803922!black]
table {%
0.1 460.978952086576
0.3 349.746211527196
0.5 154.330093891341
0.7 97.653389275281
0.9 60.007968112043
};
\addlegendentry{\ourmodtwo}
\addplot [thick, line width=3, color2]
table {%
0.1 401.968613806718
0.3 209.706487191499
0.5 114.624759393805
0.7 88.65743666992
0.9 70.9342527594337
};
\addlegendentry{\ourmod}
\end{axis}

\end{tikzpicture}
	\end{subfigure}
	\hfill
	\begin{subfigure}[t]{0.24\columnwidth}
		\centering
\begin{tikzpicture}[scale=0.42]

\definecolor{color0}{rgb}{0.462745098039216,0.654901960784314,0.490196078431373}
\definecolor{color1}{rgb}{0.4,0.470588235294118,0.67843137254902}
\definecolor{color2}{rgb}{0.776470588235294,0.443137254901961,0.443137254901961}
\definecolor{color3}{rgb}{0.662745098039216,0.756862745098039,0.835294117647059}

\begin{axis}[
legend cell align={left},
legend style={fill opacity=0.6, draw opacity=1, text opacity=1, at={(0.03,0.03)}, anchor=south west, draw=white!80.0!black},
tick align=outside,
tick pos=left,
title={\textbf{\Large{CartPole, VPG, $\victim=\obs_{\boldsymbol{a}}$, $\epsilon$=0.1}}},
x grid style={white!69.01960784313725!black},
xlabel={\(\displaystyle C/K\)},
xmin=0.06, xmax=0.94,
xtick style={color=black},
xtick={0,0.1,0.2,0.3,0.4,0.5,0.6,0.7,0.8,0.9,1},
xticklabels={0.0,0.1,0.2,0.3,0.4,0.5,0.6,0.7,0.8,0.9,1.0},
y grid style={white!69.01960784313725!black},
ylabel={Mean Per-episode Reward},
ymin=2.905, ymax=162.661666666667,
ytick style={color=black}
]
\path [draw=color0, fill=color0, opacity=0.25]
(axis cs:0.1,155.4)
--(axis cs:0.1,137.3)
--(axis cs:0.3,137.3)
--(axis cs:0.5,137.3)
--(axis cs:0.7,137.3)
--(axis cs:0.9,137.3)
--(axis cs:0.9,155.4)
--(axis cs:0.9,155.4)
--(axis cs:0.7,155.4)
--(axis cs:0.5,155.4)
--(axis cs:0.3,155.4)
--(axis cs:0.1,155.4)
--cycle;

\path [draw=color1, fill=color1, opacity=0.25]
(axis cs:0.1,155.1)
--(axis cs:0.1,136.566666666667)
--(axis cs:0.3,107.9)
--(axis cs:0.5,95)
--(axis cs:0.7,79.9266666666667)
--(axis cs:0.9,67.66)
--(axis cs:0.9,83.6)
--(axis cs:0.9,83.6)
--(axis cs:0.7,97.4066666666667)
--(axis cs:0.5,114.066666666667)
--(axis cs:0.3,127.733333333333)
--(axis cs:0.1,155.1)
--cycle;

\path [draw=color2, fill=color2, opacity=0.25]
(axis cs:0.1,20.4)
--(axis cs:0.1,17)
--(axis cs:0.3,10.1666666666667)
--(axis cs:0.5,10.6)
--(axis cs:0.7,10.8666666666667)
--(axis cs:0.9,10.7666666666667)
--(axis cs:0.9,11.8666666666667)
--(axis cs:0.9,11.8666666666667)
--(axis cs:0.7,12.0333333333333)
--(axis cs:0.5,11.8333333333333)
--(axis cs:0.3,11.0333333333333)
--(axis cs:0.1,20.4)
--cycle;

\path [draw=white!36.86274509803922!black, fill=white!36.86274509803922!black, opacity=0.25]
(axis cs:0.1,149.533333333333)
--(axis cs:0.1,129.033333333333)
--(axis cs:0.3,44.8)
--(axis cs:0.5,18.6)
--(axis cs:0.7,11.5333333333333)
--(axis cs:0.9,11.3)
--(axis cs:0.9,12.7)
--(axis cs:0.9,12.7)
--(axis cs:0.7,12.8666666666667)
--(axis cs:0.5,26.1333333333333)
--(axis cs:0.3,58.94)
--(axis cs:0.1,149.533333333333)
--cycle;

\path [draw=color3, fill=color3, opacity=0.25]
(axis cs:0.1,78.34)
--(axis cs:0.1,56.4666666666667)
--(axis cs:0.3,16.1)
--(axis cs:0.5,16.2)
--(axis cs:0.7,16.4)
--(axis cs:0.9,15.3333333333333)
--(axis cs:0.9,17.6)
--(axis cs:0.9,17.6)
--(axis cs:0.7,19)
--(axis cs:0.5,18.7666666666667)
--(axis cs:0.3,18.7)
--(axis cs:0.1,78.34)
--cycle;

\addplot [thick, line width=3, color0]
table {%
0.1 146.16043
0.3 146.16043
0.5 146.16043
0.7 146.16043
0.9 146.16043
};
\addlegendentry{No Poison}
\addplot [thick, line width=3, color1]
table {%
0.1 145.723503333333
0.3 117.845513333333
0.5 104.636306666667
0.7 88.6994033333334
0.9 75.6606333333334
};
\addlegendentry{Random Poison}
\addplot [thick, line width=3, white!36.86274509803922!black]
table {%
0.1 139.212326666667
0.3 52.0969266666667
0.5 22.4863733333333
0.7 12.2261266666667
0.9 12.0297733333333
};
\addlegendentry{\ourmodtwo}
\addplot [thick, line width=3, color2]
table {%
0.1 18.7661
0.3 10.62837
0.5 11.28794
0.7 11.4827766666667
0.9 11.3523233333333
};
\addlegendentry{\ourmod}
\addplot [thick, line width=3, color3]
table {%
0.1 67.6402466666667
0.3 17.42298
0.5 17.50637
0.7 17.73271
0.9 16.5020933333333
};
\addlegendentry{Black-box}
\end{axis}

\end{tikzpicture}
	\end{subfigure}

	\vspace*{-1em}
	\begin{subfigure}[t]{0.24\columnwidth}
		\centering
\begin{tikzpicture}[scale=0.42]

\definecolor{color0}{rgb}{0.462745098039216,0.654901960784314,0.490196078431373}
\definecolor{color1}{rgb}{0.4,0.470588235294118,0.67843137254902}
\definecolor{color2}{rgb}{0.776470588235294,0.443137254901961,0.443137254901961}
\definecolor{color3}{rgb}{0.662745098039216,0.756862745098039,0.835294117647059}

\begin{axis}[
legend cell align={left},
legend style={fill opacity=0.6, draw opacity=1, text opacity=1, at={(0.03,0.03)}, anchor=south west, draw=white!80.0!black},
tick align=outside,
tick pos=left,
title={\textbf{\Large{CartPole, A2C, $\victim=\obs_{\boldsymbol{a}}$, $\epsilon$=0.1}}},
x grid style={white!69.01960784313725!black},
xlabel={\(\displaystyle C/K\)},
xmin=0.06, xmax=0.94,
xtick style={color=black},
xtick={0,0.1,0.2,0.3,0.4,0.5,0.6,0.7,0.8,0.9,1},
xticklabels={0.0,0.1,0.2,0.3,0.4,0.5,0.6,0.7,0.8,0.9,1.0},
y grid style={white!69.01960784313725!black},
ylabel={Mean Per-episode Reward},
ymin=69.3219666666667, ymax=199.064033333333,
ytick style={color=black}
]
\path [draw=color0, fill=color0, opacity=0.25]
(axis cs:0.1,190.426666666667)
--(axis cs:0.1,177.983333333333)
--(axis cs:0.3,177.983333333333)
--(axis cs:0.5,177.983333333333)
--(axis cs:0.7,177.983333333333)
--(axis cs:0.9,177.983333333333)
--(axis cs:0.9,190.426666666667)
--(axis cs:0.9,190.426666666667)
--(axis cs:0.7,190.426666666667)
--(axis cs:0.5,190.426666666667)
--(axis cs:0.3,190.426666666667)
--(axis cs:0.1,190.426666666667)
--cycle;

\path [draw=color1, fill=color1, opacity=0.25]
(axis cs:0.1,193.057333333333)
--(axis cs:0.1,181.763333333333)
--(axis cs:0.3,179.753333333333)
--(axis cs:0.5,180.802666666667)
--(axis cs:0.7,179.576666666667)
--(axis cs:0.9,181.652)
--(axis cs:0.9,193.166666666667)
--(axis cs:0.9,193.166666666667)
--(axis cs:0.7,191.094666666667)
--(axis cs:0.5,192.57)
--(axis cs:0.3,191.234666666667)
--(axis cs:0.1,193.057333333333)
--cycle;

\path [draw=color2, fill=color2, opacity=0.25]
(axis cs:0.1,190.234666666667)
--(axis cs:0.1,178.063333333333)
--(axis cs:0.3,161.636)
--(axis cs:0.5,148.393333333333)
--(axis cs:0.7,103.966)
--(axis cs:0.9,92.1033333333333)
--(axis cs:0.9,117.474666666667)
--(axis cs:0.9,117.474666666667)
--(axis cs:0.7,128.73)
--(axis cs:0.5,167.83)
--(axis cs:0.3,179.081333333333)
--(axis cs:0.1,190.234666666667)
--cycle;

\path [draw=white!36.86274509803922!black, fill=white!36.86274509803922!black, opacity=0.25]
(axis cs:0.1,190.822)
--(axis cs:0.1,179.056666666667)
--(axis cs:0.3,176.85)
--(axis cs:0.5,136.444666666667)
--(axis cs:0.7,135.272666666667)
--(axis cs:0.9,90.5293333333333)
--(axis cs:0.9,112.49)
--(axis cs:0.9,112.49)
--(axis cs:0.7,156.436666666667)
--(axis cs:0.5,156.707333333333)
--(axis cs:0.3,188.466666666667)
--(axis cs:0.1,190.822)
--cycle;

\path [draw=color3, fill=color3, opacity=0.25]
(axis cs:0.1,190.527333333333)
--(axis cs:0.1,179.726)
--(axis cs:0.3,143.616666666667)
--(axis cs:0.5,125.256666666667)
--(axis cs:0.7,122.109333333333)
--(axis cs:0.9,75.2193333333333)
--(axis cs:0.9,97.1266666666666)
--(axis cs:0.9,97.1266666666666)
--(axis cs:0.7,143.440666666667)
--(axis cs:0.5,146.936666666667)
--(axis cs:0.3,162.890666666667)
--(axis cs:0.1,190.527333333333)
--cycle;

\addplot [thick, line width=3, color0]
table {%
0.1 184.092359
0.3 184.092359
0.5 184.092359
0.7 184.092359
0.9 184.092359
};
\addlegendentry{No Poison}
\addplot [thick, line width=3, color1]
table {%
0.1 187.170432
0.3 185.327849333333
0.5 186.463467666667
0.7 185.140766
0.9 187.179683666667
};
\addlegendentry{Random Poison}
\addplot [thick, line width=3, white!36.86274509803922!black]
table {%
0.1 184.762203
0.3 182.492653
0.5 146.559481333333
0.7 145.814413333333
0.9 101.424439333333
};
\addlegendentry{\ourmodtwo}
\addplot [thick, line width=3, color2]
table {%
0.1 183.918803333333
0.3 170.303984333333
0.5 158.004816
0.7 116.251902
0.9 104.937506
};
\addlegendentry{\ourmod}
\addplot [thick, line width=3, color3]
table {%
0.1 184.949772333333
0.3 153.170547666667
0.5 136.110087
0.7 132.705913
0.9 86.2420176666667
};
\addlegendentry{Black-box}
\end{axis}

\end{tikzpicture}
	\end{subfigure}
	\hfill
	\begin{subfigure}[t]{0.24\columnwidth}
		\centering
\begin{tikzpicture}[scale=0.42]

\definecolor{color0}{rgb}{0.462745098039216,0.654901960784314,0.490196078431373}
\definecolor{color1}{rgb}{0.4,0.470588235294118,0.67843137254902}
\definecolor{color2}{rgb}{0.776470588235294,0.443137254901961,0.443137254901961}
\definecolor{color3}{rgb}{0.662745098039216,0.756862745098039,0.835294117647059}

\begin{axis}[
legend cell align={left},
legend style={fill opacity=0.6, draw opacity=1, text opacity=1, at={(0.03,0.03)}, anchor=south west, draw=white!80.0!black},
tick align=outside,
tick pos=left,
title={\textbf{\Large{CartPole, A2C, $\victim=\obs_{\boldsymbol{r}}$, $\epsilon$=0.1}}},
x grid style={white!69.01960784313725!black},
xlabel={\(\displaystyle C/K\)},
xmin=0.06, xmax=0.94,
xtick style={color=black},
xtick={0,0.1,0.2,0.3,0.4,0.5,0.6,0.7,0.8,0.9,1},
xticklabels={0.0,0.1,0.2,0.3,0.4,0.5,0.6,0.7,0.8,0.9,1.0},
y grid style={white!69.01960784313725!black},
ylabel={Mean Per-episode Reward},
ymin=4.82896666666666, ymax=200.945033333333,
ytick style={color=black}
]
\path [draw=color0, fill=color0, opacity=0.25]
(axis cs:0.1,190.336666666667)
--(axis cs:0.1,177.923333333333)
--(axis cs:0.3,177.923333333333)
--(axis cs:0.5,177.923333333333)
--(axis cs:0.7,177.923333333333)
--(axis cs:0.9,177.923333333333)
--(axis cs:0.9,190.336666666667)
--(axis cs:0.9,190.336666666667)
--(axis cs:0.7,190.336666666667)
--(axis cs:0.5,190.336666666667)
--(axis cs:0.3,190.336666666667)
--(axis cs:0.1,190.336666666667)
--cycle;

\path [draw=color1, fill=color1, opacity=0.25]
(axis cs:0.1,191.467333333333)
--(axis cs:0.1,180.21)
--(axis cs:0.3,179.846)
--(axis cs:0.5,175.916)
--(axis cs:0.7,168.882)
--(axis cs:0.9,171.23)
--(axis cs:0.9,184.033333333333)
--(axis cs:0.9,184.033333333333)
--(axis cs:0.7,181.350666666667)
--(axis cs:0.5,188.286666666667)
--(axis cs:0.3,192.030666666667)
--(axis cs:0.1,191.467333333333)
--cycle;

\path [draw=color2, fill=color2, opacity=0.25]
(axis cs:0.1,170.564)
--(axis cs:0.1,154.15)
--(axis cs:0.3,61.9893333333333)
--(axis cs:0.5,29.33)
--(axis cs:0.7,24.8866666666667)
--(axis cs:0.9,16.8866666666667)
--(axis cs:0.9,24.63)
--(axis cs:0.9,24.63)
--(axis cs:0.7,36.8673333333333)
--(axis cs:0.5,42.0446666666667)
--(axis cs:0.3,81.47)
--(axis cs:0.1,170.564)
--cycle;

\path [draw=white!36.86274509803922!black, fill=white!36.86274509803922!black, opacity=0.25]
(axis cs:0.1,191.594)
--(axis cs:0.1,179.69)
--(axis cs:0.3,151.56)
--(axis cs:0.5,89.49)
--(axis cs:0.7,35.9493333333333)
--(axis cs:0.9,13.7433333333333)
--(axis cs:0.9,20.2366666666667)
--(axis cs:0.9,20.2366666666667)
--(axis cs:0.7,49.55)
--(axis cs:0.5,106.914)
--(axis cs:0.3,166.114)
--(axis cs:0.1,191.594)
--cycle;

\path [draw=color3, fill=color3, opacity=0.25]
(axis cs:0.1,188.584)
--(axis cs:0.1,176.023333333333)
--(axis cs:0.3,155.179333333333)
--(axis cs:0.5,117.446)
--(axis cs:0.7,111.003333333333)
--(axis cs:0.9,69.2866666666667)
--(axis cs:0.9,89.17)
--(axis cs:0.9,89.17)
--(axis cs:0.7,131.618)
--(axis cs:0.5,137.258)
--(axis cs:0.3,171.508666666667)
--(axis cs:0.1,188.584)
--cycle;

\addplot [thick, line width=3, color0]
table {%
0.1 183.992803333333
0.3 183.992803333333
0.5 183.992803333333
0.7 183.992803333333
0.9 183.992803333333
};
\addlegendentry{No Poison}
\addplot [thick, line width=3, color1]
table {%
0.1 185.674474666667
0.3 185.750858333333
0.5 181.953903
0.7 174.971487333333
0.9 177.490878
};
\addlegendentry{Random Poison}
\addplot [thick, line width=3, white!36.86274509803922!black]
table {%
0.1 185.423056
0.3 158.682107666667
0.5 98.1748193333333
0.7 42.893538
0.9 17.0965993333333
};
\addlegendentry{\ourmodtwo}
\addplot [thick, line width=3, color2]
table {%
0.1 162.313055666667
0.3 71.744703
0.5 35.795506
0.7 31.049616
0.9 20.811885
};
\addlegendentry{\ourmod}
\addplot [thick, line width=3, color3]
table {%
0.1 182.14273
0.3 163.331520333333
0.5 127.263257
0.7 121.306422
0.9 79.4644853333333
};
\addlegendentry{Black-box}
\end{axis}

\end{tikzpicture}
	\end{subfigure}
	\hfill
	\begin{subfigure}[t]{0.24\columnwidth}
		\centering
		\input{\fighome/target/targ_CartPole-v0_vpg.tex}
	\end{subfigure}
	\hfill
	\begin{subfigure}[t]{0.24\columnwidth}
		\centering
		\input{\fighome/target/targ_LunarLander-v2_vpg.tex}
	\end{subfigure}
	\caption{\small{Additional Experimental Results}}
	\label{fig:additional}
\end{figure}

\newpage
\subsection{Extension to Off-policy Learners}
\label{app:off_policy}

Although we focus on on-policy policy gradient learners in this paper, our poisoning method is also applicable to off-policy learners which update their policies using sampled mini-batches from all historical observation (trajectories).
If the adversary can manipulate the mini-batch the learner samples at every step, our proposed poisoning process works as usual. We implement this idea and test it for one of the state-of-the-art off-policy learning method SAC~\citep{haarnoja2018soft}, and the results are shown in Figure~\ref{fig:sac}, where \ourmod significantly reduces the reward gained by the learner.
In the other case, if the adversary doesn’t see which mini-batch the learner samples but has access to the buffer, he can still alter or insert some samples to influence learning.

\begin{figure}[!htbp]
  \centering
   \input{\fighome/off_policy/sac_sample.tex}
  \caption{Poisoning off-policy algorithm SAC with \ourmod. $\victim=\obsr, C/K=1, \epsilon=0.6$.}
  \label{fig:sac}
\end{figure}

\end{document}